\documentclass{article}

\usepackage[preprint]{neurips_2026}

\usepackage[utf8]{inputenc}
\usepackage[T1]{fontenc}
\usepackage{verbatim}
\usepackage{hyperref}
\usepackage{url}
\usepackage{nicefrac}
\usepackage{bbm}
\usepackage{algorithm}
\usepackage{algpseudocode}
\usepackage{yck}
\usepackage{titletoc}

\graphicspath{{fig/}}

\newcommand{\PP}{\mathbb{P}}

\newcommand{\DT}{\mathcal{D}_{\mathrm{Trust}}}
\newcommand{\TT}{\mathcal{T}_{\mathrm{trans}}}
\newcommand{\lenc}{\mathcal{L}_{\mathrm{enc}}}

\usepackage{pifont}
\newcommand{\cmark}{\ding{51}}
\newcommand{\xmark}{\ding{55}}

\usepackage{xcolor}
\usepackage{xspace}

\definecolor{methodorange}{RGB}{180,85,20}

\definecolor{highlightpurplebg}{RGB}{238,230,250}
\definecolor{highlightgreenbg}{RGB}{226,245,232}

\newcommand{\PUAuditPlus}{%
  \textcolor{methodorange}{\textsc{AURA}}\xspace
}

\newcommand{\hlgreen}[1]{%
  \begingroup
  \setlength{\fboxsep}{1.2pt}%
  \colorbox{highlightgreenbg}{#1}%
  \endgroup
}

\newcommand{\gcmark}{\textcolor{green!60!black}{\cmark}}
\newcommand{\rxmark}{\textcolor{red!70!black}{\xmark}}

\newtheorem{theorem}{Theorem}
\newtheorem{lemma}{Lemma}

\newtheorem{remark}{Remark}
\newtheorem{assumption}{Assumption}

\title{AURA: Adaptive Uncertainty-aware Refinement for LLM-as-a-Judge Auditing}

\author{%
  Zilong Zhang\\
  Department of Mathematics and Statistics \\
  Georgia State University\\
  \And
  Yi-Ting Hung \\
  Department of Mathematics and Statistics \\
  Georgia State University\\
  \And
  Weiyi He \\
  Department of Probability and Statistics \\
  Department of Computer Science and Engineering\\
  Michigan State University\\
  \AND
  Junxi Zhang \\
  Department of Mathematics and Statistics \\
  Concordia University
  \And
  Lei Ding \\
  Department of Statistics \\
  University of Manitoba \\
  \texttt{lei.ding@umanitoba.ca}
  \And
  Chi-Kuang Yeh \\
  Department of Mathematics and Statistics \\
  Georgia State University \\
  \texttt{cyeh@gsu.edu}
}

\begin{document}

\maketitle

\begin{abstract}
Large language models (LLMs) are increasingly used as judges for open-ended generation, as large-scale human evaluation is often expensive and difficult to scale, yet their preferences remain imperfect proxies for human judgment. Existing auditing pipelines often assume that a reliable subset of examples or clean supervision signals are available beforehand, for example from human annotation, heuristic filtering, or the outputs of strong judges. In LLM evaluation, this assumption is fragile: the initial split may inherit judge bias, while human verification is typically too scarce to define stable groups at scale. We propose \PUAuditPlus, an adaptive uncertainty--aware refinement framework for auditing pairwise LLM--as--a--judge decisions under selected human verification. \PUAuditPlus iteratively learns a human-consistency signal, propagates reliable evidence, and prioritizes uncertain comparisons for human review. The key idea is to treat trust in a judge as a latent quantity that is progressively refined as evidence accumulates. We provide a compact formulation, a stable refinement procedure, and a comprehensive evaluation on both synthetic and real pairwise LLM-answer data.
\end{abstract}

\section{Introduction}
Evaluating language-model answers is no longer only a question of whether a benchmark has the right prompts. It is increasingly about whether the evaluator can reliably distinguish between multiple plausible answers that differ in factual accuracy, completeness, reasoning quality, safety, or usefulness~\citep{liu2023geval,kim2023prometheus}. Human evaluation remains the most direct measure of these differences, but it is expensive, slow, and difficult to scale as models, prompts, and evaluation criteria evolve. As a result, this has made \emph{LLM-as-a-judge} evaluation a practical component of modern model development, where a strong model grades or compares outputs from other models \citep{zheng2023judging,dubois2023alpacafarm}.

The practical value of LLM judges comes with a statistical problem. A judge's preference is not the same object as a human preference. LLM judges can be sensitive to response order, verbosity, formatting, self-preference, and other superficial cues \citep{zheng2023judging,wang2024fair,dubois2024length, zeng2023llmbar,thakur2024judging}. These issues are especially consequential in pairwise evaluation, where a single incorrect comparison can reverse the winner and loser of an example. The central challenge is therefore not merely to imitate judge output, but to audit when a judge's decision is likely to agree with human judgment and when it should be corrected or verified.

A natural starting point is to treat judge auditing as a weakly supervised learning problem. In this view, a small set of human verified comparisons provides trustworthy evidence, while the remaining examples form a large uncertainty pool. Prior positive and unlabeled learning methods offer useful tools for such settings~\citep{elkan2008learning,duplessis2014analysis,kiryo2017nonnegative,bekker2020survey}. In a PU view, examples for which the LLM judge agrees with human preference form a positive group, while the remaining unverified examples form an unlabelled mixture. Standard PU methods, however, usually assume that the positive examples are already known and that the unlabelled pool is fixed. This assumption is often too strong for LLM evaluation. The initial groups may be derived from the same noisy judge under audit, from weak heuristics, or from a small and biased set of human labels. If these groups are treated as fixed, the auditing procedure may simply preserve and amplify the original bias.

This paper asks a different question: can the method learn which judge decisions are human--consistent while simultaneously identifying where uncertainty remains and which examples warrant additional verification? To address this problem, we propose \PUAuditPlus, an adaptive uncertainty--aware refinement framework for pairwise LLM--answer auditing under limited human supervision. For a small subset, human labels indicate whether LLM preferences are human-consistent, while the remaining examples form an uncertain pool. \PUAuditPlus maintains a soft responsibility for each example, representing the probability that the LLM preference is correct, and iteratively refines these estimates by training a human-consistency scorer, incorporating local and anchor-based evidence, and propagating reliable signals through a sparse transport-based refinement. The key idea is to treat reliable and uncertain groups as latent quantities that are progressively updated rather than fixed in advance, distinguishing our approach from standard PU methods and judge-only calibration. The transport component propagates reliable evidence from anchor examples to nearby uncertain comparisons in the learned geometry, while the verification step focuses human effort on informative and uncertain cases~\citep{zhu2002labelprop,zhou2003localglobal,chapel2020partial}. Together, these components form a unified refinement loop that improves the estimation of judge--human agreement under limited supervision.

Our contributions are as follows
\begin{itemize}
    \item We formulate LLM--as--a--judge auditing as adaptive human--consistency refinement under limited human verification, where trustworthy and uncertain groups are progressively updated, and informative comparisons are selected for human verification.
    \item We organize the method around a compact set of interpretable state variables, including soft human--consistency responsibility, anchor confidence, and carried evidence inflow.
    \item We provide a stability analysis for the alternating update procedure and an evaluation protocol covering synthetic recovery, real pairwise LLM evaluation, robustness to noisy initialization, and annotation efficiency.
\end{itemize}

\section{Background and related work}
\label{sec:related}

LLM-as-a-judge evaluation has become a common approach for assessing open-ended generations, especially when task-specific reference answers are unavailable. MT-Bench and Chatbot Arena popularized pairwise LLM and human preference evaluation as a scalable framework for comparing chat assistants \citep{zheng2023judging,chiang2024chatbot}. Subsequent work has proposed stronger evaluator models, prompting schemes, and benchmark protocols \citep{liu2023geval,wang2023pandalm,kim2023prometheus,zhu2023judgelm,dubois2024length}. At the same time, LLM judges can exhibit position bias, verbosity bias, prompt sensitivity, score instability, and weaker alignment with humans on difficult examples \citep{wang2024fair,zeng2023llmbar,thakur2024judging,tan2024judgebench,park2024offsetbias}. Rather than introducing another standalone judge, \PUAuditPlus asks when the output of an existing judge should be trusted.

Our work is also related to preference modeling and reward-model evaluation. Pairwise comparison has a long statistical history, including the Bradley--Terry model \citep{bradley1952rank}, and modern alignment pipelines use preference data to train reward models or optimize policies through RLHF and DPO \citep{christiano2017deep,ouyang2022training,bai2022training,rafailov2023dpo}. RewardBench evaluates reward models using prompt--chosen--rejected triples across chat, reasoning, and safety settings \citep{lambert2025rewardbench}. These works primarily study how to train or benchmark models using preference data. \PUAuditPlus instead studies a complementary auditing problem: given an LLM judge's pairwise preference, estimate whether it agrees with human judgment and decide whether to trust, correct, or verify it.

Methodologically, \PUAuditPlus connects to PU learning, weak supervision, graph-based semi-supervised learning, transport, and active verification. Standard PU learning assumes labeled positives and an unlabeled mixture \citep{elkan2008learning,bekker2020survey}, whereas our target label is LLM--human agreement rather than answer quality, and human verification can provide both positive and negative anchors. We therefore treat the problem as adaptive PU learning, where the positive and unlabeled groups are refined as evidence accumulates. The propagation step is related to graph-based label smoothing and transport methods \citep{zhu2002labelprop,zhou2003localglobal,cuturi2013sinkhorn,peyre2019computational,chapel2020partial}, but \PUAuditPlus uses a conservative transport-based operator for sparse, budgeted evidence propagation rather than full distribution alignment. The verification policy is related to active learning, selective prediction, and calibration \citep{settles2009active,geifman2017selective,guo2017calibration}, since it selects examples for human review when the current refinement state is uncertain or influential.

\begin{figure}[t]
\centering
\includegraphics[width=0.7\linewidth,angle=-90]{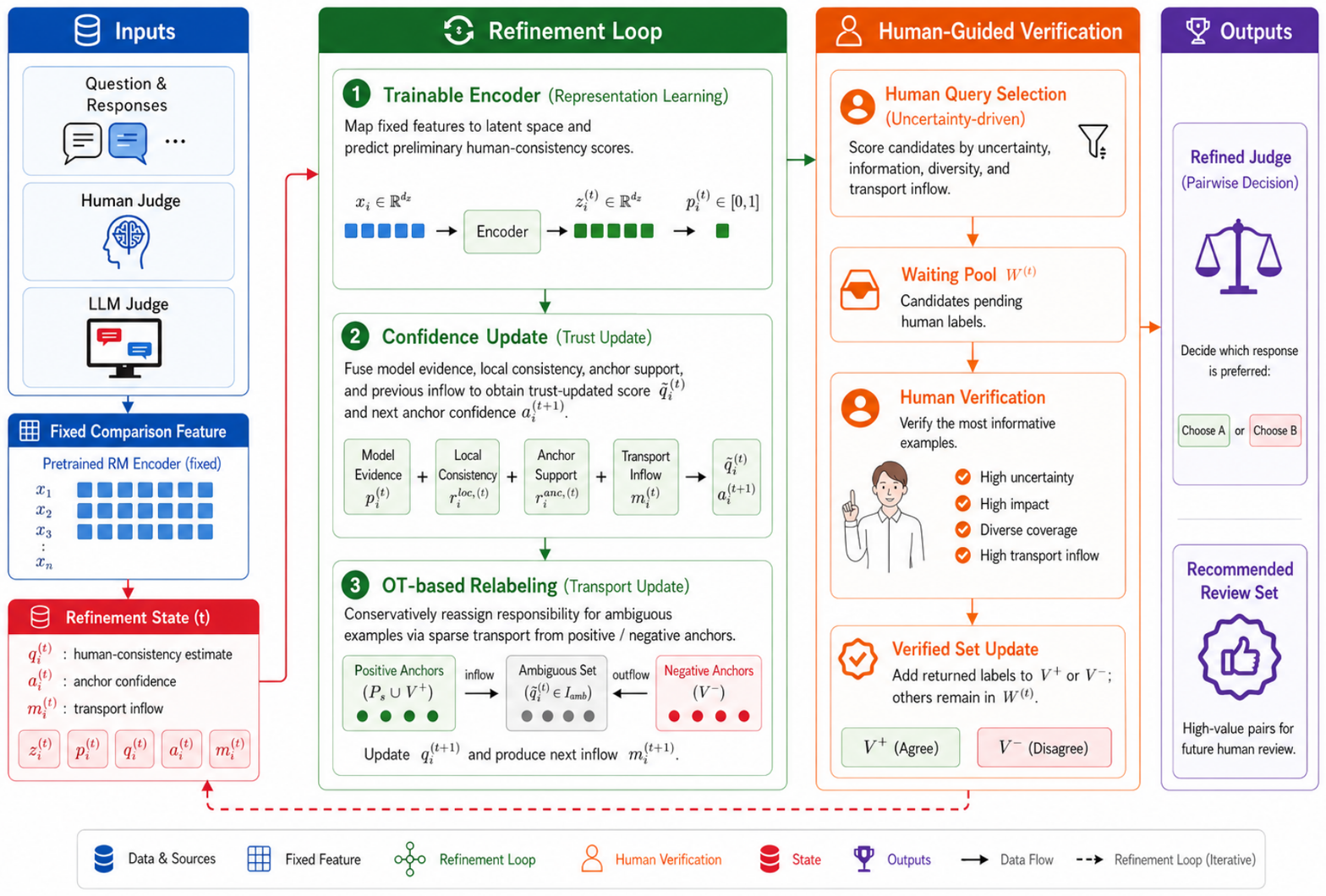}
\caption{Overview of \PUAuditPlus framework. At iteration \(t\), the current state \((q^{(t)},a^{(t)},m^{(t)})\) is used to train the encoder, which produces the latent representations \(z^{(t)}\) and preliminary scores \(p^{(t)}\). The trust-update module uses them to compute the consistency estimate \(\tilde q^{(t)}\) and the next anchor confidence \(a^{(t+1)}\). The transport step then refines \(\tilde q^{(t)}\) into the next-round consistency estimate \(q^{(t+1)}\) and produces the next inflow state \(m^{(t+1)}\). Finally, the verification step selects examples for human review; returned labels are added to \(\mathcal{V}^{+}\) or \(\mathcal{V}^{-}\), while pending examples are kept in \(\mathcal{W}^{(t+1)}\).} 
\label{fig:workflow}
\end{figure}

\section{Problem formulation and notation}
\label{sec:problem_formulation}

Let $\D=\{(Q_i,R_i^1,R_i^2,J_i^{\mathrm{H}},J_i^{\mathrm{L}})\}_{i=1}^n$ be a collection of samples, where \(Q_i\) is the question or prompt, \(R_i^1\) and \(R_i^2\) are two candidate responses, \(J_i^{\mathrm{L}}\) is the initial LLM judge, and \(J_i^{\mathrm{H}}\) is the human judge. An example of the data structure is shown in Figure~\ref{fig:data_example} in Appendix~\ref{app:notation}. The LLM judge assigns a strict preference \(J_i^{\mathrm{L}}\in\{1,2\}\), where \(J_i^{\mathrm{L}}=k\) indicates that \(R_i^k\) is preferred. Human labels, when available, are denoted by \(J_i^{\mathrm{H}}\in\{1,2\}\). We exclude tie cases. 
For each comparison, the LLM judge induces an initial winner--loser assignment:
\[
R_i^{w}=R_i^{J_i^{\mathrm{L}}},
\qquad
R_i^{\ell}=R_i^{3-J_i^{\mathrm{L}}}.
\]
Here, \(R_i^{w}\) and \(R_i^{\ell}\) are interpreted as the LLM-induced preliminary winner and loser, rather than ground-truth human preferences; they will be refined using the available human labels.
For each example, we define the human-consistency label
\(
y_i
=
\mathbf{1}\{J_i^{\mathrm{L}}=J_i^{\mathrm{H}}\},
\)
which indicates whether the LLM preference agrees with the human preference. This label is observed only for human-verified examples and remains latent for unverified examples. Thus, \(y_i=1\) indicates that the LLM preference agrees with the human preference, while \(y_i=0\) indicates that the LLM preference should be corrected. The corresponding agreement-positive and agreement-negative sets are
\[
\mathcal{P}^{\star}=\{i:y_i=1\},
\qquad
\mathcal{N}^{\star}=\{i:y_i=0\}.
\]
Our goal is to estimate
\(\PP(y_i=1 \mid Q_i,R_i^1,R_i^2,J_i^{\mathrm L})\)
for unverified examples, and use this estimate to refine the LLM-induced preference signal under a limited human-review budget.

To describe this refinement process, we organize examples according to their verification status. Let \(\mathcal{P}_s\) denote a trusted positive seed set available before refinement. In our experiments, \(\mathcal{P}_s\) is initialized from a small human-verified positive seed set and used only as an initial anchor set, rather than as the fixed clean positive group assumed by standard PU methods. Let \(\mathcal{U}\) denote the refinement pool, where the algorithm performs progressive label refinement. \(\mathcal{V}^{+}\) and \(\mathcal{V}^{-}\) denote verified examples that are human-consistent and human-inconsistent, respectively. Unlike the fixed seed anchor set \(\mathcal{P}_s\), \(\mathcal{V}^{+}\) and \(\mathcal{V}^{-}\) record verified labels incorporated during refinement and may grow over iterations. These verified sets provide hard constraints during refinement. We also maintain a waiting pool \(\mathcal{W}^{(t)}\), which contains examples that have been selected for human review but whose human labels have not yet been incorporated. Throughout this paper, $\|\cdot\|$ denotes the Euclidean norm unless otherwise specified. Detailed implementation constants are listed in Appendix~\ref{app:notation}.

\section{Methodology}
\label{sec:method}

We now present \PUAuditPlus, a progressive refinement framework for converting noisy LLM-judge preferences into preference signals that align with human judgments. Starting from preliminary preferences produced by an LLM judge and a small set of human-verified examples, \PUAuditPlus iteratively learns a task-specific representation, updates human-consistency estimates, propagates trust through a conservative transport step, and selectively queries additional human labels. Figure~\ref{fig:workflow} illustrates the overall workflow, and Algorithm~\ref{alg:progressive_pu_refinement} in Appendix summarizes the full procedure.

\subsection{Progressive representation and trust refinement}
\label{sec:progressive_refinement}

For each pairwise comparison, we first construct a fixed comparison feature
\[
x_i
=
E_{\mathrm{RM}}(Q_i,R_i^w)
-
E_{\mathrm{RM}}(Q_i,R_i^\ell)
\in\mathbb{R}^{d_x},
\]
where \(E_{\mathrm{RM}}\) is a pretrained reward-model encoder used only for feature extraction and kept frozen throughout refinement. At iteration \(t\), a trainable encoder maps \(x_i\) into a task-specific latent representation \(z_i^{(t)}\in\mathbb{R}^{d_z}\), and a prediction head produces a preliminary human-consistency estimate $p_i^{(t)}=\sigma(\ell_i^{(t)})$.
This score is not treated as the final label; instead, it provides model evidence for the subsequent trust and transport updates.

The encoder is trained from the current refinement state \((q^{(t)},a^{(t)},m^{(t)})\). The goal is to fit the available human-verified labels, use reliable unverified examples as additional soft supervision, and keep the learned latent space stable across refinement rounds. We use the objective
\[
\mathcal{L}_{\mathrm{enc}}^{(t)}
=
\mathcal{L}_{\mathrm{ver}}^{(t)}
+
\lambda_{\mathrm{soft}}\mathcal{L}_{\mathrm{soft}}^{(t)}
+
\lambda_{\mathrm{geo}}\mathcal{R}_{\mathrm{geo}}^{(t)}
+
\lambda_{\mathrm{anchor}}\mathcal{R}_{\mathrm{anchor}}^{(t)}.
\]
Here \(\mathcal{L}_{\mathrm{ver}}^{(t)}\) is the supervised loss on human-verified examples, and \(\mathcal{L}_{\mathrm{soft}}^{(t)}\) trains on high-confidence unverified examples using their current estimation \(q_i^{(t)}\) as soft labels. The two regularization terms stabilize the representation: \(\mathcal{R}_{\mathrm{geo}}^{(t)}\) enforces local smoothness, while \(\mathcal{R}_{\mathrm{anchor}}^{(t)}\) discourages high-confidence examples from drifting across rounds. Full network parameterization and loss definitions are provided in Appendix~\ref{app:encoder_details}.

\subsection{Trust update, conservative transport, and verification}
\label{sec:trust_transport_verification}

After encoder training, \PUAuditPlus refines the preliminary score \(p_i^{(t)}\) using model, local, anchor, and transport evidence. We combine these signals through a trust logit
\[
u_i^{(t)}
=
\lambda_p
\log\frac{p_i^{(t)}}{1-p_i^{(t)}}
+
\lambda_{\mathrm{loc}}
\left(2r_i^{\mathrm{loc},(t)}-1\right)
+
\lambda_{\mathrm{anc}}
\left(2r_i^{\mathrm{anc},(t)}-1\right)
+
\lambda_m
\left(2m_i^{(t)}-1\right)
+
\beta_0,
\]
where \(r_i^{\mathrm{loc},(t)}\) measures neighborhood-level agreement in the learned latent space, \(r_i^{\mathrm{anc},(t)}\) measures support from trusted anchors, and \(m_i^{(t)}\) carries positive inflow evidence from the previous round. The trust logit is transformed into a trust-updated estimate \(\tilde q_i^{(t)}\). The same evidence is also used to update the next anchor confidence \(a_i^{(t+1)}\). Detailed trust-update rules are given in Appendix~\ref{app:trust_update_details}.

We then apply a conservative transport update to uncertain examples. Let $\mathcal{M}^{(t)}=\{j\in\mathcal{U}: \tilde q_j^{(t)}\in I_{\mathrm{amb}}\}$ be the uncertain set. For each \(j\in\mathcal{M}^{(t)}\), the transport module computes positive and negative inflow scores \(m_j^{+,(t)}\) and \(m_j^{-,(t)}\) in the learned latent space. Rather than solving a full optimal transport problem, we use a sparse budgeted propagation rule:
\[
q_{j}^{+,(t)}
=
\tilde q_j^{(t)}
+
\eta_+m_j^{+,(t)}
\left(1-\tilde q_j^{(t)}\right),
\qquad
q_j^{(t+1)}
=
q_{j}^{+,(t)}
-
\eta_-m_j^{-,(t)}q_{j}^{+,(t)}.
\]
Examples outside \(\mathcal{M}^{(t)}\) keep their trust-updated estimates, \(q_j^{(t+1)}=\tilde q_j^{(t)}\). For the next round, we store \(m_j^{(t+1)}=m_j^{+,(t)}\), while the negative inflow is used only in the current update. Details of the inflow computation are provided in Appendix~\ref{app:transport_details}.

Finally, \PUAuditPlus selects unresolved examples for human verification. Returned labels are added to \(\mathcal{V}^{+}\) or \(\mathcal{V}^{-}\), while pending examples are kept in \(\mathcal{W}^{(t)}\) to avoid repeated queries. The loop stops when the refined estimates, hard assignments, transport inflow, and verified set size become stable. Verification and stopping details are provided in Appendix~\ref{app:verification_stopping_details}. 

Overall, each refinement round follows
\[
(q^{(t)},a^{(t)},m^{(t)})
\rightarrow
(z^{(t)},p^{(t)})
\rightarrow
(\tilde q^{(t)},a^{(t+1)})
\rightarrow
(q^{(t+1)},a^{(t+1)},m^{(t+1)}),
\]
with selective human verification updating \(\mathcal{V}^{+}\), \(\mathcal{V}^{-}\), and the waiting set \(\mathcal{W}^{(t)}\).

\section{Theoretical results}
In this section, we develop a theoretical analysis of \PUAuditPlus that proceeds from algorithmic foundations to statistical guarantees. We begin by establishing that the joint optimization converges, and then sharpen this guarantee to a linear convergence rate via a contraction argument.

\subsection{Convergence guarantee for outer loop iteration}\label{sec:convergence-stability}

The outer loop in Section~\ref{sec:method} alternates three algorithmic blocks: encoder training, trust update, and transport reassignment, with selective verification acting as a bounded perturbation that modifies the verified sets and anchor pool. We construct a surrogate Lyapunov function that aggregates progress across these three blocks and show that it decreases sufficiently along the iterates, up to summable error induced by verification. We begin by stating the regularity assumptions, which are standard in analyses of nonconvex optimization and alternating minimization methods~\citep{bertsekas1999nonlinear}.

\begin{assumption}[Encoder loss regularity]\label{ass:loss-reg}
The encoder loss 
$\mathcal{L}^{(t)}_{\mathrm{enc}}(\Theta)
=\mathcal{L}^{(t)}_{\mathrm{ver}}
+\lambda_{\mathrm{soft}}\mathcal{L}^{(t)}_{\mathrm{soft}}
+\lambda_{\mathrm{geo}}\mathcal{L}^{(t)}_{\mathrm{geo}}
+\lambda_{\mathrm{anchor}}\mathcal{L}^{(t)}_{\mathrm{anchor}}$
satisfies: 
(i) it is continuously differentiable in $\Theta$ with $L_\Theta$-Lipschitz gradient; 
(ii) it is nonnegative, i.e., $\mathcal{L}^{(t)}_{\mathrm{enc}}(\Theta)\geq0$; 
(iii) the encoder update produces an approximate minimizer with bounded suboptimality, 
$\mathcal{L}^{(t)}_{\mathrm{enc}}(\Theta^{(t)}) \le \min_\Theta \mathcal{L}^{(t)}_{\mathrm{enc}}(\Theta) + \delta_t$, 
where $\delta_t \ge 0$ and $\sum_t \delta_t < \infty$; 
(iv) the parameter space $\Theta$ is compact.
\end{assumption}

\begin{assumption}[Bounded trust-logit components]
\label{ass:trust-bdd}
The trust-logit weights $\lambda_p, \lambda_\mathrm{loc}, \lambda_\mathrm{anc}, \lambda_m \geq 0$
and the offset $\beta_0 \in \R$ are finite. The classifier output satisfies $p_i^{(t)} \in [\epsilon_p, 1-\epsilon_p]$ for some
$\epsilon_p \in (0, 1/2)$, enforced by the algorithm's clipping operations, and the local and anchor evidence scores satisfy $r_i^{\mathrm{loc},(t)}, r_i^{\mathrm{anc},(t)} \in [0,1]$.
\end{assumption}

\begin{assumption}[Summable transport budget]
\label{ass:budget}
The transport budget $B_t$ used in the transport step in Appendix~\ref{app:transport_details} satisfies
$\sum_{t=0}^\infty B_t^2 < \infty$.
\end{assumption}

\begin{assumption}\label{ass:trust and transport control}
Define
$H^{(t)}:=\rho_q\|q^{(t)}-q^{(t-1)}\|+\rho_m\|m^{(t)}-m^{(t-1)}\|$ for some constants $\rho_q,~\rho_m>0$. Assume there exist constants $\eta>0$, $C_B\geq 0$ and a summable sequence $\{\kappa_t\}_{t\geq 0}$, such that for each iteration $t$, $(1+\eta)H^{(t+1)}\leq H^{(t)}+C_BB_t^2+\kappa_t$.
\end{assumption}

The trust-update operator $\DT$ defined in Appendix~\ref{app:trust_update_details}
maps the encoder outputs $(p^{(t)}, z^{(t)}$ and previous inflow $ m^{(t)})$ to the trust-updated responsibilities $\bar{q}^{(t)}$. We now show that, on indices not subject
to hard constraints, the operator is Lipschitz continuous in its inputs, which will be used to control the cross terms in the proof of Lemma~\ref{lemma:suff-decrease}.


\begin{lemma}[Lipschitz continuity of trust logit]\label{lemma:trust:logit}
     Under restriction of $p^{(t)}_i\in[\epsilon_p, 1-\epsilon_p]$ for some $\epsilon_p\in(0, 1/2)$ to avoid divergence of $\log (p/(1-p))$ as $p\to0$ or $1$, the true logit $u_i^{(t)}$ defined in Appendix~\ref{app:trust_update_details} satisfies 
    \begin{equation}
        |u_i^{(t+1)}-u_i^{(t)}|\leq \frac{\lambda_p}{\epsilon_p(1-\epsilon_p)}|\Delta p^{(t+1)}|+2\lambda_\text{loc}|\Delta r^{\text{loc},(t+1)}|+2\lambda_\text{anc}|\Delta r^{\text{anc},(t+1)}|+2\lambda_m|\Delta m^{(t+1)}|,    \end{equation}
    where $\Delta p^{(t+1)}=p_i^{(t+1)}-p_i^{(t)},~\Delta r^{\text{loc},(t+1)}=r_i^{\text{loc},(t+1)}-r_i^{\text{loc},(t)},~\Delta r^{\text{anc},(t+1)}=r_i^{\text{anc},(t+1)}-r_i^{\text{anc},(t)},$ and $\Delta m^{(t+1)}=m_i^{(t+1)}-m_i^{(t)}$.
\end{lemma}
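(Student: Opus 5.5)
The plan is to subtract the two trust logits term by term and bound each difference separately. By the definition of $u_i^{(t)}$ in Section~\ref{sec:trust_transport_verification}, the offset $\beta_0$ is the same at both iterations, so it cancels in $u_i^{(t+1)}-u_i^{(t)}$. What remains is
\[
\lambda_p\bigl(\operatorname{logit}(p_i^{(t+1)})-\operatorname{logit}(p_i^{(t)})\bigr)
+2\lambda_{\mathrm{loc}}\Delta r^{\mathrm{loc},(t+1)}
+2\lambda_{\mathrm{anc}}\Delta r^{\mathrm{anc},(t+1)}
+2\lambda_m\Delta m^{(t+1)},
\]
where $\operatorname{logit}(p)=\log\frac{p}{1-p}$. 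The affine terms of the form $2x-1$ contribute exactly twice the increment of their argument, because the $-1$'s cancel. Applying the triangle inequality, and using $\lambda_p,\lambda_{\mathrm{loc}},\lambda_{\mathrm{anc}},\lambda_m\ge 0$ from Assumption~\ref{ass:trust-bdd} to pull the weights out of the absolute values, gives the last three terms of the claimed bound directly.

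The only nontrivial step is the Lipschitz bound for the logit on $[\epsilon_p,1-\epsilon_p]$. The derivative is $\frac{d}{dp}\operatorname{logit}(p)=\frac{1}{p(1-p)}$. Both $p_i^{(t)}$ and $p_i^{(t+1)}$ lie in the interval by the clipping in Assumption~\ref{ass:trust-bdd}, and the interval is convex, so the mean value theorem gives a $\xi$ between them, also in the interval, with
\[
|\operatorname{logit}(p_i^{(t+1)})-\operatorname{logit}(p_i^{(t)})|=\frac{|\Delta p^{(t+1)}|}{\xi(1-\xi)}.
\]

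It remains to bound $1/(\xi(1-\xi))$ uniformly. The function $g(\xi)=\xi(1-\xi)$ is concave and symmetric about $1/2$. On $[\epsilon_p,1-\epsilon_p]$ its minimum is therefore attained at the endpoints, where $g=\epsilon_p(1-\epsilon_p)>0$. This gives $1/(\xi(1-\xi))\le 1/(\epsilon_p(1-\epsilon_p))$, which is the coefficient appearing in the statement. This endpoint argument, together with making sure $\xi$ stays in the clipped region, is the main point to get right. The clipping hypothesis is exactly what prevents the derivative from blowing up near $0$ or $1$.

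Combining the logit bound with the three linear terms yields the stated inequality. The bound holds pointwise in $i$ and uniformly in $t$, because the same clipping interval is used at every iteration.
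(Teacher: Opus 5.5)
Your proposal is correct and follows the same route as the paper. Both arguments apply the mean value theorem to the logit, using $\sup_{p\in[\epsilon_p,1-\epsilon_p]} 1/(p(1-p)) = 1/(\epsilon_p(1-\epsilon_p))$, and then use the triangle inequality on the affine terms, each of which contributes twice its increment. You also make explicit two points the paper's write-up leaves implicit: that $\beta_0$ cancels, and that $\lambda_p,\lambda_{\mathrm{loc}},\lambda_{\mathrm{anc}},\lambda_m\ge 0$ is what lets the weights come out of the absolute values.
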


\begin{theorem}[Lipschitz continuity of $\mathcal{D}_{\mathrm{trust}}$ on unconstrained indices]\label{thm:trust-continuity} 
Let \(\mathcal{U}:=\{1,\ldots,n\}\setminus(\mathcal{P}_s\cup\mathcal{V}^{+}\cup\mathcal{V}^{-})\) denotes indices not subject to hard constraints. Under Assumption~\ref{ass:trust-bdd}, the trust update operator $\DT$ restricted to $\mathcal{U}$, is Lipschitz in its inputs $(p, r^{\text{loc}}, b_{\text{anc}}, m)$
with constant
    \[
    L_{\DT}
    :=
    \frac{1}{4}
    \max\left\{
    \frac{\lambda_p}{\epsilon_p(1-\epsilon_p)}, \;
    2\lambda_{\text{loc}}, \;
    2\lambda_{\text{anc}}, \;
    2\lambda_m
    \right\}.
    \]
    The Lipschitz constant in the inflow $m$ is
    $L_{\DT, m} = \lambda_m / 2$ when $(p, r^{\text{loc}}, r^{\text{anc}})$ are held fixed.
\end{theorem}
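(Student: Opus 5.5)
The trust update on unconstrained indices is a composition $\tilde q_i=\sigma(u_i)$, where $u_i$ is the trust logit of Section~\ref{sec:trust_transport_verification} and $\sigma$ is the logistic function. The plan is to combine a Lipschitz bound for the inner affine map $(p,r^{\mathrm{loc}},r^{\mathrm{anc}},m)\mapsto u$ with the global Lipschitz bound of $\sigma$. For $i\in\mathcal{U}$ no hard constraint overrides the value, so $\DT$ acts coordinatewise through this composition; the analysis therefore reduces to a single coordinate, and the vector statement follows from it.

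\textbf{Step 1 (logistic factor).} Since $\sigma'(u)=\sigma(u)(1-\sigma(u))\le 1/4$ for all $u\in\R$, the mean value theorem gives $|\sigma(u)-\sigma(u')|\le \tfrac14|u-u'|$. This is the source of the prefactor $1/4$.

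\textbf{Step 2 (inner map).} I would invoke Lemma~\ref{lemma:trust:logit} with two arbitrary input tuples in place of iterates $t$ and $t+1$; the lemma's proof only uses the input values. On $[\epsilon_p,1-\epsilon_p]$ the derivative of $\log\frac{p}{1-p}$ is $\frac{1}{p(1-p)}$. The lemma already records the resulting coefficient $\lambda_p/(\epsilon_p(1-\epsilon_p))$, which is an upper bound for it (in fact $1/(\epsilon_p(1-\epsilon_p))$ is the maximum of $1/(p(1-p))$ on that interval). The remaining terms are affine with slopes $2\lambda_{\mathrm{loc}}$, $2\lambda_{\mathrm{anc}}$ and $2\lambda_m$, and the offset $\beta_0$ cancels. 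This yields
\[
|u_i-u_i'|\le \frac{\lambda_p}{\epsilon_p(1-\epsilon_p)}|\Delta p_i|+2\lambda_{\mathrm{loc}}|\Delta r^{\mathrm{loc}}_i|+2\lambda_{\mathrm{anc}}|\Delta r^{\mathrm{anc}}_i|+2\lambda_m|\Delta m_i|.
\]

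\textbf{Step 3 (combine and take the max).} I would bound each coefficient by $M:=\max\{\cdot\}$, so that $|\tilde q_i-\tilde q_i'|\le \tfrac14 M\,\|\Delta_i\|_1$, where $\Delta_i$ collects the four input differences. To state the result in a norm on the stacked inputs, I would measure the input perturbation in the $\ell_1$-sum norm across the four blocks. A Euclidean statement costs a factor of at most $2$ by Cauchy--Schwarz, which I would absorb by fixing the product norm convention explicitly. Summing over $i\in\mathcal{U}$ then gives Lipschitz continuity of the restricted operator with constant $L_{\DT}$.

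\textbf{Main obstacle and the $m$-only bound.} The step I expect to be delicate is Step 3, specifically making the norm convention consistent so that the bare maximum, with no $\sqrt{4}$ factor, is the honest constant. I would resolve this by choosing the $\ell_1$ product norm. A secondary point is checking whether $\DT$ applies any clipping after $\sigma$; since clipping to an interval is 1-Lipschitz, it would not increase the constant. For the inflow-only constant, I would hold $(p,r^{\mathrm{loc}},r^{\mathrm{anc}})$ fixed, so that $u_i$ changes by exactly $2\lambda_m\Delta m_i$. Step 1 then gives $|\Delta\tilde q_i|\le \tfrac14\cdot 2\lambda_m|\Delta m_i|=\tfrac{\lambda_m}{2}|\Delta m_i|$, which is $L_{\DT,m}$.
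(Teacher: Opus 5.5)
Your proposal is correct and follows the paper's proof. The paper likewise composes the bound $\sigma'\le 1/4$ with Lemma~\ref{lemma:trust:logit}, and it obtains the inflow-only constant $\lambda_m/2$ by isolating the $m$ term. Your two refinements (applying the lemma to arbitrary input pairs rather than consecutive iterates, and fixing the $\ell_1$ product norm so the bare maximum with no $\sqrt{4}$ factor is the honest constant) make explicit conventions the paper leaves implicit.
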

\begin{remark}
    The Lipschitz constants given here directly enter the proof of Lemma~\ref{lemma:suff-decrease}: they bound how much the trust--updated responsibilities $\bar{q}$ can moves per outer-loop iteration. Theorem~\ref{thm:trust-continuity} also clarifies how the hyperparameters
$(\lambda_p, \lambda_{\text{loc}}, \lambda_{\text{anc}}, \lambda_m)$ affect numerical stability: larger weights yield larger Lipschitz constants and thus weaker contraction.
\end{remark}

\subsection{Global convergence and conservation properties}\label{sec:convergence-property}
\begin{lemma}[Minimization with sufficient decrease]\label{lemma:suff-decrease}
Under Assumption~\ref{ass:loss-reg}--\ref{ass:budget}, the encoder update satisfies $\mathcal{L}^{(t)}_{\mathrm{enc}}(\Theta^{(t+1)})\leq\mathcal{L}^{(t)}_{\mathrm{enc}}(\Theta^{(t)})-c_\Theta\|\nabla_\Theta\mathcal{L}^{(t)}_{\mathrm{enc}}(\Theta^{(t)})\|^2$ for some constant $c_\Theta =\eta(1-L_\Theta\eta/2)>0$. With $\eta=1/L_\Theta$, the inequality holds with $c_\Theta=1/(2L_\Theta)$.
\end{lemma}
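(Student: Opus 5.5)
The plan is to read the encoder update, within a fixed round $t$, as one gradient step $\Theta^{(t+1)}=\Theta^{(t)}-\eta\nabla_\Theta\mathcal{L}^{(t)}_{\mathrm{enc}}(\Theta^{(t)})$ on the frozen round-$t$ objective, with step size $\eta\in(0,2/L_\Theta)$. This is the only reading under which the stated constant $c_\Theta=\eta(1-L_\Theta\eta/2)$ is meaningful. The state $(q^{(t)},a^{(t)},m^{(t)})$, the verified sets and the soft labels all enter $\mathcal{L}^{(t)}_{\mathrm{enc}}$ only as fixed data during this step, so the comparison involves a single function of $\Theta$. Changes of that function across rounds, for example after verification, are not part of this lemma; they are handled later in the Lyapunov argument.

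The key tool is the descent lemma, which follows from Assumption~\ref{ass:loss-reg}(i). For any $\Theta,\Theta'$,
\[
\mathcal{L}^{(t)}_{\mathrm{enc}}(\Theta')\le \mathcal{L}^{(t)}_{\mathrm{enc}}(\Theta)+\langle\nabla_\Theta\mathcal{L}^{(t)}_{\mathrm{enc}}(\Theta),\Theta'-\Theta\rangle+\tfrac{L_\Theta}{2}\|\Theta'-\Theta\|^2 .
\]
I would derive this from $\mathcal{L}(\Theta')-\mathcal{L}(\Theta)=\int_0^1\langle\nabla\mathcal{L}(\Theta+s(\Theta'-\Theta)),\Theta'-\Theta\rangle\,ds$. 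Subtracting the linear term and applying Cauchy--Schwarz with the Lipschitz bound on the gradient gives at most $\int_0^1 L_\Theta s\|\Theta'-\Theta\|^2ds$. Setting $\Theta=\Theta^{(t)}$ and $\Theta'=\Theta^{(t+1)}$ in this inequality yields
\[
\mathcal{L}^{(t)}_{\mathrm{enc}}(\Theta^{(t+1)})\le\mathcal{L}^{(t)}_{\mathrm{enc}}(\Theta^{(t)})-\eta\|g\|^2+\tfrac{L_\Theta\eta^2}{2}\|g\|^2 ,
\]
where $g=\nabla_\Theta\mathcal{L}^{(t)}_{\mathrm{enc}}(\Theta^{(t)})$. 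This is exactly the claim with $c_\Theta=\eta(1-L_\Theta\eta/2)$, which is positive precisely when $0<\eta<2/L_\Theta$. Taking $\eta=1/L_\Theta$ maximizes the quadratic and gives $c_\Theta=1/(2L_\Theta)$.

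The main obstacle is technical, and it comes from Assumption~\ref{ass:loss-reg}(iv): the parameter space is compact, so a plain gradient step may leave it. If the implementation projects onto a compact convex $\Theta$-set, I would replace the step by $\Theta^{(t+1)}=\Pi(\Theta^{(t)}-\eta g)$. I would then use the projection inequality $\langle \Theta^{(t)}-\eta g-\Theta^{(t+1)},\Theta^{(t)}-\Theta^{(t+1)}\rangle\le 0$, which gives $\langle g,\Theta^{(t+1)}-\Theta^{(t)}\rangle\le-\tfrac1\eta\|\Theta^{(t+1)}-\Theta^{(t)}\|^2$. This yields the same decrease with $g$ replaced by the gradient mapping $G_\eta=(\Theta^{(t)}-\Theta^{(t+1)})/\eta$, and $G_\eta$ equals $g$ at interior points.

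I would state the lemma under the interior, unprojected reading and note the projected variant in a remark. The remaining hypotheses play no role in this single-step inequality: nonnegativity, the $\delta_t$-suboptimality, and Assumptions~\ref{ass:trust-bdd}--\ref{ass:budget} are needed only later, when the per-round decreases are summed.
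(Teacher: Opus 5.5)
Your proof is correct and takes the same approach as the paper: both treat the encoder update as a single gradient step $\Theta^{(t+1)}=\Theta^{(t)}-\eta\nabla_\Theta\mathcal{L}^{(t)}_{\mathrm{enc}}(\Theta^{(t)})$, substitute it into the descent lemma given by the $L_\Theta$-Lipschitz gradient, and read off $c_\Theta=\eta(1-L_\Theta\eta/2)$, which equals $1/(2L_\Theta)$ at $\eta=1/L_\Theta$. Your strict condition $0<\eta<2/L_\Theta$ is the right one for $c_\Theta>0$ (the paper's $\eta\le 2/L_\Theta$ allows $c_\Theta=0$), and your projected-gradient remark for the compact parameter set is a sensible addition the paper does not make.
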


We now define the surrogate Lyapunov function used in the convergence analysis.

\begin{lemma}[Block--wise sufficient decrease]\label{lem:lyapunov}
    Define the surrogate Lyapunov function $V^{(t)}:=\mathcal{L}^{(t)}_{\mathrm{enc}}(\Theta^{(t)})+\rho_q\|q^{(t)}-q^{(t-1)}\|+\rho_m\|m^{(t)}-m^{(t-1)}\|$
    for some constants $\rho_q,~\rho_m>0$. Under Assumptions~\ref{ass:loss-reg}--\ref{ass:trust and transport control} and
Theorem~\ref{thm:trust-continuity}, there exist constants $c_\Theta, c_q, c_m > 0$ such that   \begin{equation}\label{eq:lyap-dec}
        V^{(t+1)}\leq V^{(t)}- c_\Theta \|\nabla_\Theta \mathcal{L}^{(t)}_{\mathrm{enc}}(\Theta^{(t-1)})\|^2
- c_q \|q^{(t+1)} - q^{(t)}\|
- c_m \|m^{(t+1)} - m^{(t)}\|
+ \mathcal{E}^{(t)}
    \end{equation}
    where the drift error $\mathcal{E}^{(t)}\geq0$ satisfies $\sum_{t=0}^\infty\mathcal{E}^{(t)}<\infty$.
\end{lemma}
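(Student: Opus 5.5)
The plan is to split $V^{(t+1)}-V^{(t)}$ into an encoder-loss part and a trust/transport part, and to bound the two pieces separately. Lemma~\ref{lemma:suff-decrease} controls the first; Assumption~\ref{ass:trust and transport control} controls the second. All residual terms go into $\mathcal{E}^{(t)}$ and are then checked for summability.

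\textbf{Decomposition.} Since $H^{(t)}=\rho_q\|q^{(t)}-q^{(t-1)}\|+\rho_m\|m^{(t)}-m^{(t-1)}\|$, we have $V^{(t)}=\mathcal{L}^{(t)}_{\mathrm{enc}}(\Theta^{(t)})+H^{(t)}$. Therefore
\[
V^{(t+1)}-V^{(t)}=\bigl[\mathcal{L}^{(t+1)}_{\mathrm{enc}}(\Theta^{(t+1)})-\mathcal{L}^{(t)}_{\mathrm{enc}}(\Theta^{(t)})\bigr]+\bigl[H^{(t+1)}-H^{(t)}\bigr].
\]

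\textbf{Loss block.} I insert the intermediate quantity $\mathcal{L}^{(t)}_{\mathrm{enc}}(\Theta^{(t+1)})$. Lemma~\ref{lemma:suff-decrease} gives
\[
\mathcal{L}^{(t)}_{\mathrm{enc}}(\Theta^{(t+1)})\le \mathcal{L}^{(t)}_{\mathrm{enc}}(\Theta^{(t)})-c_\Theta\|\nabla\mathcal{L}^{(t)}_{\mathrm{enc}}\|^2.
\]
The statement writes the gradient at $\Theta^{(t-1)}$. I read this as an indexing convention and would align it with Lemma~\ref{lemma:suff-decrease}, absorbing any mismatch via Assumption~\ref{ass:loss-reg}(iii).

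What remains is the objective-drift term $D_t:=\mathcal{L}^{(t+1)}_{\mathrm{enc}}(\Theta^{(t+1)})-\mathcal{L}^{(t)}_{\mathrm{enc}}(\Theta^{(t+1)})$, which arises because the state $(q,a,m)$ and the verified sets change. The soft and anchor losses depend on $q$ and $m$ through bounded, Lipschitz soft-label terms. By compactness of $\Theta$ (Assumption~\ref{ass:loss-reg}(iv)), the per-example losses are bounded, say by $G$, on the clipped range. This gives
\[
D_t\le L_q\|q^{(t+1)}-q^{(t)}\|+L_m\|m^{(t+1)}-m^{(t)}\|+G\,|\Delta\mathcal{V}_t|.
\]
The last term counts examples newly moved into $\mathcal{V}^\pm$. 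This count is finite in total, because each index is verified at most once (the waiting pool $\mathcal{W}$ prevents repeat queries). Hence $\sum_t G|\Delta\mathcal{V}_t|\le Gn$. The $\delta_t$ slack from Assumption~\ref{ass:loss-reg}(iii) is also summable and joins this error term.

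\textbf{Trust/transport block.} Assumption~\ref{ass:trust and transport control} gives
\[
H^{(t+1)}-H^{(t)}\le -\eta H^{(t+1)}+C_B B_t^2+\kappa_t.
\]
Now
\[
\eta H^{(t+1)}=\eta\rho_q\|q^{(t+1)}-q^{(t)}\|+\eta\rho_m\|m^{(t+1)}-m^{(t)}\|,
\]
which is exactly the desired negative term. Combining the two blocks, the coefficient on $\|q^{(t+1)}-q^{(t)}\|$ is $-(\eta\rho_q-L_q)$, and similarly for $m$. I therefore set $c_q=\eta\rho_q-L_q$ and $c_m=\eta\rho_m-L_m$. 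These are positive provided $\rho_q>L_q/\eta$ and $\rho_m>L_m/\eta$. The constants $\rho_q,\rho_m$ are free in the statement, so I choose them large enough. Note that $\eta$ in Assumption~\ref{ass:trust and transport control} is then fixed relative to these $\rho$'s; I accept this, since the assumption is stated for the chosen $\rho$'s. Theorem~\ref{thm:trust-continuity} and Lemma~\ref{lemma:trust:logit} are used to verify that $L_q,L_m$ are finite, via $L_{\DT}$, the transport step sizes $\eta_\pm$, and inflows bounded in $[0,1]$.

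\textbf{Error term and main obstacle.} Set
\[
\mathcal{E}^{(t)}:=C_B B_t^2+\kappa_t+\delta_t+\delta_{t+1}+G|\Delta\mathcal{V}_t|.
\]
It is nonnegative, and it is summable by Assumption~\ref{ass:budget}, the summability of $\kappa_t$ and $\delta_t$, and finiteness of $n$. The main obstacle is the drift bound on $D_t$. It requires that $\mathcal{L}_{\mathrm{enc}}$ depend Lipschitz-continuously on $(q,m)$ uniformly over compact $\Theta$. This includes the thresholded selection of ``high-confidence'' examples in $\mathcal{L}_{\mathrm{soft}}$, which is discontinuous. My first attempt would be to treat the threshold crossings like verification events: each crossing contributes at most $G$, and I would bound their number by a summable quantity. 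If that fails, I would assume a smooth confidence weighting instead of a hard threshold, so that Lipschitz dependence holds directly.
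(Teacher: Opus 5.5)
\textbf{The gap is your bound on the drift} $D_t=\mathcal{L}^{(t+1)}_{\mathrm{enc}}(\Theta^{(t+1)})-\mathcal{L}^{(t)}_{\mathrm{enc}}(\Theta^{(t+1)})$. The decomposition, the finite verification count, and your handling of the $H$ block are fine. As defined in Appendix~\ref{app:encoder_details}, the objective is $\mathcal{L}^{(t)}_{\mathrm{enc}}(\Theta;q^{(t)},a^{(t)},z^{(t-1)})$. It does not involve $m$ at all. It does depend on $a^{(t)}$, through the soft-loss weights $\omega^{\mathrm{soft},(t)}_i=\frac{1}{2}+\frac{1}{2}a^{(t)}_i$ and through the anchor regularizer. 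It also depends on $q^{(t)}$ through the quantile-selected set $\mathcal{H}^{(t)}_{\mathrm{soft}}$.

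The anchor regularizer is harmless: evaluated at the encoder that produced $z^{(t)}$, the next round's anchor term vanishes and the current one is nonnegative. The soft-loss weights are not harmless. Your inequality $D_t\le L_q\|q^{(t+1)}-q^{(t)}\|+L_m\|m^{(t+1)}-m^{(t)}\|+G|\Delta\mathcal{V}_t|$ drops the $a$-dependence, and nothing in Assumptions~\ref{ass:loss-reg}--\ref{ass:trust and transport control} controls it. Off the hard-constrained indices, $a^{(t+1)}-a^{(t)}$ is driven by two quantities:
\begin{itemize}
\item $\tilde q^{(t)}-\tilde q^{(t-1)}$, which differs from $q^{(t+1)}-q^{(t)}$ by transport increments of $\ell_1$-size $O(B_t+B_{t-1})$. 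Assumption~\ref{ass:budget} makes these only square-summable.
\item $r^{\mathrm{loc},(t)}-r^{\mathrm{loc},(t-1)}$, which moves with the encoder outputs $p^{(t)}-p^{(t-1)}$. No assumption controls these in a summable way.
\end{itemize}
Theorem~\ref{thm:trust-continuity} bounds $\tilde q$ in terms of these inputs, not the inputs themselves, so it cannot supply your $L_q,L_m$.

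The threshold problem you flag is also real, and neither fallback closes it. No assumption bounds the number of crossings of $\delta^{(t)}_{\mathrm{soft}}$: a coordinate whose confidence sits at the limiting quantile can cross infinitely often even if $q^{(t)}$ converges. A smooth weighting would change the method. So summability of your $\mathcal{E}^{(t)}$ does not follow. You need an explicit drift hypothesis, e.g.\ $\sum_t\sup_\Theta|\mathcal{L}^{(t+1)}_{\mathrm{enc}}(\Theta)-\mathcal{L}^{(t)}_{\mathrm{enc}}(\Theta)|<\infty$ (or the same along the iterates); with it, your argument goes through.

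\textbf{Comparison with the paper.} The paper's proof does not handle this drift either. It applies the descent inequality as though $\mathcal{L}^{(t)}_{\mathrm{enc}}$ were one fixed objective. It obtains $c_q,c_m$ from the Lipschitz bound of Theorem~\ref{thm:trust-continuity}, the transport-step bound, the budget bound on $m$, and Young-type inequalities. It never invokes Assumption~\ref{ass:trust and transport control} and never exhibits $\mathcal{E}^{(t)}$. Reading $-\eta H^{(t+1)}$ directly off that assumption, as you do, is the cleaner route.

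Your hesitation about $\rho_q,\rho_m$ is unnecessary. The assumption is invariant under $(\rho_q,\rho_m)\mapsto c(\rho_q,\rho_m)$ with $(C_B,\kappa_t)\mapsto c(C_B,\kappa_t)$, so $\eta$ does not move. In fact, summing it gives $\eta\sum_{t\ge1}H^{(t)}\le H^{(0)}+\sum_t(C_BB_t^2+\kappa_t)<\infty$. Hence any drift terms linear in $\|q^{(t+1)}-q^{(t)}\|$ or $\|m^{(t+1)}-m^{(t)}\|$ can simply go into $\mathcal{E}^{(t)}$.

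Finally, Lemma~\ref{lemma:suff-decrease} gives the gradient at $\Theta^{(t)}$, and Assumption~\ref{ass:loss-reg}(iii) cannot ``absorb'' a shift to $\Theta^{(t-1)}$. Instead, use (iii) and split at $\Theta^{(t)}$, comparing $\Theta^{(t+1)}$ with a gradient step from $\Theta^{(t)}$ (assuming it stays in the compact parameter set). This gives $-\frac{1}{2L_\Theta}\|\nabla_\Theta\mathcal{L}^{(t+1)}_{\mathrm{enc}}(\Theta^{(t)})\|^2+\delta_{t+1}$, which is the index the paper actually telescopes in the proof of Theorem~\ref{thm:convergence}.
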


The convergence theorem follows from the surrogate Lyapunov decreases.

\begin{theorem}[Convergence of the Iteration]\label{thm:convergence}
Under Assumptions~\ref{ass:loss-reg}--\ref{ass:trust and transport control}, the sequence 
$\{(\Theta^{(t)}, q^{(t)}, a^{(t)}, m^{(t)})\}_{t \geq 0}$ produced by the outer loop satisfies:
(a) $V^{(t)} \to V^\star$ for some $V^\star \geq 0$; 
(b) $\|\nabla_\Theta \mathcal{L}^{(t)}_{\mathrm{enc}}(\Theta^{(t-1)})\| \to 0$, 
$\|q^{(t+1)} - q^{(t)}\| \to 0$, and $\|m^{(t+1)} - m^{(t)}\| \to 0$; 
(c) every limit point $(\Theta^\star, q^\star, a^\star, m^\star)$ satisfies $\nabla_\Theta \mathcal{L}_{\mathrm{enc}}^\star(\Theta^\star) = 0,\quad
q^\star = \mathcal{T}_{\mathrm{trans}}\big(\mathcal{D}_{\mathrm{trust}}(p^\star, z^\star, m^\star)\big),\quad
m^\star = m^{+,\star},$
where $p^\star, z^\star$ are encoder outputs at $\Theta^\star$, and $\mathcal{T}_{\mathrm{trans}}$ denotes the transport update map in Appendix~\ref{app:transport_details}.
\end{theorem}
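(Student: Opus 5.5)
The argument is the standard Lyapunov one: telescope the block-wise decrease of Lemma~\ref{lem:lyapunov}, use the lower bound on $V^{(t)}$ and summability of the drift to obtain (a) and (b), then pass to the limit in the update maps for (c).

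\textbf{Part (a).} Summing \eqref{eq:lyap-dec} from $t=0$ to $T-1$ gives
\[
V^{(T)}+\sum_{t=0}^{T-1}\Big(c_\Theta\|\nabla_\Theta\mathcal{L}^{(t)}_{\mathrm{enc}}(\Theta^{(t-1)})\|^2+c_q\|q^{(t+1)}-q^{(t)}\|+c_m\|m^{(t+1)}-m^{(t)}\|\Big)\le V^{(0)}+\sum_{t=0}^{T-1}\mathcal{E}^{(t)}.
\]
Each term of $V^{(t)}$ is nonnegative, using Assumption~\ref{ass:loss-reg}(ii), so $V^{(t)}\ge0$. Also $\sum_t\mathcal{E}^{(t)}<\infty$. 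The sequence is therefore a nonnegative quasi-Fejér/Robbins--Siegmund-type sequence: $V^{(t+1)}\le V^{(t)}+\mathcal{E}^{(t)}$ with summable $\mathcal{E}^{(t)}$.

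The convergence argument is then direct. The shifted sequence $W^{(t)}:=V^{(t)}+\sum_{s\ge t}\mathcal{E}^{(s)}$ is nonincreasing and bounded below by $0$, so it converges. Since the tail sum tends to zero, $V^{(t)}\to V^\star\ge0$.

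\textbf{Part (b).} Letting $T\to\infty$ in the telescoped inequality shows that the three progress series are finite:
\begin{itemize}
\item $\sum_t\|\nabla_\Theta\mathcal{L}^{(t)}_{\mathrm{enc}}(\Theta^{(t-1)})\|^2<\infty$;
\item $\sum_t\|q^{(t+1)}-q^{(t)}\|<\infty$;
\item $\sum_t\|m^{(t+1)}-m^{(t)}\|<\infty$.
\end{itemize}
The general terms of these convergent series tend to zero, which is exactly (b). As a side benefit, summability of the increments makes $\{q^{(t)}\}$ and $\{m^{(t)}\}$ Cauchy, so they actually converge and not merely have limit points.

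\textbf{Part (c).} Assumption~\ref{ass:loss-reg}(iv) makes $\Theta$ compact, and the responsibilities, anchor confidences and inflows lie in $[0,1]^n$. Hence limit points exist, and I take a subsequence $t_k$ converging to $(\Theta^\star,q^\star,a^\star,m^\star)$.

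Three facts are needed along this subsequence.
\begin{itemize}
\item \emph{Stationarity.} Continuity of $\nabla_\Theta\mathcal{L}_{\mathrm{enc}}$ (Assumption~\ref{ass:loss-reg}(i)) and (b) give $\nabla_\Theta\mathcal{L}^\star_{\mathrm{enc}}(\Theta^\star)=0$. This requires that the round-indexed losses $\mathcal{L}^{(t)}_{\mathrm{enc}}$ settle to a limit loss $\mathcal{L}^\star_{\mathrm{enc}}$. That holds because they depend continuously on $(q^{(t)},a^{(t)},m^{(t)})$ and on the verified sets. The verified sets are finite and monotone, since they only grow, so they are eventually constant.
\item \emph{Fixed-point relation for $q$.} Write the update as $q^{(t+1)}=\mathcal{T}_{\mathrm{trans}}(\mathcal{D}_{\mathrm{trust}}(p^{(t)},z^{(t)},m^{(t)}))$ and pass to the limit. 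Because $q^{(t_k+1)}-q^{(t_k)}\to0$, the left side also tends to $q^\star$. The right side converges by continuity:
\begin{itemize}
\item encoder outputs are continuous in $\Theta$;
\item $\mathcal{D}_{\mathrm{trust}}$ is Lipschitz on $\mathcal{U}$ by Theorem~\ref{thm:trust-continuity};
\item $\mathcal{T}_{\mathrm{trans}}$ is continuous, being a composition of the affine formulas in Section~\ref{sec:trust_transport_verification} with inflow scores.
\end{itemize}
This yields $q^\star=\mathcal{T}_{\mathrm{trans}}(\mathcal{D}_{\mathrm{trust}}(p^\star,z^\star,m^\star))$.
\item \emph{Fixed-point relation for $m$.} The same argument applied to $m^{(t+1)}=m^{+,(t)}$ gives $m^\star=m^{+,\star}$.
\end{itemize}

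\textbf{Main obstacle.} The difficult step is this continuity passage in (c). The uncertain set $\mathcal{M}^{(t)}$ is defined through an indicator $\tilde q_j\in I_{\mathrm{amb}}$. The budgeted sparse transport may also involve top-$k$ or thresholding, which is discontinuous.

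I would handle this with two observations:
\begin{enumerate}
\item $B_t\to0$ by Assumption~\ref{ass:budget}, so the transport correction $\eta_\pm m^{\pm}$ vanishes in the limit. The fixed-point relation then effectively reduces to the limit $\tilde q$, which is continuous.
\item Wherever a limit point sits on a threshold boundary, I would add a mild nondegeneracy assumption: the limit $\tilde q^\star_j$ does not lie on $\partial I_{\mathrm{amb}}$. Under this assumption the selected set is locally constant near the limit, and continuity is restored.
\end{enumerate}
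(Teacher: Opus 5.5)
Your proof follows the paper's own argument: (a) from a shifted Lyapunov sequence that is monotone and bounded below (the paper subtracts $\sum_{s<t}\mathcal{E}^{(s)}$ rather than adding the tail, which differs only by a constant), (b) by telescoping the block-wise decrease inequality, and (c) by extracting a convergent subsequence and passing to the limit in $q^{(t+1)}=\mathcal{T}_{\mathrm{trans}}(\mathcal{D}_{\mathrm{trust}}(p^{(t)},z^{(t)},m^{(t)}))$.

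Your treatment of (c) is more careful than the paper's, which simply cites the mass-conservation lemma for continuity of $\mathcal{T}_{\mathrm{trans}}$. In fact your first observation suffices on its own: $0\le m_j^{\pm,(t)}\le B_t^{\pm}\to0$ forces $\|q^{(t+1)}-\tilde q^{(t)}\|\to0$ regardless of $\mathcal{M}^{(t)}$, the top-$K$ sets and $\kappa_t$, so no nondegeneracy hypothesis is needed (the relations become $m^\star=0$ and $q^\star=\mathcal{D}_{\mathrm{trust}}(p^\star,z^\star,0)$), and the analogous threshold issue in the stationarity step, coming from the quantile-selected set $\mathcal{H}_{\mathrm{soft}}^{(t)}$, is handled by passing to a further subsequence on which that finite set is constant.
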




This theorem shows that the outer loop is numerically well-behaved: the encoder approaches a local minimizer of its loss, while the trust probabilities and transport inflows stabilize. Crucially, the limit point is mutually self-consistent, meaning that no block can improve unilaterally given the others, which formalizes \PUAuditPlus. In addition, the numerical stability is further supported by the mass conservation property of the transport operator: the total transported mass remains equal to the budget $B_t$ across iterations, ensuring that evidence propagation acts as a bounded and controlled perturbation to the refinement state. Formal statements of mass conservation (Lemma~\ref{lem:mass-cons}) and its role in bias reduction (Theorem~\ref{thm:bias-reduction}) are provided in Appendix~\ref{app:mass-conservation}.

\section{Experiments}
\label{sec:experiments}

We evaluate \PUAuditPlus in three parts. First, we use simulations to test whether the method can recover latent human-consistency structure under noisy judge signals and selective verification. Second, we evaluate real LLM-as-a-judge data across multiple judge models and question types. Third, we provide ablation studies to examine the contribution of each component. The evaluation metrics are provided in Appendix~\ref{sec:exp_metrics}.

\begin{table*}[t]
\centering
\caption{Simulation results averaged over five independent runs. Each run contains 640 examples: 160 source positive examples and 480 target/unlabeled examples, among which 168 are hidden positives and 312 are negatives. Adjusted accuracy, accuracy difference, and flip rate are reported as mean (max--min range) across runs. Bold values indicate positive improvements.}
\small
\resizebox{\textwidth}{!}{%
\begin{tabular}{cc|cccc}
\toprule
\midrule
\multicolumn{6}{c}{\textbf{Simulation Results for CD/DD Noise and SCAR/SAR Verification}}\\
\midrule
\midrule
\textbf{Noise}
& \textbf{Verification}
& \textbf{Orig. Acc.}
& \textbf{Adj. Acc.}
& \textbf{Diff.}
& \textbf{Flip Rate} \\
\midrule
CD & SCAR
& 73.75\%
& 85.41\% (1.33\%)
& \hlgreen{\textbf{+11.66\%}} (1.33\%)
& 39.53\% (1.09\%) \\

CD & SAR
& 73.75\%
& 84.31\% (1.22\%)
& \hlgreen{\textbf{+10.56\%}} (1.22\%)
& 41.56\% (1.51\%) \\

DD & SCAR
& 73.75\%
& 84.19\% (1.23\%)
& \hlgreen{\textbf{+10.44\%}} (1.23\%)
& 38.81\% (0.95\%) \\

DD & SAR
& 73.75\%
& 84.88\% (1.19\%)
& \hlgreen{\textbf{+11.13\%}} (1.19\%)
& 38.19\% (0.66\%) \\

\bottomrule
\end{tabular}}
\label{tab:simulation_cd_dd_scar_sar}
\end{table*}

\begin{figure}[t]
    \centering
    \includegraphics[width=1\linewidth]{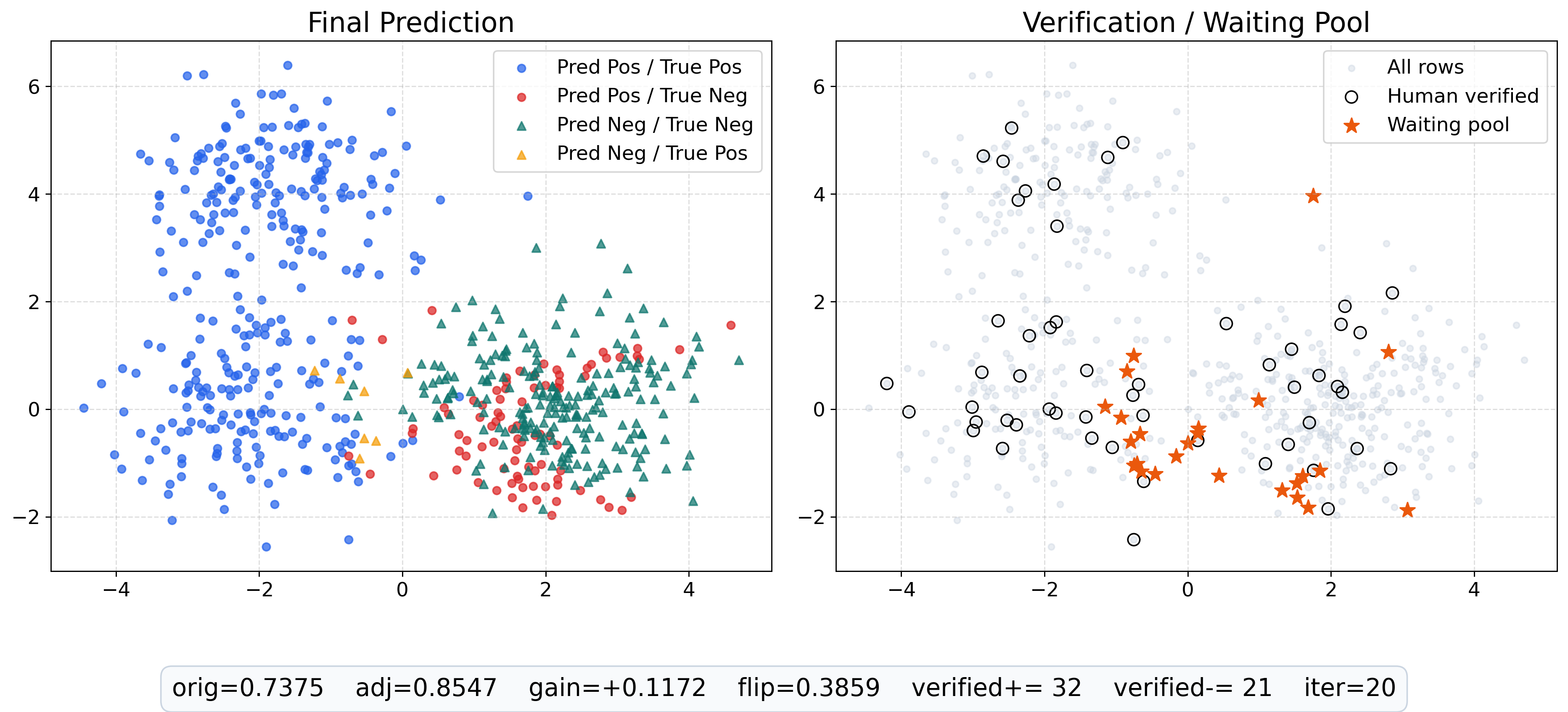}
    \caption{
    DD+SAR simulation visualization. 
    \textbf{Left}: verified and waiting-pool examples under selective verification. 
    \textbf{Right}: final hard prediction after applying \PUAuditPlus.
    }
    \label{fig:simulation_dd_sar_main}
\end{figure}

\subsection{Simulation}
\label{sec:simulation}

We first evaluate \PUAuditPlus in a generative simulation where two latent groups encode whether the initial judge signal is human-consistent. The simulator draws oracle latent coordinates for these groups, maps them to noisy observed comparison features, and corrupts the initial consistency signals, so \PUAuditPlus must recover useful latent structure using only noisy features and corrupted supervision. \PUAuditPlus observes only these noisy features and corrupted signals. We consider four settings: class-dependent (CD) or distribution-dependent (DD) corruption, crossed with selected-completely-at-random (SCAR) or selected-at-random (SAR) verification. CD corruption depends only on the latent consistency group, while DD corruption also depends on the feature geometry. SCAR verification samples verified examples conditionally at random, while SAR verification introduces feature-dependent selection bias. Details of the simulation design are provided in Appendix~\ref{app:simulation_details}.

Table~\ref{tab:simulation_cd_dd_scar_sar} shows that \PUAuditPlus improves the initial signal across all four noise and verification settings. The original accuracy is 73.75\%, while the adjusted accuracy reaches 84.19\%--85.41\%. The largest improvement appears in the CD+SCAR setting, where both corruption and verification are the most regular, and \PUAuditPlus raises accuracy from 73.75\% to 85.41\%. The method also remains effective in the harder DD+SAR setting, where both judge errors and human verification depend on the feature geometry. This setting is the closest synthetic analogue to selective human verification in real LLM-as-a-judge data, yet \PUAuditPlus still improves accuracy to 84.88\%. Figure~\ref{fig:simulation_dd_sar_main} visualizes this DD+SAR case. Although the verified set is incomplete and feature-biased, \PUAuditPlus recovers a clearer separation between human-consistent and human-inconsistent examples. The sensitivity analyses further support this interpretation: \PUAuditPlus benefits from clearer latent group separation, while weak separation in high-dimensional feature spaces makes local geometry and anchor evidence less reliable and therefore reduces the gain. Detailed results are provided in Appendix~\ref{app:simulation_geometry}.

\subsection{Real-Data evaluation}
\label{sec:real_data}

We next evaluate \PUAuditPlus on real LLM-as-a-judge data constructed from MT-Bench~\citep{zheng2023judging} and Chatbot Arena~\citep{chiang2024chatbot}. We use three question types, coding, math reasoning, and structured factual questions, and include one mixed question type MT-Bench setting. We evaluate five judge models: \textsc{GPT-5.4}, \textsc{GPT-5.4-mini}~\citep{openai_gpt54_blog}, \textsc{Gemini-2.5-Flash}~\citep{gemini25flash}, \textsc{Qwen2.5-7B-Instruct}~\citep{qwen25_technical_report}, and \textsc{Mistral-7B-Instruct-v0.3}~\citep{jiang2023mistral7b}. The full setup is described in Appendix~\ref{app:real_data_details}.

\begin{table}[t]
\centering
\caption{Coding results for GPT-5.4 and Qwen with 20\% verified baseline data. Accuracy and gain are reported as mean (max--min range). Human Verif. Req. denotes the number of human-verified examples required by each method. Baselines use 20\% verified data, while \PUAuditPlus uses a substantially smaller adaptive verification budget.}
\small
\resizebox{\linewidth}{!}{%
\begin{tabular}{lccccc}
\toprule
\midrule
\multicolumn{6}{c}{\textbf{Coding Results with 20\% Verified Baseline Data}}\\
\midrule
\textbf{Metric}
& \textbf{Logistic Reg.}
& \textbf{MLP}
& \textbf{Random Forest}
& \textbf{Label Prop.}
& \textbf{\PUAuditPlus} \\
\midrule
\multicolumn{6}{c}{\textbf{Model: GPT-5.4 (n = 2949, Orig. Acc = 63.92\%)}}\\
\midrule
Adj. Acc.
& 59.54\% (3.56)
& 60.83\% (2.80)
& 63.23\% (1.87)
& 60.50\% (9.92)
& \hlgreen{\textbf{67.20\% (2.00)}} \\
Gain
& -4.38\% (3.56)
& -3.09\% (2.80)
& -0.69\% (1.87)
& -3.42\% (9.92)
& \hlgreen{\textbf{+3.28\% (2.00)}} \\
Human Verif. Req.
& 590
& 590
& 590
& 590
& \hlgreen{\textbf{100.6}} \\
\midrule
\multicolumn{6}{c}{\textbf{Model: Qwen (n = 8884, Orig. Acc = 53.73\%)}}\\
\midrule
Adj. Acc.
& 58.47\% (1.59)
& 57.44\% (1.83)
& 59.20\% (0.96)
& 53.77\% (2.11)
& \hlgreen{\textbf{77.74\% (9.78)}} \\
Gain
& +4.75\% (1.59)
& +3.72\% (1.83)
& +5.47\% (0.96)
& +0.04\% (2.11)
& \hlgreen{\textbf{+24.02\% (9.78)}} \\
Human Verif. Req.
& 1777
& 1777
& 1777
& 1777
& \hlgreen{\textbf{278.8}} \\
\bottomrule
\end{tabular}}
\label{tab:gpt54_qwen_coding_20_verified_main}
\end{table}

Tables~\ref{tab:gpt54_qwen_coding_20_verified_main} shows that \PUAuditPlus consistently improves the original LLM-judge signal across datasets, judge models, and question types. The gains are consistent across judge families and evaluation settings. The main advantage of \PUAuditPlus is label efficiency. We compare against baselines trained with 3\%, 20\%, and 80\% verified data, while \PUAuditPlus uses a final verification budget close to the 3\% setting. Despite this much smaller human-label budget, \PUAuditPlus is competitive with or better than the baselines in most configurations. Table~\ref{tab:gpt54_qwen_coding_20_verified_main} gives a representative example. On GPT-5.4 and Qwen Coding, \PUAuditPlus achieves better adjusted accuracy while using only about 101 and 279 verified examples, compared with 590 and 1,777 examples used by baselines. \PUAuditPlus also has modest additional computational cost. Additional results are in Appendix~\ref{app:additional_real_data_results}. 
Its dominant overhead is the transport-based inflow update. If \(n_u\) uncertain examples are compared with \(n_a\) anchor examples in a \(d_z\)-dimensional latent space, a direct similarity-matrix implementation costs \(O(n_u n_a d_z)\) time and \(O(n_u n_a)\) memory. 
The runtime and memory results in Appendix~\ref{app:runtime_memory} further support its practical scalability on real-data experiments.

\subsection{Ablation study}
\label{sec:ablation}

\begin{table}[t]
\centering
\caption{Full factorial ablation on the Coding setting with \textsc{Gemini-2.5-Flash}. The original judge accuracy is 63.55\%. We vary four components: encoder training, full trust-state update, transport update, and human query selection.}
\small
\setlength{\tabcolsep}{5pt}
\renewcommand{\arraystretch}{1.08}
\resizebox{\linewidth}{!}{%
\begin{tabular}{ccc|ccc|ccc}
\toprule
\textbf{Full Trust}
& \textbf{Transport}
& \textbf{Human Query}
& \multicolumn{3}{c|}{\textbf{Trained Encoder}}
& \multicolumn{3}{c}{\textbf{Untrained Encoder}} \\
\cmidrule(lr){4-6}\cmidrule(lr){7-9}
& &
& \textbf{Adj. Acc.} & \textbf{Gain} & \textbf{Flips}
& \textbf{Adj. Acc.} & \textbf{Gain} & \textbf{Flips} \\
\midrule
\gcmark & \gcmark & \gcmark
& \hlgreen{\textbf{66.88\%}} & \hlgreen{\textbf{+3.33\%}} & 164
& \hlgreen{64.72\%} & \hlgreen{+1.17\%} & 183 \\

\gcmark & \gcmark & \rxmark
& \hlgreen{66.35\%} & \hlgreen{+2.80\%} & 153
& \hlgreen{66.13\%} & \hlgreen{+2.59\%} & 213 \\

\rxmark & \gcmark & \gcmark
& \hlgreen{66.81\%} & \hlgreen{+3.26\%} & 168
& \hlgreen{64.50\%} & \hlgreen{+0.96\%} & 179 \\

\rxmark & \gcmark & \rxmark
& \hlgreen{66.67\%} & \hlgreen{+3.12\%} & 168
& \hlgreen{66.03\%} & \hlgreen{+2.48\%} & 220 \\

\gcmark & \rxmark & \gcmark
& 53.40\% & -10.14\% & 2250
& 40.78\% & -22.77\% & 2576 \\

\gcmark & \rxmark & \rxmark
& 53.16\% & -10.39\% & 2257
& 40.71\% & -22.84\% & 2574 \\

\rxmark & \rxmark & \gcmark
& 39.72\% & -23.83\% & 2720
& 41.67\% & -21.88\% & 2547 \\

\rxmark & \rxmark & \rxmark
& 39.72\% & -23.83\% & 2720
& 41.03\% & -22.52\% & 2581 \\
\bottomrule
\end{tabular}}
\label{tab:ablation_mtbench_module_combo}
\end{table}

We ablate the main components of \PUAuditPlus on the Coding setting with \textsc{Gemini-2.5-Flash}. We vary four components: whether the encoder is trained, whether the trust update uses the full confidence state or only the model probability \(p_i\), whether the transport update is enabled, and whether human queries are selected by the proposed query score or uniformly at random. Implementation details for these four components are provided in Appendices~\ref{app:encoder_details}, \ref{app:trust_update_details}, \ref{app:transport_details}, and~\ref{app:verification_stopping_details}, respectively.

Table~\ref{tab:ablation_mtbench_module_combo} reports a full factorial ablation of the four main components. The complete model achieves the best adjusted accuracy, improving the original judge accuracy from 63.55\% to 66.88\%. Replacing human querying with random selection reduces the gain from 3.33 to 2.80 percentage points, and using only \(p_i\) in the trust update also lowers performance, showing that both adaptive verification and the full trust state are useful. The transport update has the largest effect: disabling it causes all variants to flip more than two thousand assignments and sharply reduces accuracy, indicating that the transport component plays an important role in stabilizing the refinement state and propagating anchor evidence. Trained encoders also generally outperform untrained ones when transport is enabled. Overall, the ablation supports the coupled design of \PUAuditPlus, combining representation learning, trust-state refinement, transport-based stabilization, and adaptive verification.

\section{Discussion and future work}
\label{sec:conclu}
Overall, these findings show that \PUAuditPlus is a useful method to improve LLM-as-a-judge evaluation when human labels are limited. Rather than assuming that reliable positive and unlabelled groups are available before auditing, \PUAuditPlus learns these groups progressively from judge signals, learned representations, transport-based evidence, and selective verification. This design improves adjusted accuracy in both simulations and real pairwise LLM-answer evaluation while using substantially fewer human verified examples than standard supervised baselines. More broadly, our results suggest that weakly supervised judge auditing should treat trust in the judge as a dynamic quantity: it should be refined and selectively verified as evidence accumulates. This perspective provides a practical path toward more label-efficient and reliable evaluation pipelines for LLM outputs.

One promising direction is to further improve the representation used for refinement. In real LLM evaluation data, human preference labels can vary substantially, because annotators may weigh reasoning, factuality, completeness, style, and usefulness differently. This makes the boundary between human-consistent and human-inconsistent judge decisions less clear than in controlled synthetic settings, and the current representation may not always capture these fine-grained distinctions cleanly. This observation suggests an opportunity for stronger task-adaptive encoders, richer comparison features, and uncertainty-aware treatment of human labels. We expect these extensions to further improve the robustness of adaptive PU auditing, especially in settings where human judgments are nuanced rather than purely deterministic.

\bibliographystyle{plainnat}
\bibliography{references_puauditplus}

\appendix
\clearpage
\newpage

\startcontents[appendix]
\section*{Appendix}
\printcontents[appendix]{}{1}{\setcounter{tocdepth}{2}}
\clearpage

\newpage
\section{Notation and figures}
\label{app:notation}

\begin{table}[h]
\centering
\caption{Summary of main notation. Detailed quantities used only locally are defined near their corresponding equations.}
\small
\begin{tabular}{ll}
\toprule
Symbol & Meaning \\
\midrule
$\mathcal{D}$ & pairwise comparison dataset \\
$Q_i$ & prompt or question \\
$R_i^1,R_i^2$ & two candidate responses \\
$J_i^{\mathrm L}$ & preference given by the LLM judge \\
$J_i^{\mathrm H}$ & human preference, when observed \\
$R_i^w,R_i^\ell$ & winner and loser induced by $J_i^{\mathrm L}$ \\
$y_i$ & judge--human consistency label, $y_i=\mathbf{1}\{J_i^{\mathrm L}=J_i^{\mathrm H}\}$ \\
$\mathcal{P}^{\star},\mathcal{N}^{\star}$ & ideal human-consistent and human-inconsistent sets \\
$\mathcal{U}$ & refinement pool \\
$\mathcal{P}_s$ & trusted positive seed/reference set available before refinement \\
$\mathcal{V}^{+},\mathcal{V}^{-}$ & verified human-consistent and human-inconsistent examples \\
$\mathcal{W}^{(t)}$ & waiting pool of examples selected for review but not yet incorporated \\
\midrule
$E_{\mathrm{RM}}$ & frozen reward-model or text encoder \\
$x_i$ & fixed directional comparison feature \\
$z_i^{(t)}$ & learned latent representation at iteration $t$ \\
$\bar z_i^{(t)}$ & normalized latent representation \\
$p_i^{(t)}$ & preliminary human-consistency estimate from the encoder \\
$q_i^{(t)}$ & refined human-consistency estimate at iteration $t$ \\
$\tilde q_i^{(t)}$ & trust-updated consistency estimate before transport refinement \\
$a_i^{(t)}$ & anchor confidence at iteration $t$ \\
$m_i^{(t)}$ & carried positive transport inflow from the previous round \\
\midrule
$r_i^{\mathrm{loc},(t)}$ & local-consistency evidence score \\
$r_i^{\mathrm{anc},(t)}$ & anchor-support evidence score \\
$\mathcal{M}^{(t)}$ & uncertain examples receiving transport refinement \\
$\mathcal{B}_{+}^{(t)},\mathcal{B}_{-}^{(t)}$ & positive and negative anchor pools for transport \\
$m_j^{+,(t)},m_j^{-,(t)}$ & positive and negative transport inflows \\
$\mathcal{C}^{(t)}$ & candidate set for human verification \\
$H_i^{(t)}$ & final query score for human verification \\
\bottomrule
\end{tabular}
\label{tab:notation}
\end{table}

\begin{figure}[t]
\centering
\includegraphics[width=1\linewidth]{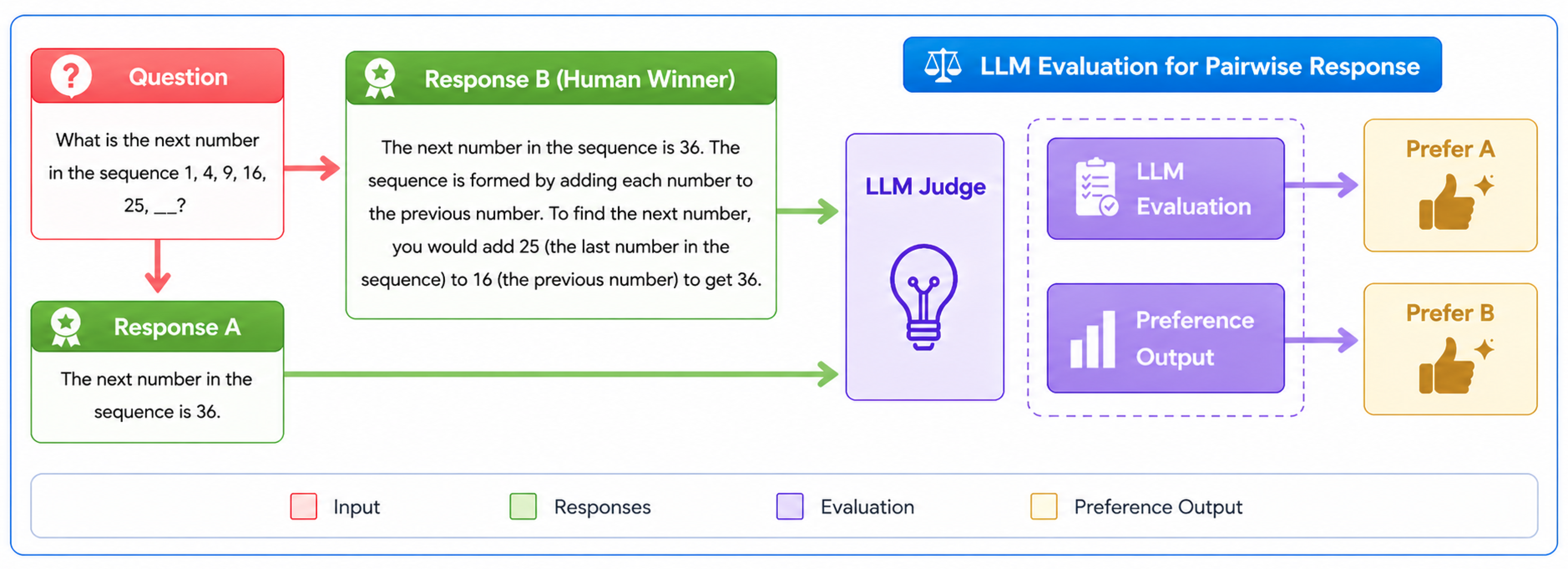}
\caption{Illustration of the pairwise LLM-as-a-judge setting. Given a question and two candidate responses, the LLM judge produces an initial preference between the responses, which may differ from the human preference.}
\label{fig:data_example}
\end{figure}

\newpage

\begin{algorithm}[t]
\caption{Progressive PU Refinement Workflow}
\label{alg:progressive_pu_refinement}
\begin{algorithmic}[1]
\Require Pairwise dataset \(\mathcal{D}\), reward encoder \(E_{\mathrm{RM}}\), human-verification oracle

\State \(X \leftarrow \mathrm{BuildDiffFeatures}(\mathcal{D},E_{\mathrm{RM}})\)
\State \((\mathcal{P}_s,\mathcal{V}^{+},\mathcal{V}^{-},\mathcal{W},q,a,m) \leftarrow \mathrm{InitializeState}(\mathcal{D})\)

\For{\(t=0,\dots,T-1\)}
    \State \((z^{(t)},p^{(t)}) \leftarrow \mathrm{TrainEncoder}(X,q,a,m,\mathcal{V}^{+},\mathcal{V}^{-})\)
    \State \((\tilde q^{(t)},a) \leftarrow \mathrm{TrustUpdate}(z^{(t)},p^{(t)},q,a,m,\mathcal{P}_s,\mathcal{V}^{+},\mathcal{V}^{-})\)
    \State \((q,m) \leftarrow \mathrm{TransportUpdate}(\tilde q^{(t)},a,z^{(t)},\mathcal{P}_s,\mathcal{V}^{+},\mathcal{V}^{-})\)
    \State \((\mathcal{V}^{+},\mathcal{V}^{-},\mathcal{W}) \leftarrow \mathrm{SelectForVerification}(q,m,z^{(t)},p^{(t)},\mathcal{W})\)
    \If{\(\mathrm{StopCriterion}(q,m,\mathcal{V}^{+},\mathcal{V}^{-})\)}
        \State \textbf{break}
    \EndIf
\EndFor

\State \Return refined responsibilities \(q\), refined positive set \(\widehat{\mathcal{P}}=\{i:q_i\ge 1/2\}\), and verified sets \((\mathcal{V}^{+},\mathcal{V}^{-})\)

\end{algorithmic}
\end{algorithm}

\section{Details of methodology}
\subsection{Trainable encoder and representation learning}
\label{app:encoder_details}

The refinement algorithm takes \(\{x_i\}_{i=1}^n\) as fixed inputs and trains a task-specific encoder on top of these comparison features. At outer iteration \(t\), the encoder maps \(x_i\) into a latent representation
\[
z_i^{(t)}
=
W_2^{(t)}
\operatorname{GELU}
\left(
W_1^{(t)}x_i+b_1^{(t)}
\right)
+
b_2^{(t)}\in\R^{d_z}.
\]
Here, \(\operatorname{GELU}\) denotes the Gaussian Error Linear Unit activation function, defined as
\[
\operatorname{GELU}(x)=x\Phi(x),
\]
where \(\Phi(\cdot)\) is the cumulative distribution function of the standard normal distribution. A linear prediction head then produces a logit
\[
\ell_i^{(t)}
=
(w^{(t)})^\top z_i^{(t)}
+
b^{(t)},\quad p_i^{(t)}
=
\sigma(\ell_i^{(t)})
=
\frac{1}{1+\exp(-\ell_i^{(t)})}.
\]
We use \(p_i^{(t)}\) as a preliminary estimate of whether the LLM-induced preference is human-consistent. For similarity-based operations, we use
\[
\bar z_i^{(t)}
=
\operatorname{Norm}_{\epsilon}(z_i^{(t)})
=
\frac{z_i^{(t)}}{\|z_i^{(t)}\|+\epsilon},
\]
where \(\epsilon>0\) is fixed for numerical stability.

The encoder and prediction head are trained with verified supervision, confidence-aware soft supervision, local geometric regularization, and anchor-preserving regularization:
\[
\mathcal{L}_{\mathrm{enc}}^{(t)}
=
\mathcal{L}_{\mathrm{ver}}^{(t)}
+
\lambda_{\mathrm{soft}}\mathcal{L}_{\mathrm{soft}}^{(t)}
+
\lambda_{\mathrm{geo}}\mathcal{R}_{\mathrm{geo}}^{(t)}
+
\lambda_{\mathrm{anchor}}\mathcal{R}_{\mathrm{anchor}}^{(t)}.
\]

\paragraph{Verified supervision.}
For verified examples \(i\in\mathcal{V}^{+}\cup\mathcal{V}^{-}\), define the target $\widetilde y_i=\mathbf{1}\{i\in\mathcal{V}^{+}\}$. We use
\[
\mathcal{L}_{\mathrm{ver}}^{(t)}
=
\frac{
\sum_{i\in\mathcal{V}^{+}\cup\mathcal{V}^{-}}
\hat\omega_{\mathrm{ver}}
\operatorname{BCE}(\widetilde y_i,\ell_i^{(t)})
}{
\max(1,|\mathcal{V}^{+}\cup\mathcal{V}^{-}|)
},
\]
where \(\operatorname{BCE}\) denotes binary cross-entropy between a binary target \(y\in\{0,1\}\) and a logit \(\ell\):

\[
\operatorname{BCE}(y,\ell)=-y\log\sigma(\ell)-(1-y)\log\bigl(1-\sigma(\ell)\bigr),\qquad\hat\omega_{\mathrm{ver}}=1+\omega_{\mathrm{ver}}a_v.
\]
Since verified examples have fixed anchor confidence \(a_v\), this verified-example weight is constant across verified examples and iterations.

\paragraph{Confidence-aware soft supervision.}
We also train on high-confidence unverified examples using their current responsibilities as soft labels. Define the confidence of the current responsibility as $c_i^{(t)}=\max\{q_i^{(t)},1-q_i^{(t)}\}$.
Among examples that are not verified and not source positives, denoted by \(\mathcal{U}_{\mathrm{soft}}^{(t)}\), we select the high-confidence set
\[
\mathcal{H}_{\mathrm{soft}}^{(t)}
=
\{i\in\mathcal{U}_{\mathrm{soft}}^{(t)}:c_i^{(t)}\ge \delta_{\mathrm{soft}}^{(t)}\}, \quad \delta_{\mathrm{soft}}^{(t)}
=
\operatorname{Quantile}_{\tau_{\mathrm{soft}}^{(t)}}
(\{c_j^{(t)}:j\in\mathcal{U}_{\mathrm{soft}}^{(t)}\})
.
\]

The loss is
\[
\mathcal{L}_{\mathrm{soft}}^{(t)}
=
\frac{
\sum_{i\in\mathcal{H}_{\mathrm{soft}^{(t)}}}
\omega_i^{\mathrm{soft},(t)}
\operatorname{BCE}(q_i^{(t)},\ell_i^{(t)})
}{
\sum_{i\in\mathcal{H}_{\mathrm{soft}^{(t)}}}
\omega_i^{\mathrm{soft},(t)}
+\epsilon
},\quad \omega_i^{\mathrm{soft},(t)}
=
\frac{1}{2}
+
\frac{1}{2}a_i^{(t)}.
\]

\paragraph{Local geometric regularization.}
To encourage locally similar examples to have consistent predictions and representations, we use a precomputed neighborhood system. Specifically, we construct a \(k\)-nearest-neighbor graph in the fixed comparison-feature space \(\{x_i\}_{i=1}^n\) using cosine similarity. Let \(\mathcal{N}(i)\) denote the \(k\) nearest neighbors of example \(i\), and define
\[
A_{ij}
=
\frac{\exp(\tau_{\mathrm{geo}}\langle \bar x_i,\bar x_j\rangle)}
{\sum_{r\in\mathcal{N}(i)}\exp(\tau_{\mathrm{geo}}\langle \bar x_i,\bar x_r\rangle)+\epsilon},
\qquad j\in\mathcal{N}(i),
\]
where \(\bar x_i=x_i/(\|x_i\|+\epsilon)\). We define
\[
\mathcal{R}_{\mathrm{geo}}^{(t)}
=
\frac{1}{n}
\sum_{i=1}^n
\sum_{j\in\mathcal{N}(i)}
A_{ij}
\left[
(p_i^{(t)}-p_j^{(t)})^2
+
\eta_z
\|z_i^{(t)}-z_j^{(t)}\|^2
\right].
\]
The first term encourages nearby examples to have similar human-consistency scores, while the second term encourages smoothness of the learned latent representation. The coefficient \(\eta_z\ge 0\) controls the strength of the latent-space smoothness penalty.

\paragraph{Anchor-preserving regularization.}
Since the encoder is re-trained across outer iterations, we penalize large movement of high-confidence examples in latent space:
\[
\mathcal{R}_{\mathrm{anchor}}^{(t)}
=
\frac{1}{n}
\sum_{i=1}^n
a_i^{(t)}
\|z_i^{(t)}-z_i^{(t-1)}\|^2.
\]
This term is omitted for \(t=0\). It stabilizes the learned representation around reliable anchors. Hence, the encoder parameters \(\Theta^{(t)}\) are updated by minimizing $\mathcal{L}_{\mathrm{enc}}^{(t)}(\Theta;q^{(t)},a^{(t)},z^{(t-1)})$.

\subsection{Trust update}
\label{app:trust_update_details}
After encoder training, we use the learned representation and model score to update the refinement state. The trust update combines four signals: the encoder score \(p_i^{(t)}\), local consistency \(r_i^{\mathrm{loc},(t)}\), anchor support \(r_i^{\mathrm{anc},(t)}\), and the previous transport inflow \(m_i^{(t)}\).

\paragraph{Local consistency evidence.}
Using the neighborhood weights \(A_{ij}\), we define
\[
r_i^{\mathrm{loc},(t)}
=
\Pi_{[0,1]}
\left(
1-
\sum_{j\in\mathcal{N}(i)}
A_{ij}
\left|
p_i^{(t)}-p_j^{(t)}
\right|
\right),
\]
where \(\Pi_{[0,1]}(x)=\min\{1,\max\{0,x\}\}\) projects a scalar onto the interval \([0,1]\). This score is large when example \(i\)'s encoder score is consistent with those of its local neighbors.

\paragraph{Anchor-support evidence.}
We compare each example with positive and negative anchors in the learned latent space. Let
\[
\mathcal{A}_{+}^{(t)}
=
\mathcal{P}_s\cup\mathcal{V}^{+},
\qquad
\mathcal{A}_{-}^{(t)}
=
\mathcal{V}^{-}.
\]
Define the normalized positive and negative prototypes as
\[
\mu_+^{(t)}
=
\operatorname{Norm}_{\epsilon}
\left(
\frac{1}{|\mathcal{A}_{+}^{(t)}|}
\sum_{j\in\mathcal{A}_{+}^{(t)}}
\bar z_j^{(t)}
\right),
\qquad
\mu_-^{(t)}
=
\operatorname{Norm}_{\epsilon}
\left(
\frac{1}{|\mathcal{A}_{-}^{(t)}|}
\sum_{j\in\mathcal{A}_{-}^{(t)}}
\bar z_j^{(t)}
\right),
\]
The anchor-support evidence is
\[
r_i^{\mathrm{anc},(t)}
=
\operatorname{Scale}
\left(
\left\langle \bar z_i^{(t)},\mu_+^{(t)}\right\rangle
-
\left\langle \bar z_i^{(t)},\mu_-^{(t)}\right\rangle
\right).
\]
Here \(\operatorname{Scale}(\cdot)\) denotes min--max scaling across all examples in the current round, i.e.,
\(\operatorname{Scale}(s_i)=(s_i-\min_k s_k)/(\max_k s_k-\min_k s_k+\epsilon)\)
for scores \(\{s_k\}_{k=1}^n\). A larger \(r_i^{\mathrm{anc},(t)}\) means that example \(i\) is closer to the positive-anchor direction than to the negative-anchor direction.

\paragraph{Trust-state update.}
The trust module combines encoder evidence, local evidence, anchor evidence, and transport evidence into a single trust logit:
\[
u_i^{(t)}
=
\lambda_p
\log
\frac{p_i^{(t)}}{1-p_i^{(t)}}
+
\lambda_{\mathrm{loc}}
\left(2r_i^{\mathrm{loc},(t)}-1\right)
+
\lambda_{\mathrm{anc}}
\left(2r_i^{\mathrm{anc},(t)}-1\right)
+
\lambda_m
\left(2m_i^{(t)}-1\right)
+
\beta_0,
\]
where \(\lambda_p,\lambda_{\mathrm{loc}},\lambda_{\mathrm{anc}},\lambda_m\) weight the four evidence sources, and \(\beta_0\) is an intercept term controlling the overall conservativeness of the update.

We then apply source and verified constraints:
\[
\begin{aligned}
\tilde q_i^{(t)}
&=
\begin{cases}
1, & i\in\mathcal{P}_s\cup\mathcal{V}^{+},\\
0, & i\in\mathcal{V}^{-},\\
\sigma(u_i^{(t)}), & \text{otherwise},
\end{cases}
&
a_i^{(t+1)}
&=
\begin{cases}
a_s, & i\in\mathcal{P}_s,\\
a_v, & i\in\mathcal{V}^{+}\cup\mathcal{V}^{-},\\
\tilde a_{i}^{(t+1)}, & \text{otherwise}.
\end{cases}
\end{aligned}
\]
where $\tilde a_{i}^{(t+1)}
=
\Pi_{[0,1]}
\left(
\gamma_q \tilde q_i^{(t)}
+
\gamma_{\mathrm{loc}}r_i^{\mathrm{loc},(t)}
\right).$ Here \(\gamma_q\) and \(\gamma_{\mathrm{loc}}\) control the contributions of each parts. The resulting pair \((\tilde q^{(t)},a^{(t+1)})\) is passed to the transport reassignment module, which produces the next responsibility \(q^{(t+1)}\).

\subsection{Conservative transport-based inflow update}
\label{app:transport_details}

After the trust update, we compute transport-based inflow scores for uncertain unlabeled examples. In this subsection, anchors are indexed by \(i\), while uncertain examples in \(\mathcal{M}^{(t)}\) are indexed by \(j\). Here, \(
\mathcal{M}^{(t)}
=
\{j\in\mathcal{U}:\tilde q_j^{(t)}\in I_{\mathrm{amb}}\}
\)
denotes the uncertain set, where \(I_{\mathrm{amb}}\subset(0,1)\) selects examples with non-decisive responsibilities.  Motivated by partial optimal transport, which allows only a budgeted portion of mass to be matched, we use a sparse, budgeted propagation rule rather than solving a full optimal transport problem~\citep{chapel2020partial,peyre2019computational}. This gives a positive inflow \(m_j^{+,(t)}\) from reliable positive anchors and a negative inflow \(m_j^{-,(t)}\) from verified negative anchors. Only \(m_j^{+,(t)}\) is carried forward as \(m_j^{(t+1)}\); \(m_j^{-,(t)}\) is used only in the current reassignment step. At iteration \(t\), the positive and negative anchor pools are
\[
\mathcal{B}_{+}^{(t)}
=
\mathcal{P}_s
\cup
\mathcal{V}^{+}
\cup
\{i\in\mathcal{U}:\tilde q_i^{(t)}\ge 1/2,\ a_i^{(t+1)}\ge\kappa_t\}, \quad \mathcal{B}_{-}^{(t)}=\mathcal{V}^{-}
\]
where \(\kappa_t\) is a confidence threshold.

For each uncertain example \(j\in\mathcal{M}^{(t)}\), we first compute a local relation-consistency score
\[
c_j^{(t)}
=
\Pi_{[0,1]}
\left(
1-
\left|
\tilde q_j^{(t)}
-
\frac{1}{|\mathcal{N}(j)|}
\sum_{r\in\mathcal{N}(j)}
\tilde q_r^{(t)}
\right|
\right).
\]
This score is large when the trust-updated responsibility of \(j\) agrees with its neighbourhood, and it is used to downweight transport into locally inconsistent examples.

We compute the inflows by propagating a bounded amount of mass from positive anchors to uncertain examples. For a positive anchor \(i\in\mathcal{B}_{+}^{(t)}\) and an uncertain example \(j\in\mathcal{M}^{(t)}\), we first compute their latent cosine similarity
\(
S_{ij}^{+,(t)}
=
\left\langle
\bar z_i^{(t)},\bar z_j^{(t)}
\right\rangle .
\)
For each anchor \(i\), we keep only the most similar uncertain examples and assign them a transport affinity
\[
\Gamma_{ij}^{+,(t)}
=
\frac{
\exp(\tau_{\mathrm{tr}}S_{ij}^{+,(t)})
\left(1-\lambda_{\mathrm{rel}}+\lambda_{\mathrm{rel}}c_j^{(t)}\right)
}{
\sum_{j'}
\exp(\tau_{\mathrm{tr}}S_{ij'}^{+,(t)})
\left(1-\lambda_{\mathrm{rel}}+\lambda_{\mathrm{rel}}c_{j'}^{(t)}\right)
+\epsilon
},
\]
where the sum is over the retained uncertain examples for anchor \(i\). Thus, \(\Gamma_{ij}^{+,(t)}\) is large when \(j\) is close to \(i\) in latent space and is locally consistent with its neighbors. The parameter \(\tau_{\mathrm{tr}}\) controls how strongly transport favors the most similar targets, while \(\lambda_{\mathrm{rel}}\) controls the contribution of local relation consistency. We assign source masses separately for positive and negative anchors:
\[
\pi_i^{+,(t)}
=
\frac{
a_i^{(t+1)}
}{
\sum_{r\in\mathcal{B}_{+}^{(t)}}a_r^{(t+1)}+\epsilon
},
\quad i\in\mathcal{B}_{+}^{(t)},
\qquad
\pi_i^{-,(t)}
=
\frac{1}{|\mathcal{B}_{-}^{(t)}|+\epsilon},
\quad i\in\mathcal{B}_{-}^{(t)}.
\]
Positive anchors are weighted by anchor confidence, while negative anchors are weighted uniformly. The negative affinity \(\Gamma_{ij}^{-,(t)}\) is defined analogously by replacing \(\mathcal{B}_{+}^{(t)}\) with \(\mathcal{B}_{-}^{(t)}\). Given budgets \(B_t^{+}\) and \(B_t^{-}\), we aggregate anchor-to-target transported mass:
\[
\begin{aligned}
m_j^{+,(t)}=
\sum_{i\in\mathcal{B}_{+}^{(t)}}
B_t^{+}\,\pi_i^{+,(t)}\Gamma_{ij}^{+,(t)},\quad
m_j^{-,(t)}=
\sum_{i\in\mathcal{B}_{-}^{(t)}}
B_t^{-}\,\pi_i^{-,(t)}\Gamma_{ij}^{-,(t)}.
\end{aligned}
\]
The positive inflow \(m_j^{+,(t)}\) supports increasing the responsibility of \(j\), whereas the negative inflow \(m_j^{-,(t)}\) supports decreasing it.
Thus, \(m_j^{+,(t)}\) pushes \(j\) toward the positive group, while \(m_j^{-,(t)}\) pushes it away from the positive group.

Finally, the responsibility is updated by applying the two inflows in opposite directions:
\[
q_{j}^{+,(t)}
=
\tilde q_j^{(t)}
+
\eta_+m_j^{+,(t)}
\left(1-\tilde q_j^{(t)}\right),
\qquad
q_j^{(t+1)}
=
q_{j}^{+,(t)}
-
\eta_-m_j^{-,(t)}q_{j}^{+,(t)}.
\]
The first update increases responsibility using positive inflow, while the second decreases it using negative inflow. For examples not in \(\mathcal{M}^{(t)}\), we keep \(q_j^{(t+1)}=\tilde q_j^{(t)}\).

Notably, only the positive inflow is carried forward to the next iteration, i.e., \(m_j^{(t+1)}=m_j^{+,(t)}\), while the negative inflow \(m_j^{-,(t)}\) is used only within the current transport update.

\subsection{Selective verification and stopping details}
\label{app:verification_stopping_details}

After transport update, the candidate set is
\[
\mathcal{C}^{(t)}
=
\mathcal{U}
\setminus
\left(
\mathcal{V}^{+}
\cup
\mathcal{V}^{-}
\cup
\mathcal{W}^{(t)}
\right),
\]
where \(\mathcal{W}^{(t)}\) contains examples already selected for human review but whose labels have not yet been incorporated.

For each candidate \(i\in\mathcal{C}^{(t)}\), we compute
\[
U_i^{(t)}=1-\left|2q_i^{(t+1)}-1\right|,
\quad
I_i^{(t)}=p_i^{(t)}(1-p_i^{(t)})(\|z_i^{(t)}\|^2+1),
\quad
D_i^{(t)}=1-\max_{r\in\mathcal{V}^{(t)}}\langle \bar z_i^{(t)},\bar z_r^{(t)}\rangle .
\]
Here \(\mathcal{V}^{(t)}=\mathcal{V}^{+}\cup\mathcal{V}^{-}\) denotes the currently verified set, and we set \(D_i^{(t)}=1\) when \(\mathcal{V}^{(t)}=\varnothing\). The final query score is
\[
H_i^{(t)}
=
\omega_U\operatorname{Scale}\!\left(U_i^{(t)}\right)
+
\omega_I\operatorname{Scale}\!\left(I_i^{(t)}\right)
+
\omega_D\operatorname{Scale}\!\left(D_i^{(t)}\right)
+
\omega_M\operatorname{Scale}\!\left(m_i^{(t+1)}\right),
\]
where \(\operatorname{Scale}(\cdot)\) is computed over \(\mathcal{C}^{(t)}\). We select the top \(b\) candidates with the largest \(H_i^{(t)}\) for human verification. If the returned human label agrees with the LLM judge, the example is added to \(\mathcal{V}^{+}\); otherwise, it is added to \(\mathcal{V}^{-}\). If the label is not yet available, the example is kept in \(\mathcal{W}^{(t+1)}\).

For stopping, we monitor
\[
\Delta_q^{(t)}
=
\frac{1}{n}\sum_{i=1}^{n}
|q_i^{(t+1)}-q_i^{(t)}|,
\]
\[
\Delta_{\mathrm{flip}}^{(t)}
=
\frac{1}{n}
\sum_{i=1}^{n}
\mathbf{1}
\left\{
\mathbf{1}(q_i^{(t+1)}\ge 1/2)
\neq
\mathbf{1}(q_i^{(t)}\ge 1/2)
\right\},
\]
\[
\Delta_m^{(t)}
=
\frac{1}{n}\sum_{i=1}^{n}
|m_i^{(t+1)}-m_i^{(t)}|,
\qquad
\Delta_{\mathrm{ver}}^{(t)}
=
|\mathcal{V}^{(t+1)}|-|\mathcal{V}^{(t)}|.
\]
The rule-based stopping criterion checks
\[
\Delta_q^{(t)}\le \epsilon_q,\qquad
\Delta_{\mathrm{flip}}^{(t)}\le \epsilon_{\mathrm{flip}},\qquad
\Delta_m^{(t)}\le \epsilon_m,\qquad
\Delta_{\mathrm{ver}}^{(t)}\le \epsilon_{\mathrm{ver}}.
\]
The algorithm stops when at least three of the four conditions hold for the required number of consecutive rounds, after a minimum number of iterations. The loop also terminates when the maximum iteration count or annotation budget is reached.

\newpage

\section{Additional theoretical results}\label{app:mass-conservation}

Possible additions include:
\begin{itemize}[leftmargin=1.5em]
    \item sensitivity analysis with respect to noisy initial PU memberships,
    \item convergence properties of the iterative membership refinement,
    \item error decomposition for fixed-group versus learned-group auditing.
\end{itemize}

\subsection{Mass conservation and bias reduction of $\TT$}

Let $m^{(t)}_+$ denote the positive inflow vector after the transport step at iteration $t$. Because of top--$K_{\text{tr}}$ truncation, each $m^{t)}_+$ is sparse and bounded in Euclidean norm.

\begin{lemma}[Mass conservation under budget]
\label{lem:mass-cons}
Define $C_B > 0$ such that $\|m^{+,(t)}\| \leq C_B$
for all $t$. Since the transported mass is controlled by the budget, this constant can be chosen uniformly as $C_B \leq \sup_t B_t^+$. Moreover, the total positive transport mass equals the budget:
\[
\sum_{i \in \mathcal{B}_+^{(t)}} \sum_{j \in \mathcal{M}^{(t)}} T_{ij}^{+,(t)}
= B_t^+ \sum_{i \in \mathcal{B}_+^{(t)}} \pi_i^{+,(t)}
\sum_{j \in \mathcal{M}^{(t)}} \Gamma_{ij}^{+,(t)}
= B_t^+,
\]
where $\pi_i^{+,(t)}$ is a probability distribution over the positive anchor pool $\mathcal{B}_+^{(t)}$, $\Gamma_{ij}^{+,(t)}$ is the retained target affinity over $\mathcal{M}^{(t)}$, and $T^{+,(t)}_{ij}$ denotes the positive transport mass from anchor $i$ to uncertain example $j$.

\end{lemma}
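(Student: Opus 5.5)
The plan is to compute everything directly from the definitions of the transport step in Appendix~\ref{app:transport_details}, with $T_{ij}^{+,(t)}:=B_t^{+}\pi_i^{+,(t)}\Gamma_{ij}^{+,(t)}$, so that $m_j^{+,(t)}=\sum_{i\in\mathcal{B}_+^{(t)}}T_{ij}^{+,(t)}$. The proof then has two parts: the mass identity, and the uniform bound $\|m^{+,(t)}\|\le C_B$.

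\textbf{Step 1: mass identity.} The first equality is just the definition of $T^{+,(t)}$ followed by exchanging the two finite sums. For the second equality I will use two normalizations. First, for each anchor $i$, the retained affinities satisfy $\sum_{j\in\mathcal{M}^{(t)}}\Gamma_{ij}^{+,(t)}=1$. This holds because $\Gamma_{ij}^{+,(t)}$ is a normalized softmax weight over the retained targets, and non-retained targets get zero affinity. Second, $\sum_{i}\pi_i^{+,(t)}=1$, since $\pi^{+,(t)}$ normalizes the anchor confidences $a_i^{(t+1)}$. Substituting these gives $B_t^+\sum_i\pi_i^{+,(t)}\cdot 1=B_t^+$.

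\textbf{Main obstacle.} Both normalizations carry an $\epsilon$ in the denominator, so as written they give only $1-O(\epsilon)$ rather than exactly $1$. I will handle this in one of two ways:
\begin{itemize}
\item read the statement in the $\epsilon\to0$ idealization, i.e.\ with exact normalization, which is clearly what is intended; or
\item state the honest version, namely that the total mass lies in $[B_t^+(1-\epsilon')^2,\,B_t^+]$, where $\epsilon'$ comes from the two denominators.
\end{itemize}
I will also assume the degenerate cases are excluded: $\mathcal{B}_+^{(t)}$ and $\mathcal{M}^{(t)}$ must be nonempty, and at least one anchor confidence must be positive. Otherwise both sides of the identity are $0$ rather than $B_t^+$.

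\textbf{Step 2: norm bound.} All $T_{ij}^{+,(t)}\ge0$, because the $a_i$ are nonnegative and the affinities are exponentials times nonnegative factors. Hence $m^{+,(t)}\ge0$ entrywise. Using Step 1,
\[
\|m^{+,(t)}\|\le\|m^{+,(t)}\|_1=\sum_j m_j^{+,(t)}=\sum_{i,j}T_{ij}^{+,(t)}\le B_t^+.
\]
The $\epsilon$ terms only decrease the mass, so this inequality holds exactly even without the idealization. Taking $C_B:=\sup_t B_t^+$ then gives a uniform bound. This supremum is finite because Assumption~\ref{ass:budget}, applied to the positive budget $B_t^+\le B_t$, forces $B_t\to0$. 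Sparsity from the top-$K_{\mathrm{tr}}$ truncation is not needed for the bound; it only describes the support.
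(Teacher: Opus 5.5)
Your proof is correct and follows the same route as the paper's. You expand $\|m^{+,(t)}\|_1$ via $T_{ij}^{+,(t)}=B_t^+\pi_i^{+,(t)}\Gamma_{ij}^{+,(t)}$, exchange the finite sums, and use nonnegativity (which tacitly needs $\lambda_{\mathrm{rel}}\in[0,1]$, as does the paper) with $\|m^{+,(t)}\|\le\|m^{+,(t)}\|_1\le B_t^+$ to take $C_B=\sup_t B_t^+$.

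You are more careful than the paper, which never checks $\sum_i\pi_i^{+,(t)}=\sum_j\Gamma_{ij}^{+,(t)}=1$. Your note that the $\epsilon$ in the denominators makes the equality hold only up to a $1-O(\epsilon)$ factor, while the upper bound used later is exact, fills a step the paper leaves implicit, as does your check that $\sup_t B_t^+<\infty$.
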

To relate transport to label geometry, we assume that nearby points in the learned latent space share similar human--consistency labels.

\begin{assumption}\label{ass:geo_anchor}
    Let $y_i\in\{0,1\}$ denotes the true human--consistency labels defined in Section~\ref{sec:problem_formulation}. There exists a constant $L_{\text{loc}}>0$ such that for all $i,j$,
    \begin{equation*}  
    |y_i - y_j| \leq L_{\text{loc}} \|\bar{z}_i^{(t)} - \bar{z}_j^{(t)}\|.
    \end{equation*}
    where $\bar{z}_i^{(t)}$ is the normalized latent representation at iteration $t$.
\end{assumption}
\begin{remark}
    This is a latent--space analogue of the standard cluster assumption in semi--supervised learning and manifold hypothesis~\citep{bengio2013representation}; samples that cluster together in the encoder's representation space should share same human--consistency label. It is reasonable when the encoder is sufficiently expressive to capture the semantic structure of the LLM--judge task.
\end{remark}

Under this assumption, we can show that conservative positive transport reduce a surrogate consistency error in expectation.

\begin{theorem}\label{thm:bias-reduction}
    Let $y\in\{0,1\}^n$ is the real human--consistency label and suppose Assumption~\ref{ass:geo_anchor} holds. Consider the update that applies only the positive inflow to uncertain examples, with $\eta_-=0$, so that 
    $$
    q^{(t+1)}_j=\tilde{q}_j^{(t)}+\eta_+m^{(t)}_{+,j}(1-\tilde q^{(t)}_j),
    $$
    for $j\in \mathcal{M}^{(t)}$ and $q^{(t+1)}_j=\tilde q^{(t)}_j$ otherwise. Then the expected squared error satisfies
    \begin{align*}
        &\mathbb{E}\|q^{(t+1)}-y\|\leq\mathbb{E}\|\bar{q}^{(t)}-y\|-\eta_+\cdot \mathrm{Gap}^{(t)}_++\eta^2\bar{B}^2,
    \end{align*}
    where
    $$
    \mathrm{Gap}^{(t)}_+=2\mathbb{E}\sum_{j\in\mathcal{M}^{(t)}}m_j^{+,(t)}(1-\bar{q}_j^{(t)})(y_j-\bar{q}_j^{(t)}
    $$
    and $\bar B$ is a constant depending on the transport budget and sparsity. When the affinity weights align positive anchors with high--$y$ targets (guaranteed by Assumption~\ref{ass:geo_anchor}),
    $\mathrm{Gap}_+^{(t)} > 0$ and the the positive transport step strictly reduces the surrogate squared error, up to a second--order term controlled by $\eta^2_+\bar{B}^2$.

\end{theorem}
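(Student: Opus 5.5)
The plan is to read the statement as a one-step expansion of the squared error, $\|\cdot\|^2$, which is what the phrase ``expected squared error'' and the form of $\mathrm{Gap}_+^{(t)}$ indicate. We identify $\bar q^{(t)}$ with the trust-updated vector $\tilde q^{(t)}$ and the constant $\eta$ in the last term with $\eta_+$. Coordinates $j\notin\mathcal{M}^{(t)}$ are left unchanged by the update, so only $j\in\mathcal{M}^{(t)}$ contribute to the difference. For each such $j$ we write the update as $q_j^{(t+1)}-y_j=(\tilde q_j^{(t)}-y_j)+\eta_+ d_j$, with the increment $d_j:=m_j^{+,(t)}(1-\tilde q_j^{(t)})$. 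Expanding the square gives
\[
(q_j^{(t+1)}-y_j)^2=(\tilde q_j^{(t)}-y_j)^2-2\eta_+ d_j\,(y_j-\tilde q_j^{(t)})+\eta_+^2 d_j^2 .
\]
Summing over $j\in\mathcal{M}^{(t)}$ and taking expectations, the cross terms add up to exactly $-\eta_+\mathrm{Gap}_+^{(t)}$. The result is an exact identity, with no approximation, between the new and old squared errors, the gap term, and the quadratic term $\eta_+^2\sum_j d_j^2$.

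The next step is to bound the quadratic term by a constant. Since $\tilde q_j^{(t)}\in[0,1]$, we have $0\le 1-\tilde q_j^{(t)}\le 1$, so $\sum_j d_j^2\le \|m^{+,(t)}\|^2$. Lemma~\ref{lem:mass-cons} gives $\|m^{+,(t)}\|\le C_B\le\sup_t B_t^+$. An alternative route uses the facts that $m^{+,(t)}$ is entrywise nonnegative and sums to at most $B_t^+$, which gives $\|m^{+,(t)}\|_2\le\|m^{+,(t)}\|_1\le B_t^+$. Either way we can take $\bar B:=\sup_t B_t^+$, or a sharper version that uses the top-$K_{\mathrm{tr}}$ sparsity. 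This proves the displayed inequality, with equality holding before the quadratic term is bounded.

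The main obstacle is the qualitative claim that $\mathrm{Gap}_+^{(t)}>0$. I would argue it from Assumption~\ref{ass:geo_anchor} as follows. Mass reaches $j$ only from anchors $i\in\mathcal{B}_+^{(t)}$ through the retained top-similar targets, so $\bar z_j$ is close to some positive anchor $\bar z_i$. Verified anchors satisfy $y_i=1$. Whenever $\|\bar z_i-\bar z_j\|<1/L_{\mathrm{loc}}$, the Lipschitz bound $|y_i-y_j|<1$ forces $y_j=y_i=1$, because the labels are binary. Then $y_j-\tilde q_j^{(t)}=1-\tilde q_j^{(t)}>0$ on $\mathcal{M}^{(t)}$, since $I_{\mathrm{amb}}\subset(0,1)$. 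Every summand in $\mathrm{Gap}_+^{(t)}$ is therefore nonnegative, and the sum is strictly positive when some inflow is positive.

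This step needs two side conditions. First, the transport affinity $\Gamma^+_{ij}$ must be supported within the $1/L_{\mathrm{loc}}$ radius. Second, the pseudo-positive anchors in $\mathcal{B}_+^{(t)}$ with $\tilde q_i\ge 1/2$ and $a_i\ge\kappa_t$ must actually have $y_i=1$. I would state both as implicit hypotheses carried by the phrase ``affinity weights align positive anchors with high-$y$ targets''. With those in place, the second-order term is dominated whenever $\eta_+<\mathrm{Gap}_+^{(t)}/\bar B^2$, which gives a strict decrease.
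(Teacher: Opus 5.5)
Your proposal is correct and follows essentially the same route as the paper. Both expand the squared error coordinatewise on $\mathcal{M}^{(t)}$ (the paper factors $a^2-b^2=(a-b)(a+b)$ instead of expanding $(a+b)^2$, but the computation is the same), collect the cross terms into $-\eta_+\mathrm{Gap}_+^{(t)}$, and bound the quadratic term by $\eta_+^2\bar B^2$ using $0\le 1-\tilde q_j^{(t)}\le 1$ and the mass bound of Lemma~\ref{lem:mass-cons}, under the same readings of $\|\cdot\|$ as a squared norm, $\bar q^{(t)}=\tilde q^{(t)}$, and $\eta=\eta_+$. The paper's proof only asserts $\mathrm{Gap}_+^{(t)}>0$, so your positivity argument, and your point that it needs the extra hypotheses that positive affinity is supported within radius $1/L_{\mathrm{loc}}$ and that every anchor in $\mathcal{B}_+^{(t)}$ is truly positive, are a correct sharpening rather than a departure.
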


\section{Proof of the theoretical results}

\subsection{Proof of lemma}
\begin{proof}[Proof of Lemma~\ref{lemma:trust:logit}]

The logit map $\psi(p) = \log(p/(1-p))$ has derivative
$\psi'(p) = 1/(p(1-p))$. On $[\epsilon_p, 1-\epsilon_p]$,
$\psi'(p)$ attains its maximum at the endpoints, so
$\sup\psi'(p) = 1/(\epsilon_p(1-\epsilon_p))$. By the mean value theorem (MVT),
\[
\Big|\log\frac{p_i^{(t+1)}}{1-p_i^{(t+1)}} - \log\frac{p_i^{(t)}}{1-p_i^{(t)}}\Big|
=|\psi'(p)|\cdot|p_i^{(t+1)}-p_i^{(t)}|\leq \frac{1}{\epsilon_p(1-\epsilon_p)} |\Delta p|.
\]
The full expression for the trust logit $u_i^{(t)}$ becomes
\[
u_i^{(t)}=\lambda_p\psi(p_i^{(t)})+2\lambda_{\text{loc}}\Delta r_i^{\lambda_{\text{loc}}}+2\lambda_{\text{anc}}\Delta r_i^{\lambda_{\text{anc}}}+2\lambda_{m}\Delta m_i
\]
Applying the triangle inequality and substituting the MVT bound, we get
\begin{align*}
    |u_i^{(t+1)}-u^{(t)}_i|&\leq \lambda_{p}|\psi(p_i^{(t+1)}-\psi(p_i^{(t)})|+2\lambda_{\text{loc}}|\Delta r_i^{\lambda_{\text{loc}}}|+2\lambda_{\text{anc}}|\Delta r_i^{\lambda_{\text{anc}}}|+2\lambda_{m}|\Delta m_i|\\
    &\leq \frac{\lambda_p}{\epsilon_p(1-\epsilon_p)}|\Delta p_i|+2\lambda_{\text{loc}}|\Delta r_i^{\lambda_{\text{loc}}}|+2\lambda_{\text{anc}}|\Delta r_i^{\lambda_{\text{anc}}}|+2\lambda_{m}|\Delta m_i|.
\end{align*}
This completes the proof for component-wise Lipschitz continuity of the trust logit.
\end{proof}

\begin{proof}[Proof of Lemma~\ref{lemma:suff-decrease}]
    By Assumption~\ref{ass:loss-reg} and the descent lemma,
\[
\lenc^{(t)}(\Theta^{(t+1)})
\leq
\lenc^{(t)}(\Theta^{(t)})
+ \langle \nabla_\Theta\lenc^{(t)}(\Theta^{(t)}), \Theta^{(t+1)} - \Theta^{(t)}\rangle
+ \tfrac{L_\Theta}{2}\|\Theta^{(t+1)} - \Theta^{(t)}\|^2.
\]
With $\Theta^{(t+1)} = \Theta^{(t)} - \eta g$ and
$g := \nabla_\Theta\lenc^{(t)}(\Theta^{(t)})$,
\[
\Theta^{(t+1)} - \Theta^{(t)} = -\eta g,
\quad
\|\Theta^{(t+1)} - \Theta^{(t)}\|^2 = \eta^2\|g\|^2.
\]
Substituting,
\[
\lenc^{(t)}(\Theta^{(t+1)})
\leq
\lenc^{(t)}(\Theta^{(t)}) - \eta\|g\|^2 + \tfrac{L_\Theta\eta^2}{2}\|g\|^2
= \lenc^{(t)}(\Theta^{(t)}) - \eta\big(1 - \tfrac{L_\Theta\eta}{2}\big)\|g\|^2.
\]
For $)<\eta \leq 2/L_\Theta$, the bracket is nonnegative, hence the result follows with $c_{\Theta}=\eta \left(1-\frac{L_{\Theta}\eta}{2} \right)$; for $\eta = 1/L_\Theta$,
the bracket equals $1/2$, giving $c_\Theta = 1/(2L_\Theta)$.
\end{proof}

\begin{proof}[Proof of Lemma~\ref{lem:lyapunov}]
   For $c_\Theta$, in order to fulfill the requirement of $L$--smooth, we define that for any $\theta^{(t)},\theta^{\prime}=\theta^{(t+1)}=\theta^{(t)}-\eta g$ where $g=\nabla f(\theta^{(t)})$,
   \begin{align*}
       f(\theta^{(t+1)})&\leq f(\theta^{(t)})+\langle g,\theta^{(t+1)}-\theta^{(t)}\rangle+\frac{L}{2}\|\theta^{(t+1)}-\theta^{(t)}\|^2\\
       &\leq f(\theta^{(t)})-\eta\|g\|^2+\frac{L\eta^2}{2}\|g\|^2=f(\theta^{(t)})-\eta\left(1-\frac{L\eta}{2}\right)\|g\|^2
   \end{align*}
   In order to let the gradient remain negative, stepsize $\eta$ should satisfy $1-L\eta/2\geq0 \Longleftrightarrow\eta<2/L$. We set $\eta=1/L$, the equation above can be derived as
   \begin{equation}
       f(\theta^{(t+1)})\leq f(\theta^{(t)}-\frac{1}{2L}\|g\|^2
   \end{equation}
   Thus, $c_\Theta=1/(2L_\Theta).~\qed$.

   Given $\tilde q_i^{(t)}=\sigma(u_i^{(t)})$, we obtain $\bar q_i^{(t)}$ after applying hard constraints. According to Assumption~\ref{ass:geo_anchor},
   \begin{equation*}
       \|\bar q^{(t+1)}-\bar q^{(t)}\|\leq L_{\mathcal{D},m}\|m^{(t+1)}-m^{(t)}\|
   \end{equation*}
   Given the rule of transport update,$q^{(t,+)}_j=\bar q^{(t)}_j+\eta_+m_j^{+,(t)}(1-\bar q_j^{(t)})$, we get 
   \begin{align*}
       q^{(t,+)}_j-\bar q^{(t)}_j&=\eta_+m_j^{+,(t)}(1-\bar q_j^{(t)})\\
       \Rightarrow~|q^{(t,+)}_j-\bar q^{(t)}_j|&\leq \eta_+m_j^{+,(t)}.\qquad(\because \bar q_j^{(t)}\in[0, 1]~\mathrm{and}~m_j^{+,(t)}\in[0, 1]) \\
       \Rightarrow~\|q^{(t,+)}_j-\bar q^{(t)}_j\|&\leq \eta_+\|m^{+,(t)}\|.
   \end{align*}
   Therefore,
   \begin{align*}
       \|q^{(t,+)}-q^{(t)}\|&\leq \|q^{(t+1)}-\bar q^{(t)}\|+\|\bar q^{(t)}-q^{(t)}\|\\
       &\leq L_{\mathcal{D},m}\|m^{(t)}-m^{(t-1)}\|+L_{\mathcal{D},m}\|p^{(t)}-p^{(t-1)}\|.
   \end{align*}
   where $L_{\mathcal{D},m}$ is a Lipschitz constant for $\mathcal{D}_{\text{trust}}$ vs. $p$.

   Among Lyapunov function $V^{(t)}$, the trust and transport part is 
   \begin{align*}
       V_q^{(t)}:=\lambda_q\|q^{(t)}-q^{(t-1)}\|\Rightarrow V_q^{(t+1)}-V_q^{(t)}=\lambda_q(\|q^{(t+1)}-q^{(t)}\|-\|q^{(t)}-q^{(t-1)}\|)
   \end{align*}
   In order to ensure $-c_q\|q^{(t+1)}-q^{(t)}\|$ provides the negative gradient, we use Young inequality to derive the upper bound for
   \begin{equation*}
       L_q\|q^{(t+1)}-q^{(t)}\|\leq \frac{L_q^2}{2\epsilon_q}+\frac{\epsilon_q}{2}L_q\|q^{(t+1)}-q^{(t)}\|
   \end{equation*} 
   With proper $\epsilon_q=\{\epsilon_q\mid\epsilon_q/2<\lambda_q\}$,
   \begin{equation*}
       \lambda_q\|q^{(t+1)}-q^{(t)}\|-\frac{\epsilon_q}{2}\|q^{(t+1)}-q^{(t)}\|=(\lambda_q-\frac{\epsilon_q}{2})\|q^{(t+1)}-q^{(t)}\|
   \end{equation*}
   Therefore,
   \begin{equation}
       c_q:=\lambda_q-\frac{\epsilon_q}{2}>0.~\qed
   \end{equation}

   \textbf{Transport mass.} Given the definition of transport mass and transport inflow and the mass conservation statement in Lemma~\ref{lem:mass-cons}, we know the magnitude of $m^{+,(t)}_j$ is controlled by the transport budget $B_t$. The mass will be transported to next inflow state, $m^{(t+1)}_j=m^{+,(t)}_j$, thus
   \begin{align*}
       &m^{(t+1)}-m^{(t)}=m^{+,(t)}-m^{+,(t-1)}\\
       \Rightarrow~&\|m^{(t+1)}-m^{(t)}\|\leq B_t-B_{t+1}.
   \end{align*}
   Back to Lyapunov function that 
   \begin{equation}
       V^{(t+1)}_m-V^{(t)}_m=\lambda_m\|m^{(t+1)}-m^{(t)}\|-\lambda_m\|m^{(t)}-m^{(t-1)}\|,
   \end{equation}
   similar to $c_q$ derivation, via Young inequality, we can get $c_m:=\lambda_m-\epsilon_m/2>0~\qed$.
\end{proof}

\begin{proof}[Proof of Lemma~\ref{lem:mass-cons}]
The mass identity follows by direct computation using the definition
in Section~\ref{app:transport_details}, the positive transport inflow to an uncertain target $j\in\mathcal{M}^{(t)}$ is given by:
\[
m^{+,(t)}_j=B^+_t\sum_{i\in\mathcal{B}^{(t)}_+}\pi_i^{+,(t)}\Gamma_{ij}^{+,(t)}.
\]
To verify mass conservation, we compute the $L_1$ norm of the inflow vector $m^{+,(t)}$ that 
\[
\|m^{+,(t)}\|_1=\sum_{j\in\mathcal{M}^{(t)}}m^{+,(t)}_j=B^+_t\sum_{i\in\mathcal{B}^{(t)}_+}\pi_i^{+,(t)}\left(\sum_{j\in\mathcal{M}^{(t)}}\Gamma_{ij}^{+,(t)}\right)
\]

Since the total transported mass is bounded
by $B_t^+$ and $m^{+,(t)}$ is nonnegative, we can bound the sequence by choosing $C_B=\sup_t B^+_t$
\[
\|m^{+,(t)}\| \le \|m^{+,(t)}\|_1 \leq C_B,
\]
completing the proof.

\end{proof}

\subsection{Proof of theorem}

\begin{proof}[Proof of Theorem~\ref{thm:trust-continuity}]
    {By Section~\ref{app:encoder_details}, $\tilde{q}_i^{(t)} = \sigma(u_i^{(t)})$. The sigmoid
satisfies $\sigma'(u) = \sigma(u)(1-\sigma(u)) \leq 1/4$, with equality
iff $u = 0$. Combining with Lemma~\ref{lemma:trust:logit},
\[
|\tilde{q}_i^{(t+1)} - \tilde{q}_i^{(t)}|
= |\sigma(u_i^{(t+1)}) - \sigma(u_i^{(t)})|
\leq \tfrac{1}{4}|u_i^{(t+1)} - u_i^{(t)}|.
\]
Substituting the bound from Lemma~\ref{lemma:trust:logit} yields the
result. The single-input constants follow by isolating each term.
}
\end{proof}
\begin{proof}[Proof of Theorem~\ref{thm:convergence}]
\hfill\par\noindent
    \textbf{Proof of (a).} Define $\tilde V^{(t)}:= V^{(t)}-\sum_{s<t}\mathcal{E}^{(s)}$. By Lemma~\ref{lemma:suff-decrease}, $\tilde V^{(t)}$ is monotone non--increasing. Given Assumption~\ref{ass:loss-reg} that $V^{(t)}\geq 0$, non--negativity of squared norms, and $\sum_s\mathcal{E}^{(s)}<\infty$, $\tilde V^{(t)}$ thus has a lower bounded. By Monotone Convergence Theorem~\citep{robbins1971convergence}, $\tilde V^{(t)} \to \tilde V^\star$, hence $V^{(t)} \to V^\star :=
\tilde V^\star + \sum_s \mathcal{E}^{(s)}$.

\textbf{Proof of (b).}
Telescoping~\eqref{eq:lyap-dec} from $t=0$ to $T-1$,
\[
\sum_{t=0}^{T-1}\big[c_\Theta\|\nabla_\Theta\lenc^{(t+1)}(\Theta^{(t)})\|
+ c_q\|q^{(t+1)} - q^{(t)}\|
+ c_m\|m^{(t+1)} - m^{(t)}\|\big]
\leq V^{(0)} - V^{(T)} + \sum_{t=0}^{T-1}\mathcal{E}^{(t)}.
\]
Letting $T \to \infty$, the right-hand side converges by (a). Hence
each of the three series on the left converges, which forces each
summand to vanish:
\[
\|\nabla_\Theta\lenc^{(t)}(\Theta^{(t-1)})\| \to 0,
\quad
\|q^{(t+1)} - q^{(t)}\| \to 0,
\quad
\|m^{(t+1)} - m^{(t)}\| \to 0.
\]

\textbf{Proof of (c.)} Let $(\Theta^{(t_k)}, q^{(t_k)}, a^{(t_k)}, m^{(t_k)})$ be a convergent subsequence with limit $(\Theta^{\star}, q^{\star}, a^{\star}, m^{\star})$.

By Theorem~\ref{thm:convergence}(b), $\left(\mathcal{L}_{\mathrm{enc}}^{(t)}(\Theta^{(t)})-\inf \mathcal{L}_{\mathrm{enc}}^{(t)}(\Theta^{(t)})\right)\to0$. The loss $\mathcal{L}_{\mathrm{enc}}^{(t)}(\Theta^{(t)})$ depends continuously on $(q^{(t)},a^{(t)},V^+,V^-)$. Passing to the subsequential limit, $\mathcal{L}_{\mathrm{enc}}^{(t)}(\Theta^{\star})-\inf\mathcal{L}_{\mathrm{enc}}^{(t)}(\Theta^{\star})=0$. Hence, $\Theta^\star\in\arg\min\mathcal{L}_{\mathrm{enc}}^{\star}$.

The outer loop defines that $q^{(t+1)} = \TT(\DT(p^{(t)}, z^{(t)}, m^{(t)}))$.
By Theorem~\ref{thm:convergence}(b), $\|q^{(t_k+1)} - q^{(t_k)}\| \to 0$,
so $q^{(t_k+1)} \to q^\star$. By continuity of $\DT$
(Theorem~\ref{thm:trust-continuity}) and $\TT$ (Lemma~\ref{lem:mass-cons}),
$\TT(\DT(p^{(t_k)}, z^{(t_k)}, m^{(t_k)})) \to \TT(\DT(p^\star, z^\star, m^\star))$.
Equating both limits,
\begin{equation*}
    q^\star = \TT(\DT(p^\star, z^\star, m^\star)).
\end{equation*}
\end{proof}

\begin{proof}[Proof of Theorem~\ref{thm:bias-reduction}]

Under the update rule, $q^{(t, +)}_j=\tilde{q}_j^{(t)}+\eta_+m_j^{+,(t)}(1-\tilde{q}_j^{(t)})$,  the MSE for only consider pure positive transport, $\eta_-=0$,
\begin{align*}
    (q_j^{(t+1)}-y_j)^2-(\bar{q}_j^{(t)}-y_j)^2&=(q_j^{(t+1)}-\bar{q}_j^{(t)})    (q_j^{(t+1)}-\bar{q}^{(t)}_j-2y_j)\\
    &=\eta_+m_j^{+,(t)}(1-\bar{q}_j^{(t)})(q_j^{(t+1)}+\bar{q}_j^{(t)}-2y_j)\\
    &=2\eta_+m_j^{+,(t)}(1-\bar{q}_j^{(t)})(q_j^{(t)}-y_j)+\eta^2_+[m_j^{+,(t)}(1-\bar{q}^{(t)}_j)]^2
\end{align*}
Therefore,
\begin{align*}
    &\mathbb{E}\|q^{(t+1)}-y\|-\mathbb{E}\|\bar{q}^{(t)}-y\|\\=&-2\eta_+\mathbb{E}\sum_j\left [m_j^{+,(t)}(1-\bar{q}_j^{(t)})(y_j-q_j^{(t)})\right]+\eta^2_+\mathbb{E}\left[[m_j^{+,(t)}(1-\bar{q}^{(t)}_j)^2\right]\\
    &=-\eta_+\cdot\mathrm{Gap}^{(t)}_++\eta^2_+\mathbb{E}\left[[m_j^{+,(t)}(1-\bar{q}^{(t)}_j)^2\right],
    \end{align*}
    The second term is bounded by $\eta_+^2 \bar{B}^2$ via Lemma~\ref{lem:mass-cons} and $|m_j^{+,(t)}(1-\bar q_j^{(t)})| \leq m_j^{+,(t)}$ where $\bar{B}$ is a constant,
    \begin{equation*}
        \mathbb{E}\|q^{(t+1)}-y\|-\mathbb{E}\|\bar{q}^{(t)}-y\|\leq-\eta_+\cdot\mathrm{Gap}^{(t)}_++\eta^2_+\bar{B}^2.
    \end{equation*}
    The result follows.
\end{proof}

\newpage

\section{Additional experimental details}
\label{app:experiment_details}

\subsection{Evaluation metrics}
\label{sec:exp_metrics}

We use the same set of metrics for simulation and real-data evaluation whenever applicable. 
The \emph{original accuracy} measures the agreement between the initial judge-induced consistency signal and the ground-truth consistency label. In simulation, this label is generated by the simulator; in real data, it is derived from human preference annotations. The \emph{adjusted accuracy} measures the agreement after applying \PUAuditPlus.

We define the accuracy difference as
\[
\Delta_{\mathrm{acc}}
=
\mathrm{Acc}_{\mathrm{adj}}
-
\mathrm{Acc}_{\mathrm{orig}}.
\]
A positive value indicates that the refinement improves agreement with the target human-consistency label. We also report the \emph{flip rate}, defined as the fraction of examples whose hard assignment changes after refinement:
\[
\mathrm{Flip}
=
\frac{1}{n}
\sum_{i=1}^{n}
\mathbf{1}\{\widehat y_i^{\mathrm{adj}}\neq \widehat y_i^{\mathrm{orig}}\}.
\]
This metric measures how actively the method changes the initial signal. A useful refinement method should not only flip many labels, but should flip them in a way that improves adjusted accuracy.

For real-data experiments, we additionally report the final waiting-pool size, runtime, and peak memory usage. The waiting-pool size records the number of examples selected for verification but not yet incorporated into the refinement state at termination. Runtime and memory are used to quantify the computational cost of applying \PUAuditPlus to real LLM-as-a-judge data.

\subsection{Simulation details}
\label{app:simulation_details}

We construct a generative simulation that mirrors the refinement problem studied in Sections~\ref{sec:problem_formulation}--\ref{sec:method}. Specifically, the simulator generates examples from two latent consistency groups with a adjustable separation, and then produces noisy comparison features and corrupted initial consistency signals. \PUAuditPlus only observes these noisy features and corrupted signals. This setup allows us to evaluate whether the proposed trust and transport updates can recover the underlying consistency structure from the information available to the refinement procedure.

For each simulated example, we first sample a binary consistency label \(y_i\in\{0,1\}\), following the definition in Section~\ref{sec:problem_formulation}. We then generate a simulator-side latent coordinate \(\bm{h}_i\in\mathbb{R}^{d}\), which is used only for data generation and is not observed by \PUAuditPlus. Conditional on \(y_i=k\), we sample
\[
\bm{h}_i \mid y_i=k \sim \mathcal{N}_d(\bm{\mu}_k,\bm{\Sigma}),
\qquad k\in\{0,1\},
\]
where \(\bm{\mu}_k\in\mathbb{R}^{d}\) is the group mean vector for label \(k\), and \(\bm{\Sigma}\in\mathbb{R}^{d\times d}\) is the covariance matrix. The distance between \(\bm{\mu}_0\) and \(\bm{\mu}_1\) controls how separable the human-consistent and human-inconsistent groups are.

The observed comparison feature is generated from this oracle coordinate as
\[
x_i = A h_i + \epsilon_i,
\qquad
\epsilon_i\sim \mathcal{N}(0,\sigma_x^2 I),
\]
where \(A\in\mathbb{R}^{d_x\times d}\) is a linear transformation matrix, \(\epsilon_i\in\mathbb{R}^{d_x}\) is Gaussian observation noise, \(\sigma_x>0\) controls the noise level, and \(I_{d_x}\) is the \(d_x\)-dimensional identity matrix. Thus, \(x_i\) plays the same role as the fixed comparison feature in the real pipeline: it contains useful geometric information about consistency, but the conditional distributions of \(x_i\mid y_i=1\) and \(x_i\mid y_i=0\) remain noisy and overlapping rather than perfectly separable.

We then corrupt \(y_i\) to obtain an initial noisy consistency signal \(\tilde y_i\in\{0,1\}\), which plays the role of the raw LLM-induced signal before refinement. We consider two corruption mechanisms, following the common distinction between class-conditional and instance-dependent label-noise models~\citep{xia2020part,berthon2021confidence,cheng2022instance}. In the \emph{class-dependent} (CD) setting, the corruption probability depends only on the true consistency label:
\[
\mathbb{P}(\tilde y_i\neq y_i\mid y_i,x_i)
=
\mathbb{P}(\tilde y_i\neq y_i\mid y_i).
\]
Thus, all examples with the same value of \(y_i\) share the same corruption rate, regardless of their location in the feature space. This provides a controlled baseline for testing whether \PUAuditPlus can correct label-level corruption.

In the \emph{distribution-dependent} (DD) setting, the corruption probability is allowed to depend on the sample location:
\[
\mathbb{P}(\tilde y_i\neq y_i\mid y_i,x_i).
\]
This setting is more complex because the initial signal is more likely to be wrong for examples in uncertain or overlapping regions of the feature space. It better reflects LLM-as-a-judge behavior, where errors are often concentrated on difficult, subjective, or geometrically ambiguous comparisons rather than being uniformly distributed within each class.

Finally, we construct the verified sets \(\mathcal{V}^{+}\) and \(\mathcal{V}^{-}\) by revealing only a subset of labels to the refinement procedure. We consider two verification mechanisms. The first is \emph{selected completely at random} (SCAR), under which the probability that an example is human-verified is constant within the relevant latent label class and does not depend on its features. Thus, conditional on the latent label status, the verified examples form an unbiased random subset. The second is \emph{selected at random} (SAR), under which the verification probability is allowed to depend on the feature vector \(x_i\). Under SAR, the verified set may therefore be systematically biased toward particular regions of the feature space, such as examples with higher uncertainty, greater difficulty, or more salient content. This mechanism better reflects selective human review, where verified examples are rarely sampled uniformly at random.

For the simulation setting, unless otherwise specified, we set the latent dimension to \(d=2\) and the class-centre distance to \(\|\mu_1-\mu_0\|=4\). Each noise--verification configuration is repeated over 5 independent runs. We report four metrics: original accuracy, the accuracy of the corrupted initial signal before refinement; adjusted accuracy, the accuracy after applying \PUAuditPlus; accuracy difference, the adjusted accuracy minus the original accuracy; and flip rate, the fraction of examples whose hard assignment changes after refinement.

\subsection{Simulation sensitivity to latent geometry}
\label{app:simulation_geometry}
We further examine how the latent geometry affects refinement under the DD+SAR setting. Our goal is to assess the sensitivity of \PUAuditPlus to two geometric factors: the class-center distance \(\|\bm{\mu}_1-\bm{\mu}_0\|\), which controls group separability, and the latent dimension \(d\), which affects the reliability of local-neighbourhood and distance-based evidence. We first vary the class-center distance while fixing the latent dimension. We consider \(d=2\), \(d=5\), and \(d=20\), and vary \(\|\bm{\mu}_1-\bm{\mu}_0\|\) from \(1\) to \(10\). Across all three dimensions, small distances make the two consistency groups highly overlapping, so the observed geometry provides limited information for refinement. As the distance increases, the adjusted accuracy improves and then stabilizes. These results suggest that \PUAuditPlus benefits from separable but still noisy latent structure. The detailed results for the distance analyses are shown in Figures~\ref{fig:app_sim_distance_sweep}, \ref{fig:app_sim_distance_sweep_dim5}, and~\ref{fig:app_sim_distance_sweep_dim20}.

We then vary the latent dimension while fixing the class-center distance. We consider two fixed distances, \(\|\bm{\mu}_1-\bm{\mu}_0\|=2\) and \(\|\bm{\mu}_1-\bm{\mu}_0\|=4\), and vary \(d\) from \(2\) to \(100\). When the distance is relatively large, \(\|\bm{\mu}_1-\bm{\mu}_0\|=4\), \PUAuditPlus remains effective across dimensions, although the gain gradually decreases as \(d\) grows. When the distance is smaller, \(\|\bm{\mu}_1-\bm{\mu}_0\|=2\), the method is more sensitive to dimensionality: the adjusted accuracy falls below the original accuracy when \(d\) becomes large. This shows that high dimensionality is especially harmful when the two groups are not well separated. The detailed results for the dimension analyses are shown in Figures~\ref{fig:app_sim_dimension_sweep} and~\ref{fig:app_sim_dimension_sweep_dis2}. These analyses suggest that \PUAuditPlus performs best when the latent consistency groups are sufficiently separated and the feature dimension is not too large. In this regime, local neighbourhoods and anchor support provide informative evidence for the trust and transport updates. When the groups overlap heavily, the refinement signal becomes unreliable; when the dimension is high, distance-based neighbourhoods become less stable, weakening both transport and anchor evidence.


\subsection{Real-data details}
\label{app:real_data_details}

We evaluate \PUAuditPlus on real LLM-as-a-judge data constructed from MT-Bench and Chatbot Arena~\citep{zheng2023judging,chiang2024chatbot}. For the main objective-data experiments, we sample pairwise comparisons from three question types: coding, math reasoning, and structured factual questions. We additionally include one MT-Bench reference setting, where examples are sampled from MT-Bench without question-type stratification.

We consider five judge models, including three API-based models and two locally deployed open-weight models. The API judges are \textsc{GPT-5.4}, \textsc{GPT-5.4-mini}, and \textsc{Gemini-2.5-Flash}~\citep{gemini25flash}. The local judges are \textsc{Qwen2.5-7B-Instruct}~\citep{qwen25_technical_report} and \textsc{Mistral-7B-Instruct-v0.3}~\citep{jiang2023mistral7b}. For each judge, we use its preference prediction as the initial LLM-induced signal and evaluate the refined output against available human preference labels.

The main real data evaluation consists of \(5\) judge models \(\times\) \(3\) question types, together with the additional MT-Bench reference setting. Each configuration is repeated over five independent runs. Comparison features are extracted using \textsc{Skywork-Reward-Llama-3.1-8B-v0.2}~\citep{skywork_reward_2024}, which is kept fixed throughout refinement. We evaluate two encoder configurations, using latent dimensions \(16\) and \(128\), to examine whether the refinement behaviour is sensitive to representation capacity. We provide a representative hyperparameter configuration for a standard MT-Bench run in Table~\ref{tab:standard-mtbench-heavy-oldstyle-hparams}, corresponding to the actual setup used in that experiment.

We also compare against four lightweight baselines: logistic regression, MLP classifier, random forest~\citep{breiman2001random}, and label propagation~\citep{zhu2002labelprop}. These baselines use the same comparison features and verified examples, but do not use the full trust-state update or conservative transport mechanism of \PUAuditPlus.

\begin{table}[t]
\centering
\caption{Hyperparameters for the standard heavy old-style real-data runs. Values are taken from the selected best-combo configuration used for the 16-dataset, 5-seed experiment.}
\small
\setlength{\tabcolsep}{4pt}
\renewcommand{\arraystretch}{1.08}
\begin{tabular}{lll}
\toprule
Component & Hyperparameters & Values \\
\midrule
Encoder & backbone; loss & \texttt{skywork}; \texttt{cls\_geo\_share} \\
Encoder size & hidden dim; latent dim & 256; 16 \\
Optimization & batch size; lr; weight decay & 32; \(10^{-4}\); \(5\times 10^{-4}\) \\
Training schedule & max epochs; patience; warmup & 40; 8; 8 \\
Curriculum & alt. rounds; epochs/round; pseudo weight & 3; 2; 0.35 \\
Initialization & library frac.; seed frac.; min pos/neg & 0.20; 0.25; 24/16 \\
Risk estimation & Beta prior; max iter.; tol & \((1,19)\); 100; \(10^{-5}\) \\
Neighborhood risk & \(k\); kernel scale & 30; 3.0 \\
Trusted partition & threshold; budget scale; max frac. & 0.20; 0.20; 0.20 \\
Wash step & latent dim; kNN \(k\); posterior threshold & 32; 25; 0.55 \\
Wash risk gate & pre-wash risk threshold; gate alpha & 0.60; 1.5 \\
Transport & frac.; min/max; relabel threshold & 0.95; 0.40/0.95; 0.45 \\
Query selection & top-\(k\); Fisher \(L_2\); max iter. & 50; \(10^{-3}\); 200 \\
Waiting pool & EMA decay; min hits & 0.7; 2 \\
Stopping & max iter.; stable rounds & 24; 2 \\
Stopping thresholds & gap; change; size; \(\Delta\)gap & 0.35; 0.02; 0.015; 0.05 \\
\bottomrule
\end{tabular}
\label{tab:standard-mtbench-heavy-oldstyle-hparams}
\end{table}

\clearpage
\subsection{Additional simulation result}
\label{app:simulation_figures}

\begin{figure}[t]
    \centering
    \includegraphics[width=0.92\linewidth]{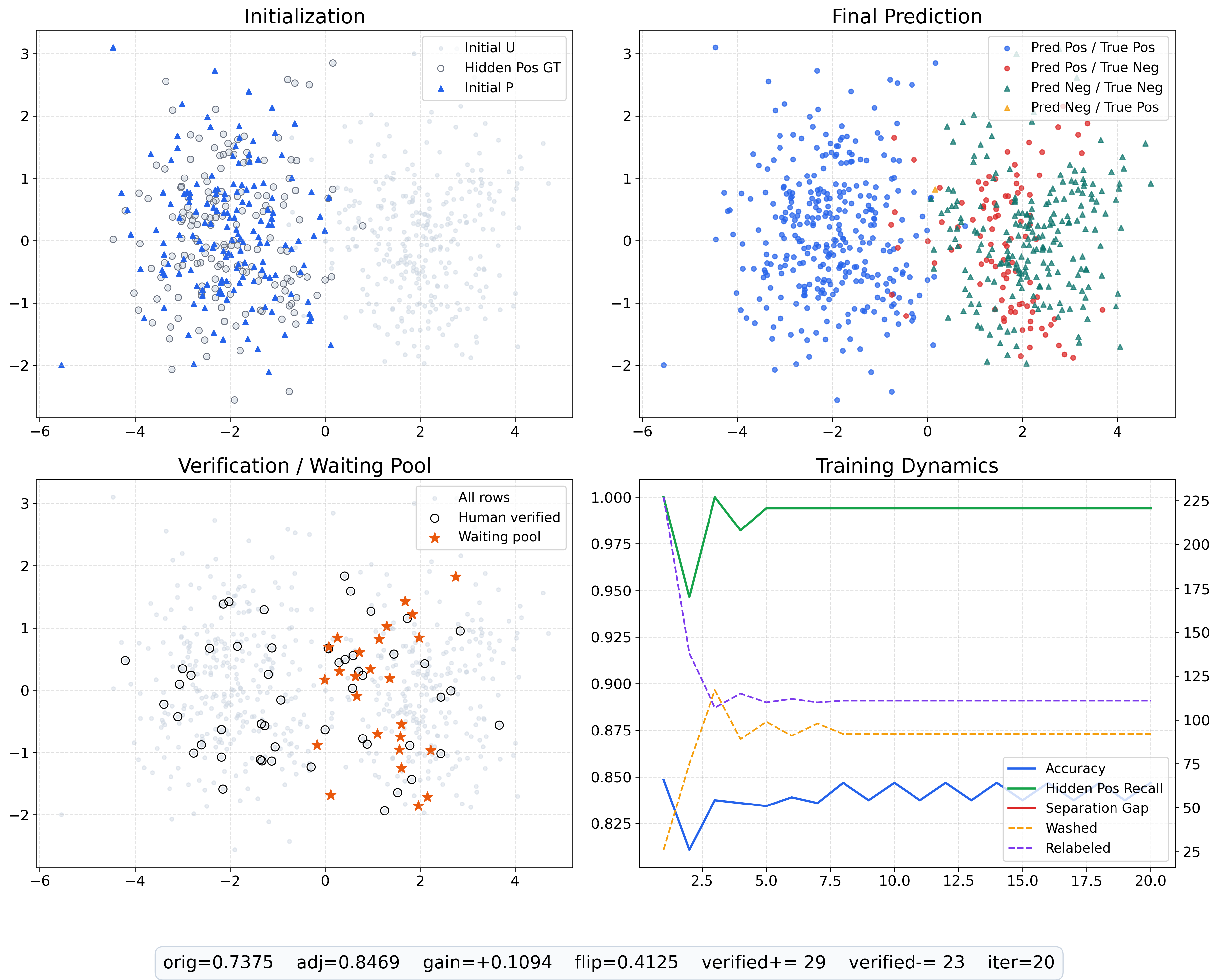}
    \caption{
    Simulation embedding result under class-dependent (CD) noise with SCAR verification.
    }
    \label{fig:app_sim_cd_scar_embedding}
\end{figure}

\begin{figure}[t]
    \centering
    \includegraphics[width=0.92\linewidth]{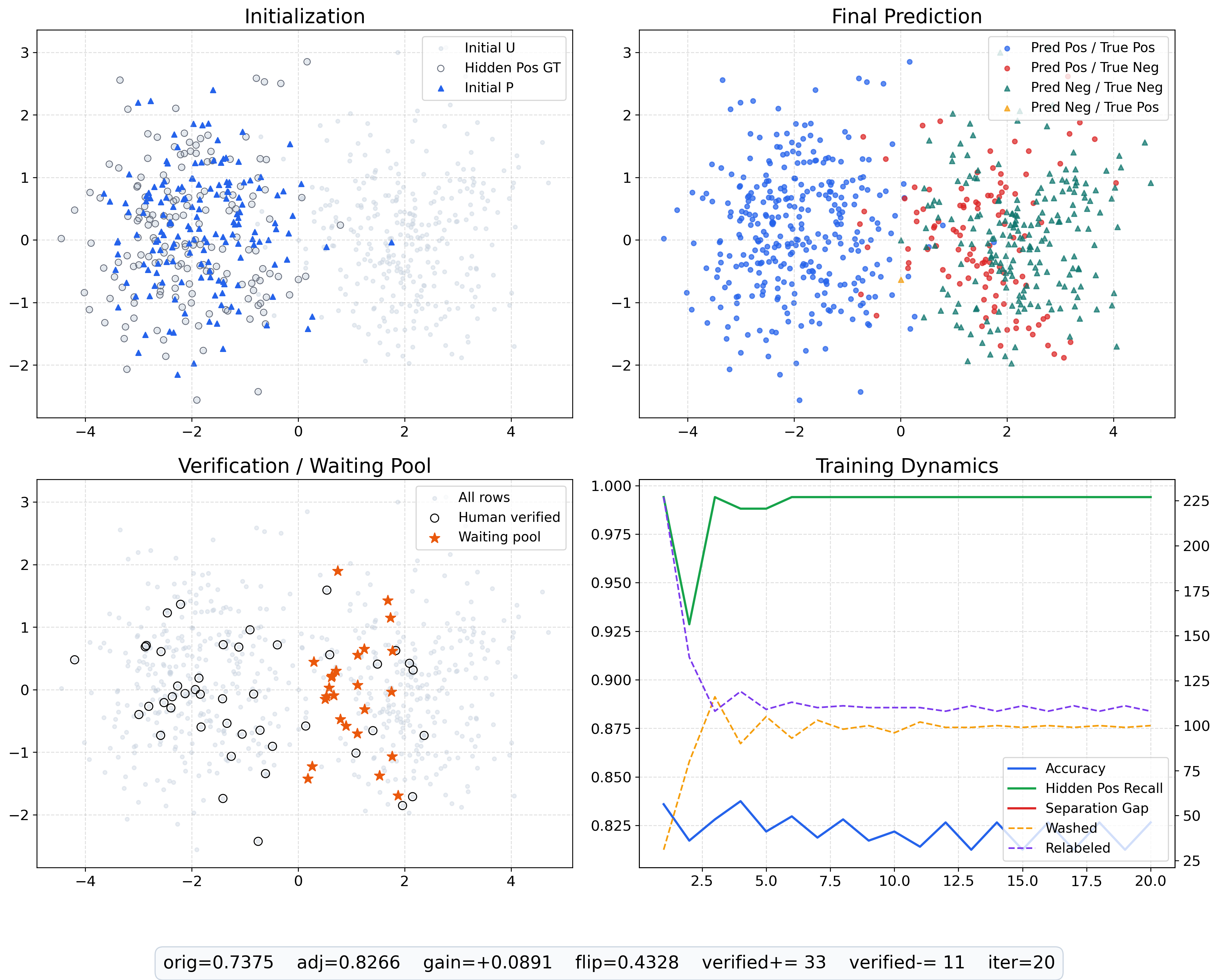}
    \caption{
    Simulation embedding result under class-dependent (CD) noise with SAR verification.
    }
    \label{fig:app_sim_cd_sar_embedding}
\end{figure}

\begin{figure}[t]
    \centering
    \includegraphics[width=0.92\linewidth]{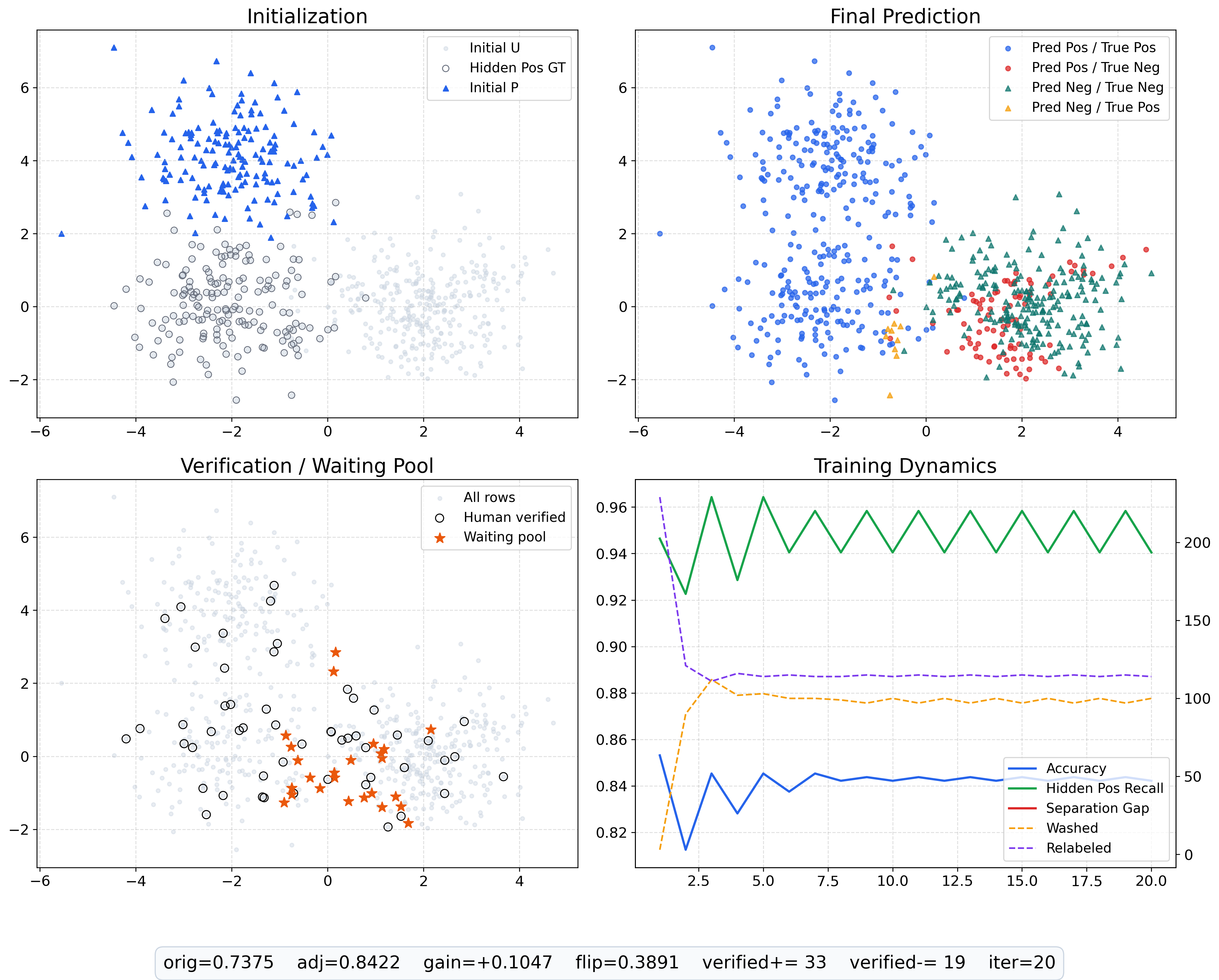}
    \caption{
    Simulation embedding result under distribution-dependent (DD) noise with SCAR verification.
    }
    \label{fig:app_sim_dd_scar_embedding}
\end{figure}

\begin{figure}[t]
    \centering
    \includegraphics[width=0.92\linewidth]{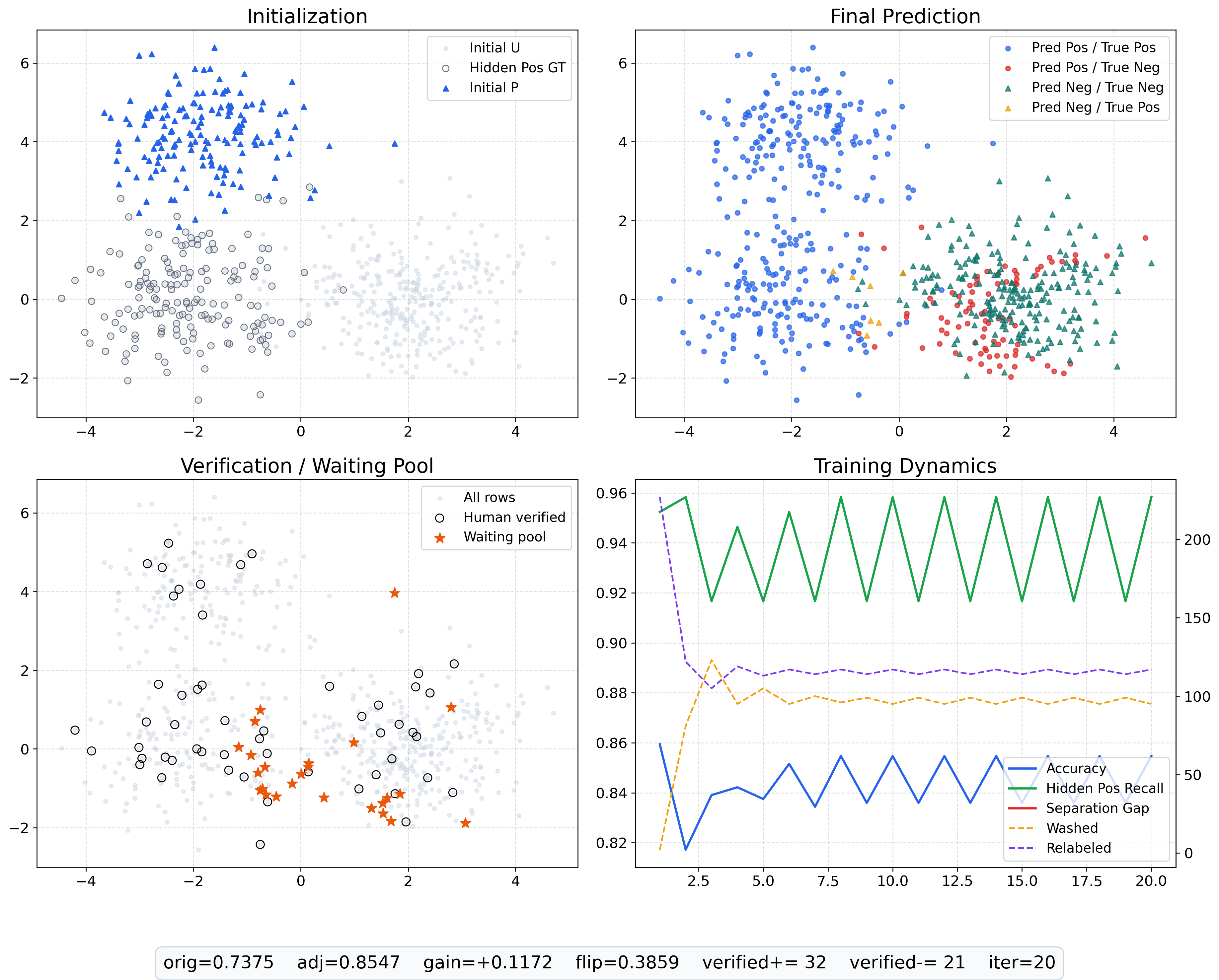}
    \caption{
    Simulation embedding result under distribution-dependent (DD) noise with SAR verification.
    }
    \label{fig:app_sim_dd_sar_embedding}
\end{figure}

\begin{figure}[t]
    \centering
    \includegraphics[width=1\linewidth]{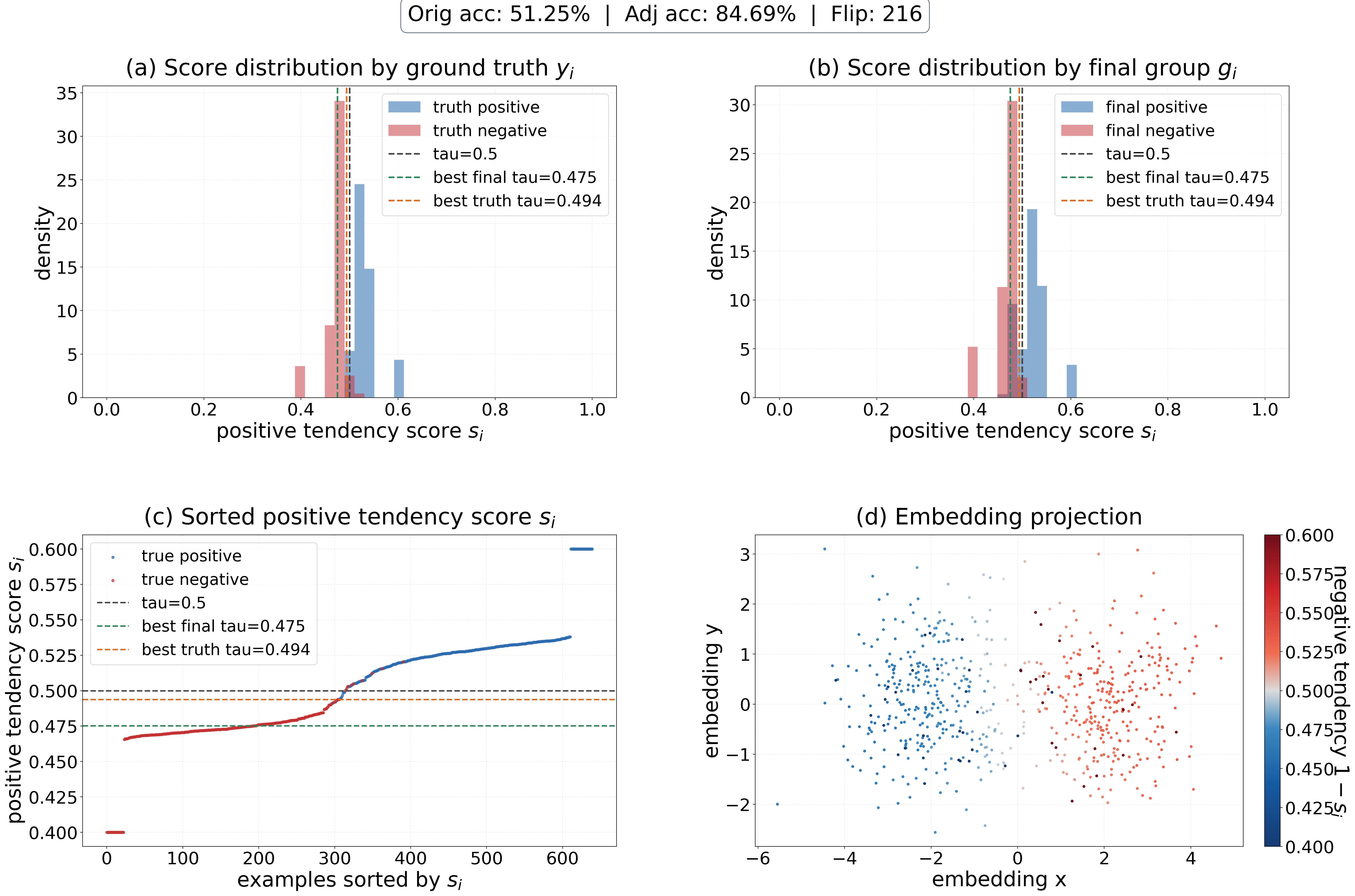}
    \caption{
    Threshold diagnostic under class-dependent (CD) noise with SCAR verification.
    Panels (a) and (b) compare score distributions by ground-truth consistency label \(y_i\) and final group assignment \(g_i\), respectively. Panel (c) sorts examples by the diagnostic score \(s_i=0.6(1-r_i)+0.4(1-d_i)\), with reference thresholds for the default rule, final groups, and ground truth. Panel (d) shows a two-dimensional projection of the reward-model embeddings colored by negative tendency \(1-s_i\). The summary box reports original accuracy, adjusted accuracy, and the number of flipped examples.
    }
    \label{fig:app_sim_cd_scar_threshold}
\end{figure}

\begin{figure}[t]
    \centering
    \includegraphics[width=1\linewidth]{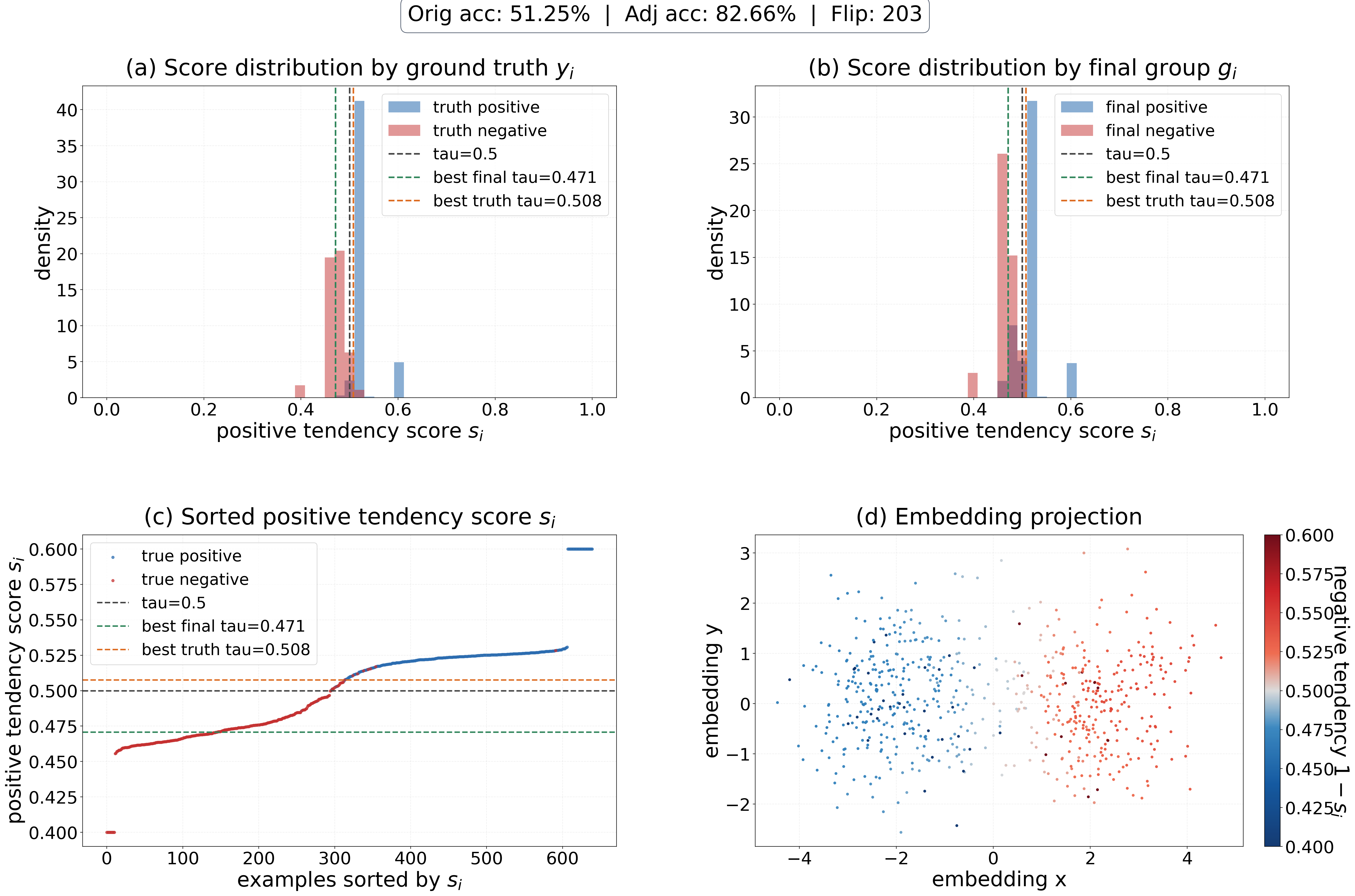}
    \caption{
    Threshold diagnostic under class-dependent (CD) noise with SAR verification. Details follow the same reporting as Figure~\ref{fig:app_sim_cd_scar_threshold}.
    }
    \label{fig:app_sim_cd_sar_threshold}
\end{figure}

\begin{figure}[t]
    \centering
    \includegraphics[width=1\linewidth]{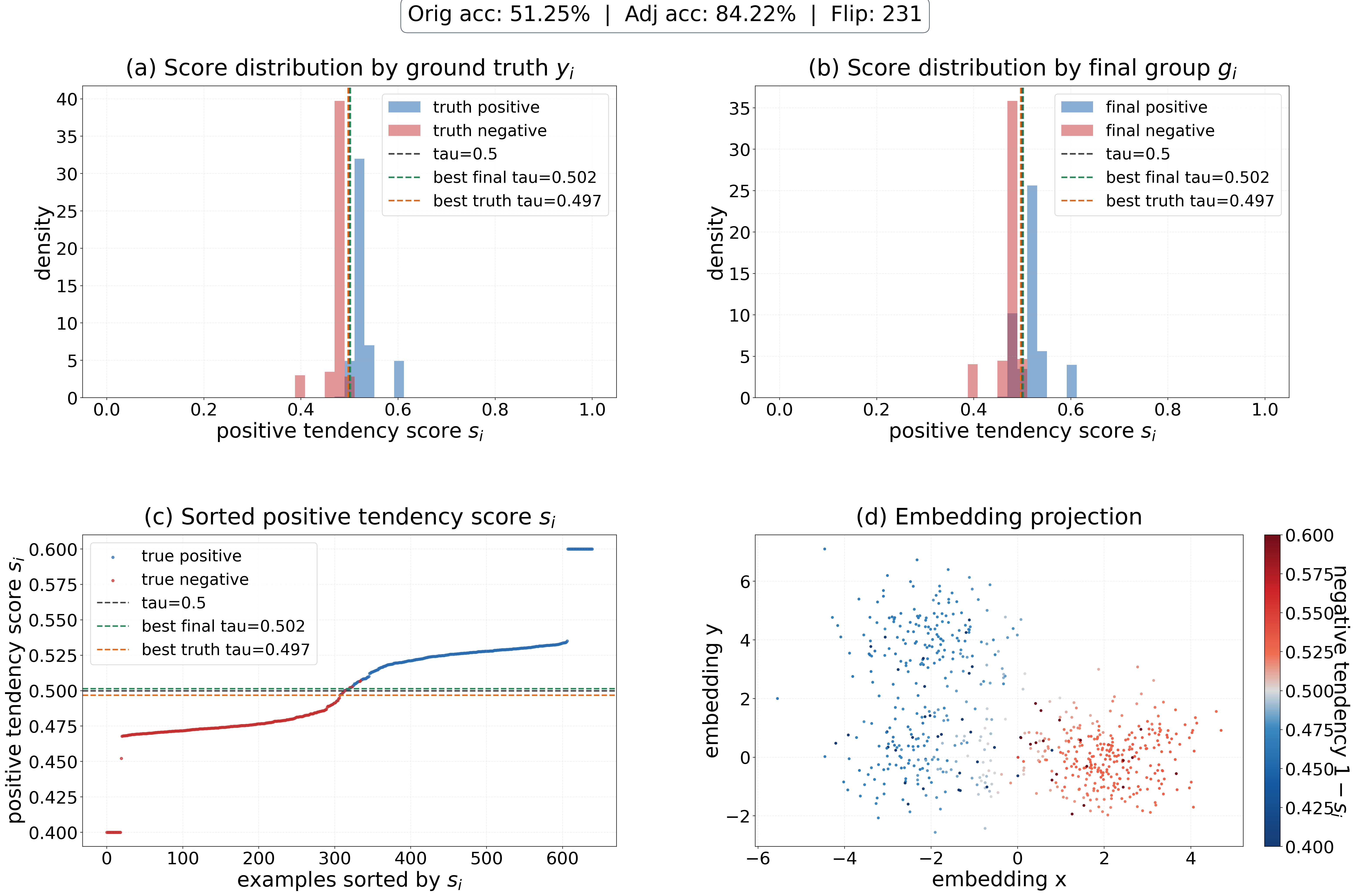}
    \caption{
    Threshold diagnostic under distribution-dependent (DD) noise with SCAR verification.
    Details follow the same reporting as Figure~\ref{fig:app_sim_cd_scar_threshold}.
    }
    \label{fig:app_sim_dd_scar_threshold}
\end{figure}

\begin{figure}[t]
    \centering
    \includegraphics[width=1\linewidth]{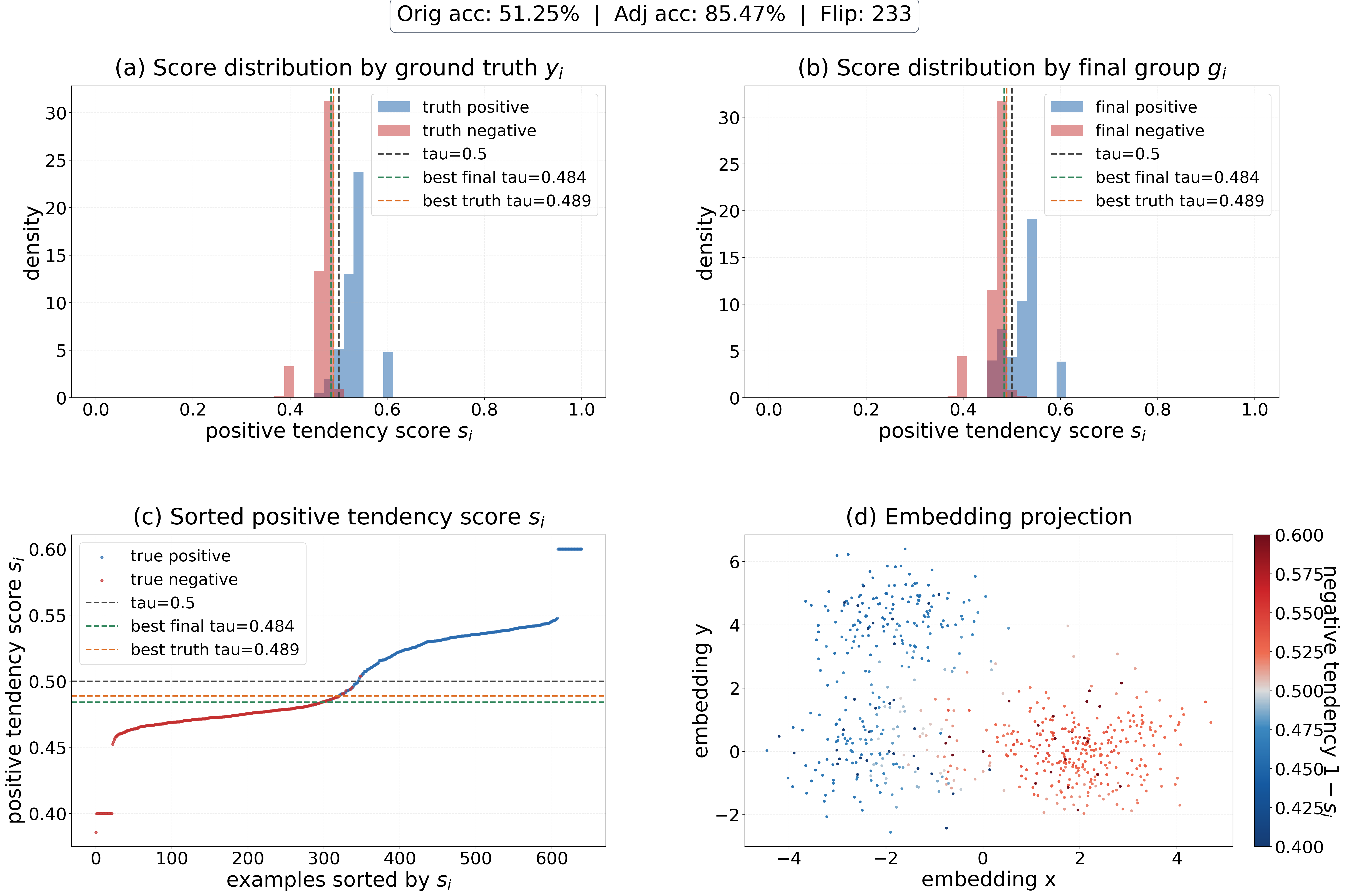}
    \caption{
    Threshold diagnostic under distribution-dependent (DD) noise with SAR verification.
    Details follow the same reporting as Figure~\ref{fig:app_sim_cd_scar_threshold}.
    }
    \label{fig:app_sim_dd_sar_threshold}
\end{figure}

\begin{figure}[t]
    \centering
    \includegraphics[width=1\linewidth]{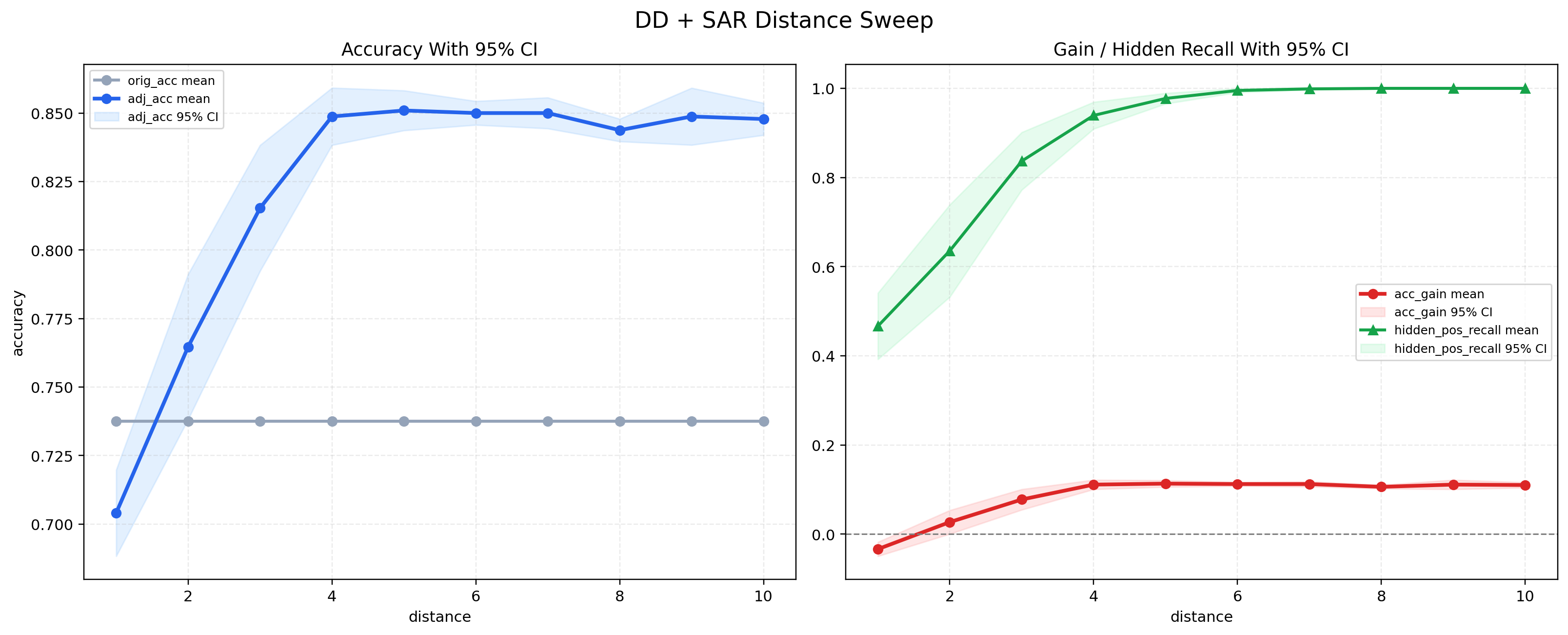}
    \caption{
    Distance analysis under the DD+SAR simulation setting with latent dimension fixed at \(d=2\). The class-separation distance controls how distinguishable the latent human-consistent and human-inconsistent groups are.
    \PUAuditPlus performs best when the latent groups are sufficiently separated, while performance weakens when the two groups heavily overlap.
    }
    \label{fig:app_sim_distance_sweep}
\end{figure}

\begin{figure}[t]
    \centering
    \includegraphics[width=1\linewidth]{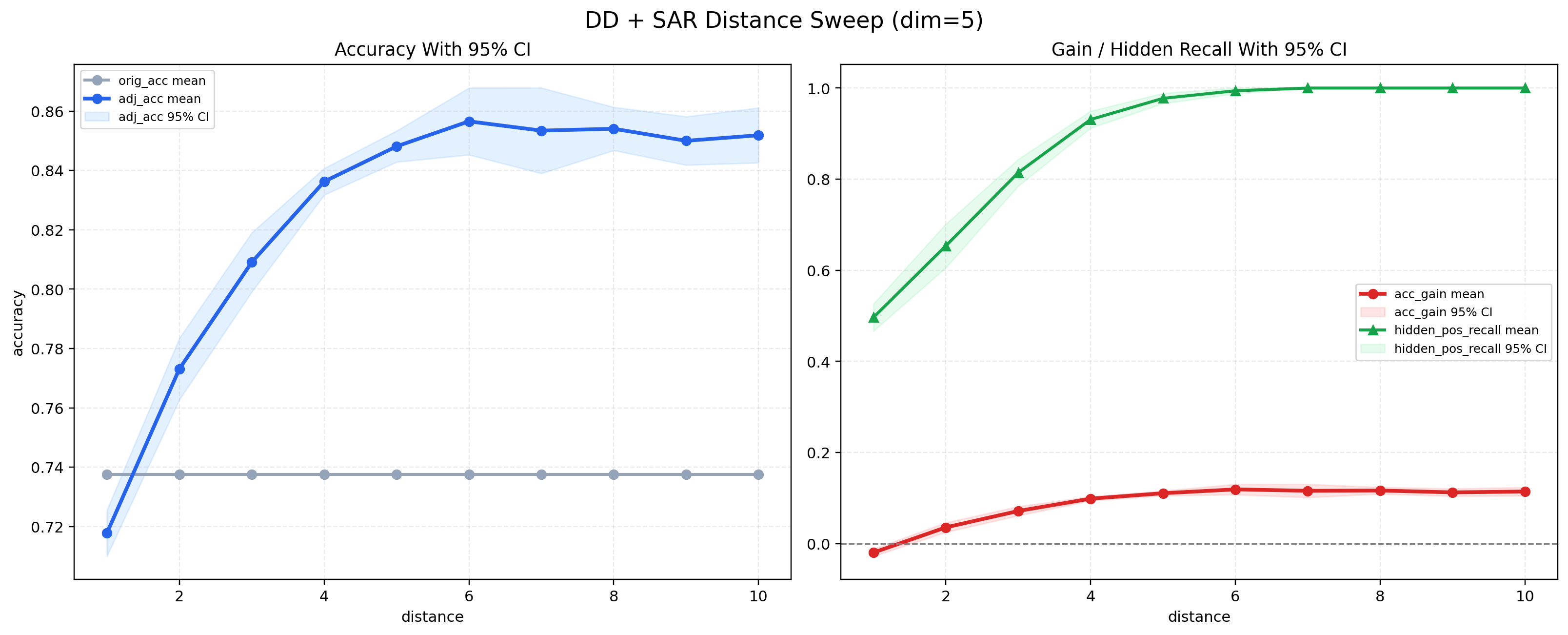}
    \caption{
    Distance analysis under the DD+SAR simulation setting with latent dimension fixed at \(d=5\).
    }
    \label{fig:app_sim_distance_sweep_dim5}
\end{figure}

\begin{figure}[t]
    \centering
    \includegraphics[width=1\linewidth]{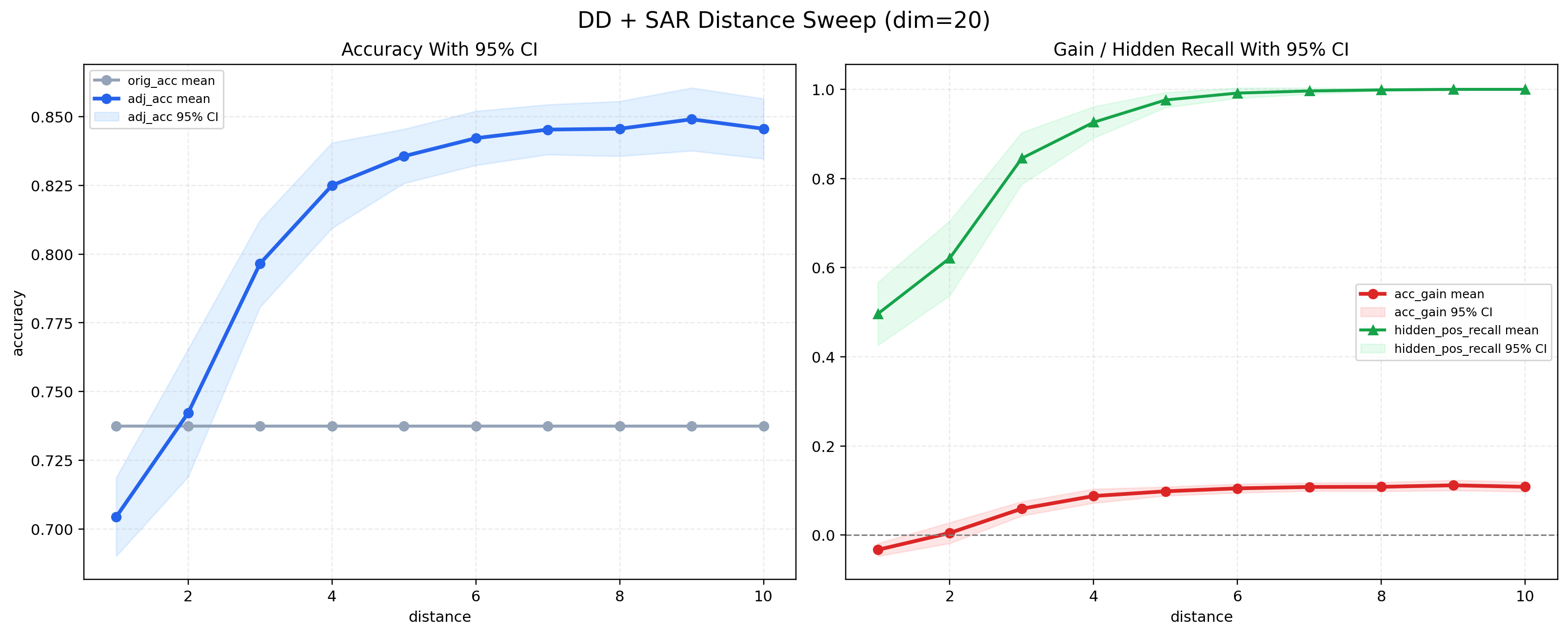}
    \caption{
    Distance analysis under the DD+SAR simulation setting with latent dimension fixed at \(d=20\).
    }
    \label{fig:app_sim_distance_sweep_dim20}
\end{figure}

\begin{figure}[t]
    \centering
    \includegraphics[width=1\linewidth]{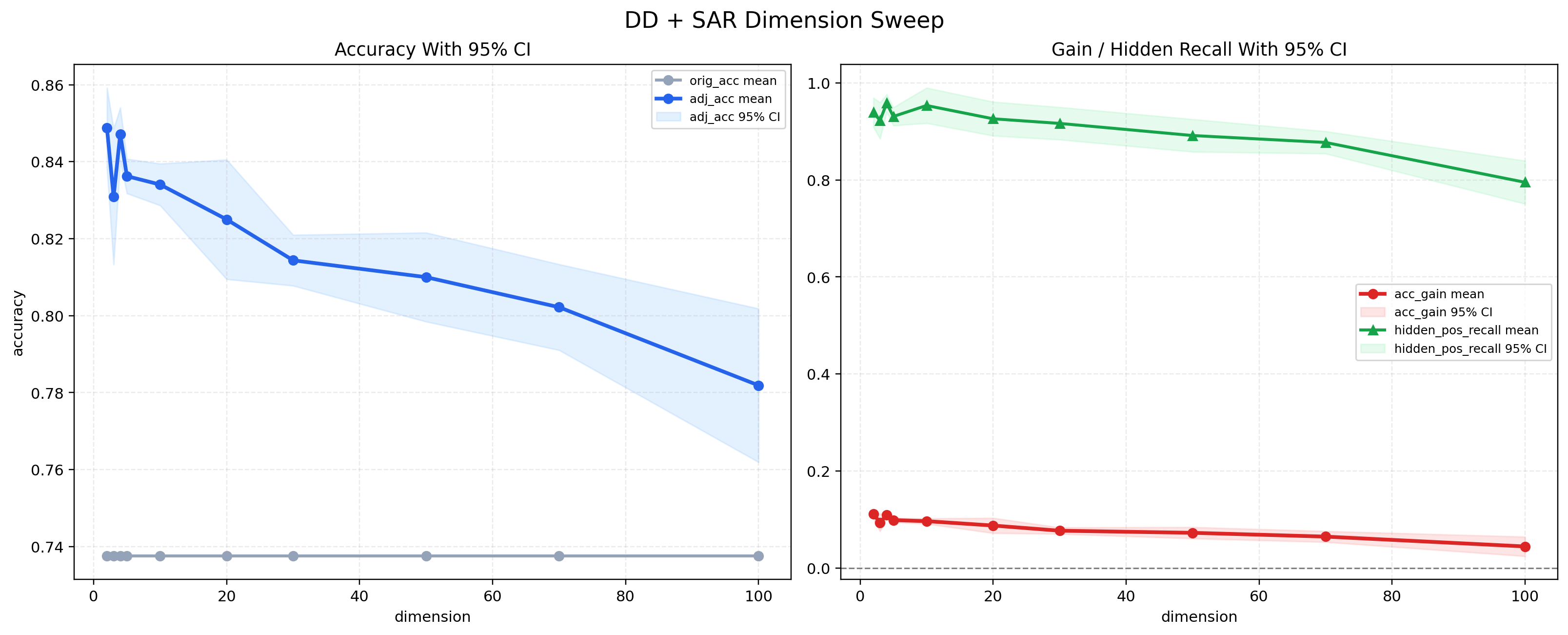}
    \caption{
    Dimension analysis under the DD+SAR simulation setting with class-center distance fixed at \(\|\bm{\mu}_1-\bm{\mu}_0\|=4\). The latent dimension controls the difficulty of estimating reliable local neighborhoods and anchor support.
    }
    \label{fig:app_sim_dimension_sweep}
\end{figure}

\begin{figure}[t]
    \centering
    \includegraphics[width=1\linewidth]{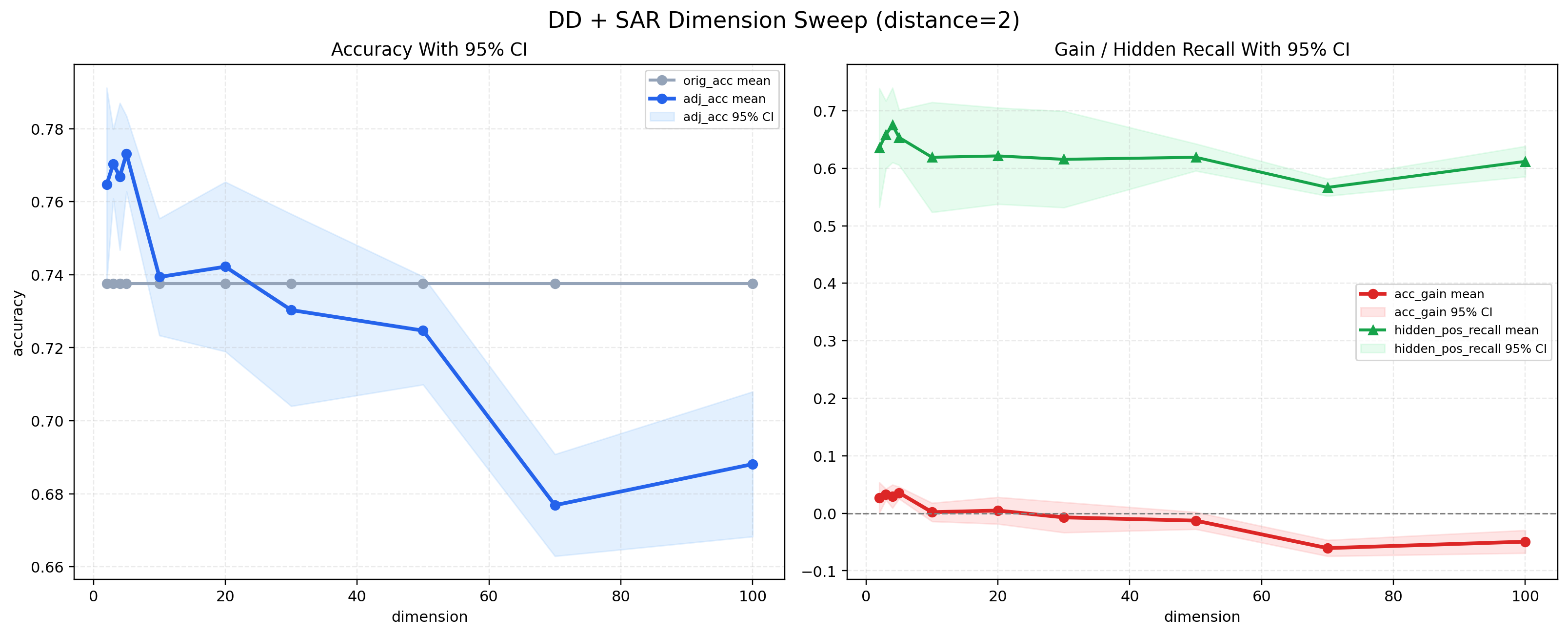}
    \caption{
    Dimension analysis under the DD+SAR simulation setting with class-center distance fixed at \(\|\bm{\mu}_1-\bm{\mu}_0\|=2\).
    }
    \label{fig:app_sim_dimension_sweep_dis2}
\end{figure}

\clearpage
\subsection{Additional real-data results}
\label{app:additional_real_data_results}

\begin{table*}[t]
\centering
\caption{Real-data results with 3\% verified data for baseline methods (Coding and MT-Bench). Accuracy and gain values are reported as mean values with range in parentheses, where the range is computed as maximum minus minimum across five random seeds. Data is the number of pairwise comparisons. Human Verif. Req. denotes the number of human-verified examples required by each method, averaged over five seeds with range in parentheses. Orig. Acc. and Adj. Acc. denote the original LLM-judge accuracy and the adjusted accuracy after refinement, respectively. Bold values indicate \PUAuditPlus results. Green cells highlight the highest Adj. Acc. or Gain between the best baseline and \PUAuditPlus within each model and question type.}
\resizebox{\textwidth}{!}{%
\begin{tabular}{cc|ccccc}
\toprule
\midrule
\multicolumn{7}{c}{\textbf{Real-Data Results with 3\% Verified Baseline Data (Coding and MT-Bench)}}\\
\midrule
\midrule
Question Type & Method & Data & Orig. Acc. & Adj. Acc. & Gain & Human Verif. Req. \\
\midrule
\multicolumn{7}{c}{\textbf{Model: GPT-5.4}}\\
\midrule
Coding & Logistic Regression & 2949 & 63.92\% & 57.23\% (3.57) & -6.69\% (3.57) & 89 (0) \\
Coding & Multilayer Perceptron & 2949 & 63.92\% & 57.96\% (9.13) & -5.96\% (9.13) & 89 (0) \\
Coding & Random Forest & 2949 & 63.92\% & 61.82\% (4.09) & -2.10\% (4.09) & 89 (0) \\
Coding & Label Propagation & 2949 & 63.92\% & 56.34\% (18.15) & -7.58\% (18.15) & 89 (0) \\
Coding & nnPU & 2949 & 63.92\% & 62.45\% (3.25) & -1.47\% (3.25) & 89 (0) \\
Coding & \PUAuditPlus & 2949 & 63.92\% & \hlgreen{\textbf{67.20\% (2.00)}} & \hlgreen{\textbf{+3.28\% (2.00)}} & 100.6 (5) \\
\midrule
\multicolumn{7}{c}{\textbf{Model: GPT-5.4-mini}}\\
\midrule
Coding & Logistic Regression & 2971 & 63.24\% & 56.34\% (3.78) & -6.91\% (3.78) & 89 (0) \\
Coding & Multilayer Perceptron & 2971 & 63.24\% & 59.22\% (9.58) & -4.02\% (9.58) & 89 (0) \\
Coding & Random Forest & 2971 & 63.24\% & 61.47\% (1.77) & -1.77\% (1.77) & 89 (0) \\
Coding & Label Propagation & 2971 & 63.24\% & 54.55\% (19.81) & -8.70\% (19.81) & 89 (0) \\
Coding & nnPU & 2971 & 63.24\% & 63.03\% (0.32) & -0.21\% (0.32) & 89 (0) \\
Coding & \PUAuditPlus & 2971 & 63.24\% & \hlgreen{\textbf{66.27\% (1.55)}} & \hlgreen{\textbf{+3.03\% (1.55)}} & 102.2 (9) \\
\midrule
MT-Bench & Logistic Regression & 2565 & 77.82\% & 74.27\% (4.34) & -3.55\% (4.34) & 77 (0) \\
MT-Bench & Multilayer Perceptron & 2565 & 77.82\% & 74.12\% (14.39) & -3.70\% (14.39) & 77 (0) \\
MT-Bench & Random Forest & 2565 & 77.82\% & 77.46\% (1.25) & -0.36\% (1.25) & 77 (0) \\
MT-Bench & Label Propagation & 2565 & 77.82\% & 64.91\% (51.69) & -12.91\% (51.69) & 77 (0) \\
MT-Bench & nnPU & 2565 & 77.82\% & 77.81\% (0.00) & -0.01\% (0.00) & 77 (0) \\
MT-Bench & \PUAuditPlus & 2575 & 77.82\% & \hlgreen{\textbf{81.01\% (0.94)}} & \hlgreen{\textbf{+3.19\% (0.94)}} & 87.4 (7) \\
\midrule
\multicolumn{7}{c}{\textbf{Model: Gemini}}\\
\midrule
Coding & Logistic Regression & 2820 & 63.55\% & 56.37\% (3.03) & -7.18\% (3.03) & 85 (0) \\
Coding & Multilayer Perceptron & 2820 & 63.55\% & 59.33\% (5.96) & -4.21\% (5.96) & 85 (0) \\
Coding & Random Forest & 2820 & 63.55\% & 61.63\% (4.17) & -1.92\% (4.17) & 85 (0) \\
Coding & Label Propagation & 2820 & 63.55\% & 46.87\% (26.29) & -16.68\% (26.29) & 85 (0) \\
Coding & nnPU & 2820 & 63.55\% & 63.43\% (0.14) & -0.12\% (0.14) & 85 (0) \\
Coding & \PUAuditPlus & 2820 & 63.55\% & \hlgreen{\textbf{67.17\% (2.84)}} & \hlgreen{\textbf{+3.62\% (2.84)}} & 96.6 (8) \\
\midrule
\multicolumn{7}{c}{\textbf{Model: Qwen}}\\
\midrule
Coding & Logistic Regression & 8884 & 53.73\% & 53.99\% (1.90) & +0.27\% (1.90) & 266 (0) \\
Coding & Multilayer Perceptron & 8884 & 53.73\% & 53.25\% (6.15) & -0.48\% (6.15) & 266 (0) \\
Coding & Random Forest & 8884 & 53.73\% & 56.70\% (2.23) & +2.98\% (2.23) & 266 (0) \\
Coding & Label Propagation & 8884 & 53.73\% & 50.87\% (4.55) & -2.86\% (4.55) & 266 (0) \\
Coding & nnPU & 8884 & 53.73\% & 55.74\% (1.65) & +2.01\% (1.65) & 266 (0) \\
Coding & \PUAuditPlus & 8884 & 53.73\% & \hlgreen{\textbf{77.74\% (9.78)}} & \hlgreen{\textbf{+24.02\% (9.78)}} & 278.8 (14) \\
\midrule
\multicolumn{7}{c}{\textbf{Model: Mistral}}\\
\midrule
Coding & Logistic Regression & 8240 & 50.29\% & 53.24\% (3.77) & +2.95\% (3.77) & 247 (0) \\
Coding & Multilayer Perceptron & 8240 & 50.29\% & 54.12\% (6.04) & +3.83\% (6.04) & 247 (0) \\
Coding & Random Forest & 8240 & 50.29\% & 57.20\% (4.63) & +6.90\% (4.63) & 247 (0) \\
Coding & Label Propagation & 8240 & 50.29\% & 50.89\% (1.83) & +0.60\% (1.83) & 247 (0) \\
Coding & nnPU & 8240 & 50.29\% & 55.38\% (3.31) & +5.09\% (3.31) & 247 (0) \\
Coding & \PUAuditPlus & 8240 & 50.29\% & \hlgreen{\textbf{72.91\% (0.61)}} & \hlgreen{\textbf{+22.62\% (0.61)}} & 261.8 (10) \\
\bottomrule
\end{tabular}}
\label{tab:real_data_verified_3_coding_mtbench}
\end{table*}

\begin{table*}[t]
\centering
\caption{Real-data results with 3\% verified data for baseline methods (Math). Values follow the same reporting and highlighting conventions as Table~\ref{tab:real_data_verified_3_coding_mtbench}.}
\resizebox{\textwidth}{!}{%
\begin{tabular}{cc|ccccc}
\toprule
\midrule
\multicolumn{7}{c}{\textbf{Real-Data Results with 3\% Verified Baseline Data (Math)}}\\
\midrule
\midrule
Question Type & Method & Data & Orig. Acc. & Adj. Acc. & Gain & Human Verif. Req. \\
\midrule
\multicolumn{7}{c}{\textbf{Model: GPT-5.4}}\\
\midrule
Math & Logistic Regression & 2817 & 67.20\% & 61.05\% (4.43) & -6.15\% (4.43) & 85 (0) \\
Math & Multilayer Perceptron & 2817 & 67.20\% & 62.00\% (8.09) & -5.20\% (8.09) & 85 (0) \\
Math & Random Forest & 2817 & 67.20\% & 65.92\% (2.93) & -1.28\% (2.93) & 85 (0) \\
Math & Label Propagation & 2817 & 67.20\% & 67.07\% (0.48) & -0.13\% (0.48) & 85 (0) \\
Math & nnPU & 2817 & 67.20\% & 67.23\% (0.05) & +0.03\% (0.05) & 85 (0) \\
Math & \PUAuditPlus & 2817 & 67.20\% & \hlgreen{\textbf{70.05\% (1.17)}} & \hlgreen{\textbf{+2.85\% (1.17)}} & 95.8 (5) \\
\midrule
\multicolumn{7}{c}{\textbf{Model: GPT-5.4-mini}}\\
\midrule
Math & Logistic Regression & 2911 & 62.35\% & 58.55\% (2.12) & -3.80\% (2.12) & 87 (0) \\
Math & Multilayer Perceptron & 2911 & 62.35\% & 56.88\% (12.64) & -5.47\% (12.64) & 87 (0) \\
Math & Random Forest & 2911 & 62.35\% & 63.68\% (2.41) & +1.33\% (2.41) & 87 (0) \\
Math & Label Propagation & 2911 & 62.35\% & 59.89\% (12.82) & -2.46\% (12.82) & 87 (0) \\
Math & nnPU & 2911 & 62.35\% & 62.53\% (0.65) & +0.18\% (0.65) & 87 (0) \\
Math & \PUAuditPlus & 2911 & 62.35\% & \hlgreen{\textbf{65.62\% (1.79)}} & \hlgreen{\textbf{+3.27\% (1.79)}} & 99 (5) \\
\midrule
\multicolumn{7}{c}{\textbf{Model: Gemini}}\\
\midrule
Math & Logistic Regression & 2770 & 66.35\% & 59.71\% (6.44) & -6.64\% (6.44) & 83 (0) \\
Math & Multilayer Perceptron & 2770 & 66.35\% & 59.86\% (11.46) & -6.50\% (11.46) & 83 (0) \\
Math & Random Forest & 2770 & 66.35\% & 65.63\% (3.13) & -0.72\% (3.13) & 83 (0) \\
Math & Label Propagation & 2770 & 66.35\% & 56.17\% (26.65) & -10.19\% (26.65) & 83 (0) \\
Math & nnPU & 2770 & 66.35\% & 66.19\% (0.26) & -0.16\% (0.26) & 83 (0) \\
Math & \PUAuditPlus & 2770 & 66.35\% & \hlgreen{\textbf{68.79\% (1.52)}} & \hlgreen{\textbf{+2.43\% (1.52)}} & 96.2 (4) \\
\midrule
\multicolumn{7}{c}{\textbf{Model: Qwen}}\\
\midrule
Math & Logistic Regression & 9145 & 55.10\% & 55.71\% (1.54) & +0.61\% (1.54) & 274 (0) \\
Math & Multilayer Perceptron & 9145 & 55.10\% & 54.28\% (10.39) & -0.82\% (10.39) & 274 (0) \\
Math & Random Forest & 9145 & 55.10\% & 59.70\% (2.19) & +4.60\% (2.19) & 274 (0) \\
Math & Label Propagation & 9145 & 55.10\% & 53.82\% (2.15) & -1.28\% (2.15) & 274 (0) \\
Math & nnPU & 9145 & 55.10\% & 58.88\% (0.90) & +3.78\% (0.90) & 274 (0) \\
Math & \PUAuditPlus & 9145 & 55.10\% & \hlgreen{\textbf{79.27\% (5.47)}} & \hlgreen{\textbf{+24.17\% (5.47)}} & 291.2 (10) \\
\midrule
\multicolumn{7}{c}{\textbf{Model: Mistral}}\\
\midrule
Math & Logistic Regression & 8361 & 50.46\% & 56.73\% (2.75) & +6.27\% (2.75) & 251 (0) \\
Math & Multilayer Perceptron & 8361 & 50.46\% & 54.30\% (5.60) & +3.84\% (5.60) & 251 (0) \\
Math & Random Forest & 8361 & 50.46\% & 60.53\% (1.94) & +10.07\% (1.94) & 251 (0) \\
Math & Label Propagation & 8361 & 50.46\% & 51.55\% (3.11) & +1.09\% (3.11) & 251 (0) \\
Math & nnPU & 8361 & 50.46\% & 55.24\% (4.09) & +4.78\% (4.09) & 251 (0) \\
Math & \PUAuditPlus & 8361 & 50.46\% & \hlgreen{\textbf{71.75\% (6.54)}} & \hlgreen{\textbf{+21.29\% (6.54)}} & 266 (23) \\
\bottomrule
\end{tabular}}
\label{tab:real_data_verified_3_math}
\end{table*}

\begin{table*}[t]
\centering
\caption{Real-data results with 3\% verified data for baseline methods (Factual). Values follow the same reporting and highlighting conventions as Table~\ref{tab:real_data_verified_3_coding_mtbench}.}
\resizebox{\textwidth}{!}{%
\begin{tabular}{cc|ccccc}
\toprule
\midrule
\multicolumn{7}{c}{\textbf{Real-Data Results with 3\% Verified Baseline Data (Factual)}}\\
\midrule
\midrule
Question Type & Method & Data & Orig. Acc. & Adj. Acc. & Gain & Human Verif. Req. \\
\midrule
\multicolumn{7}{c}{\textbf{Model: GPT-5.4}}\\
\midrule
Factual & Logistic Regression & 2842 & 64.39\% & 58.72\% (7.65) & -5.68\% (7.65) & 85 (0) \\
Factual & Multilayer Perceptron & 2842 & 64.39\% & 59.01\% (7.83) & -5.38\% (7.83) & 85 (0) \\
Factual & Random Forest & 2842 & 64.39\% & 63.40\% (2.97) & -0.99\% (2.97) & 85 (0) \\
Factual & Label Propagation & 2842 & 64.39\% & 54.87\% (23.10) & -9.52\% (23.10) & 85 (0) \\
Factual & nnPU & 2842 & 64.39\% & 64.39\% (0.02) & -0.00\% (0.02) & 85 (0) \\
Factual & \PUAuditPlus & 2842 & 64.39\% & \hlgreen{\textbf{67.66\% (2.22)}} & \hlgreen{\textbf{+3.27\% (2.22)}} & 95.6 (9) \\
\midrule
\multicolumn{7}{c}{\textbf{Model: GPT-5.4-mini}}\\
\midrule
Factual & Logistic Regression & 2920 & 62.47\% & 57.20\% (4.31) & -5.26\% (4.31) & 88 (0) \\
Factual & Multilayer Perceptron & 2920 & 62.47\% & 56.79\% (5.19) & -5.68\% (5.19) & 88 (0) \\
Factual & Random Forest & 2920 & 62.47\% & 61.52\% (4.03) & -0.95\% (4.03) & 88 (0) \\
Factual & Label Propagation & 2920 & 62.47\% & 53.23\% (18.50) & -9.24\% (18.50) & 88 (0) \\
Factual & nnPU & 2920 & 62.47\% & 62.44\% (0.11) & -0.03\% (0.11) & 88 (0) \\
Factual & \PUAuditPlus & 2920 & 62.47\% & \hlgreen{\textbf{66.07\% (2.09)}} & \hlgreen{\textbf{+3.60\% (2.09)}} & 99 (17) \\
\midrule
\multicolumn{7}{c}{\textbf{Model: Gemini}}\\
\midrule
Factual & Logistic Regression & 2768 & 66.37\% & 59.14\% (4.13) & -7.22\% (4.13) & 83 (0) \\
Factual & Multilayer Perceptron & 2768 & 66.37\% & 62.22\% (7.82) & -4.15\% (7.82) & 83 (0) \\
Factual & Random Forest & 2768 & 66.37\% & 65.29\% (2.20) & -1.08\% (2.20) & 83 (0) \\
Factual & Label Propagation & 2768 & 66.37\% & 54.91\% (26.63) & -11.45\% (26.63) & 83 (0) \\
Factual & nnPU & 2768 & 66.37\% & 66.03\% (0.75) & -0.34\% (0.75) & 83 (0) \\
Factual & \PUAuditPlus & 2768 & 66.37\% & \hlgreen{\textbf{69.21\% (1.73)}} & \hlgreen{\textbf{+2.85\% (1.73)}} & 94 (12) \\
\midrule
\multicolumn{7}{c}{\textbf{Model: Qwen}}\\
\midrule
Factual & Logistic Regression & 9333 & 55.94\% & 56.15\% (2.88) & +0.21\% (2.88) & 280 (0) \\
Factual & Multilayer Perceptron & 9333 & 55.94\% & 56.51\% (5.39) & +0.57\% (5.39) & 280 (0) \\
Factual & Random Forest & 9333 & 55.94\% & 60.15\% (2.14) & +4.21\% (2.14) & 280 (0) \\
Factual & Label Propagation & 9333 & 55.94\% & 51.89\% (6.01) & -4.05\% (6.01) & 280 (0) \\
Factual & nnPU & 9333 & 55.94\% & 59.23\% (1.63) & +3.29\% (1.63) & 280 (0) \\
Factual & \PUAuditPlus & 9333 & 55.94\% & \hlgreen{\textbf{80.31\% (4.76)}} & \hlgreen{\textbf{+24.37\% (4.76)}} & 292.8 (11) \\
\midrule
\multicolumn{7}{c}{\textbf{Model: Mistral}}\\
\midrule
Factual & Logistic Regression & 8260 & 51.05\% & 56.11\% (2.91) & +5.06\% (2.91) & 248 (0) \\
Factual & Multilayer Perceptron & 8260 & 51.05\% & 53.90\% (7.81) & +2.84\% (7.81) & 248 (0) \\
Factual & Random Forest & 8260 & 51.05\% & 59.96\% (2.97) & +8.91\% (2.97) & 248 (0) \\
Factual & Label Propagation & 8260 & 51.05\% & 50.87\% (2.60) & -0.19\% (2.60) & 248 (0) \\
Factual & nnPU & 8260 & 51.05\% & 56.32\% (3.73) & +5.27\% (3.73) & 248 (0) \\
Factual & \PUAuditPlus & 8260 & 51.05\% & \hlgreen{\textbf{72.37\% (6.49)}} & \hlgreen{\textbf{+21.31\% (6.49)}} & 259 (7) \\
\bottomrule
\end{tabular}}
\label{tab:real_data_verified_3_factual}
\end{table*}

\begin{table*}[t]
\centering
\caption{Real-data results with 20\% verified data for baseline methods (Coding and MT-Bench). Values follow the same reporting and highlighting conventions as Table~\ref{tab:real_data_verified_3_coding_mtbench}.}
\resizebox{\textwidth}{!}{%
\begin{tabular}{cc|ccccc}
\toprule
\midrule
\multicolumn{7}{c}{\textbf{Real-Data Results with 20\% Verified Baseline Data (Coding and MT-Bench)}}\\
\midrule
\midrule
Question Type & Method & Data & Orig. Acc. & Adj. Acc. & Gain & Human Verif. Req. \\
\midrule
\multicolumn{7}{c}{\textbf{Model: GPT-5.4}}\\
\midrule
Coding & Logistic Regression & 2949 & 63.92\% & 59.54\% (3.56) & -4.38\% (3.56) & 590 (0) \\
Coding & Multilayer Perceptron & 2949 & 63.92\% & 60.83\% (2.80) & -3.09\% (2.80) & 590 (0) \\
Coding & Random Forest & 2949 & 63.92\% & 63.23\% (1.87) & -0.69\% (1.87) & 590 (0) \\
Coding & Label Propagation & 2949 & 63.92\% & 60.50\% (9.92) & -3.42\% (9.92) & 590 (0) \\
Coding & nnPU & 2949 & 63.92\% & 63.86\% (0.10) & -0.06\% (0.10) & 590 (0) \\
Coding & \PUAuditPlus & 2949 & 63.92\% & \hlgreen{\textbf{67.20\% (2.00)}} & \hlgreen{\textbf{+3.28\% (2.00)}} & 100.6 (5) \\
\midrule
\multicolumn{7}{c}{\textbf{Model: GPT-5.4-mini}}\\
\midrule
Coding & Logistic Regression & 2971 & 63.24\% & 59.20\% (0.84) & -4.04\% (0.84) & 594 (0) \\
Coding & Multilayer Perceptron & 2971 & 63.24\% & 61.40\% (4.29) & -1.85\% (4.29) & 594 (0) \\
Coding & Random Forest & 2971 & 63.24\% & 63.27\% (1.22) & +0.03\% (1.22) & 594 (0) \\
Coding & Label Propagation & 2971 & 63.24\% & 54.38\% (11.11) & -8.87\% (11.11) & 594 (0) \\
Coding & nnPU & 2971 & 63.24\% & 63.43\% (0.73) & +0.19\% (0.73) & 594 (0) \\
Coding & \PUAuditPlus & 2971 & 63.24\% & \hlgreen{\textbf{66.27\% (1.55)}} & \hlgreen{\textbf{+3.03\% (1.55)}} & 102.2 (9) \\
\midrule
MT-Bench & Logistic Regression & 2565 & 77.82\% & 75.39\% (3.61) & -2.43\% (3.61) & 513 (0) \\
MT-Bench & Multilayer Perceptron & 2565 & 77.82\% & 77.66\% (0.54) & -0.16\% (0.54) & 513 (0) \\
MT-Bench & Random Forest & 2565 & 77.82\% & 77.28\% (1.80) & -0.54\% (1.80) & 513 (0) \\
MT-Bench & Label Propagation & 2565 & 77.82\% & 76.74\% (3.61) & -1.07\% (3.61) & 513 (0) \\
MT-Bench & nnPU & 2565 & 77.82\% & 77.83\% (0.00) & +0.01\% (0.00) & 513 (0) \\
MT-Bench & \PUAuditPlus & 2575 & 77.82\% & \hlgreen{\textbf{81.01\% (0.94)}} & \hlgreen{\textbf{+3.19\% (0.94)}} & 87.4 (7) \\
\midrule
\multicolumn{7}{c}{\textbf{Model: Gemini}}\\
\midrule
Coding & Logistic Regression & 2820 & 63.55\% & 59.72\% (2.57) & -3.83\% (2.57) & 564 (0) \\
Coding & Multilayer Perceptron & 2820 & 63.55\% & 61.76\% (6.25) & -1.79\% (6.25) & 564 (0) \\
Coding & Random Forest & 2820 & 63.55\% & 63.39\% (1.68) & -0.16\% (1.68) & 564 (0) \\
Coding & Label Propagation & 2820 & 63.55\% & 58.65\% (6.12) & -4.89\% (6.12) & 564 (0) \\
Coding & nnPU & 2820 & 63.55\% & 63.51\% (0.12) & -0.04\% (0.12) & 564 (0) \\
Coding & \PUAuditPlus & 2820 & 63.55\% & \hlgreen{\textbf{67.17\% (2.84)}} & \hlgreen{\textbf{+3.62\% (2.84)}} & 96.6 (8) \\
\midrule
\multicolumn{7}{c}{\textbf{Model: Qwen}}\\
\midrule
Coding & Logistic Regression & 8884 & 53.73\% & 58.47\% (1.59) & +4.75\% (1.59) & 1777 (0) \\
Coding & Multilayer Perceptron & 8884 & 53.73\% & 57.44\% (1.83) & +3.72\% (1.83) & 1777 (0) \\
Coding & Random Forest & 8884 & 53.73\% & 59.20\% (0.96) & +5.47\% (0.96) & 1777 (0) \\
Coding & Label Propagation & 8884 & 53.73\% & 53.77\% (2.11) & +0.04\% (2.11) & 1777 (0) \\
Coding & nnPU & 8884 & 53.73\% & 58.54\% (0.48) & +4.81\% (0.48) & 1777 (0) \\
Coding & \PUAuditPlus & 8884 & 53.73\% & \hlgreen{\textbf{77.74\% (9.78)}} & \hlgreen{\textbf{+24.02\% (9.78)}} & 278.8 (14) \\
\midrule
\multicolumn{7}{c}{\textbf{Model: Mistral}}\\
\midrule
Coding & Logistic Regression & 8240 & 50.29\% & 57.52\% (1.18) & +7.22\% (1.18) & 1648 (0) \\
Coding & Multilayer Perceptron & 8240 & 50.29\% & 56.82\% (2.64) & +6.53\% (2.64) & 1648 (0) \\
Coding & Random Forest & 8240 & 50.29\% & 58.88\% (1.27) & +8.59\% (1.27) & 1648 (0) \\
Coding & Label Propagation & 8240 & 50.29\% & 53.83\% (1.94) & +3.53\% (1.94) & 1648 (0) \\
Coding & nnPU & 8240 & 50.29\% & 57.97\% (0.59) & +7.68\% (0.59) & 1648 (0) \\
Coding & \PUAuditPlus & 8240 & 50.29\% & \hlgreen{\textbf{72.91\% (0.61)}} & \hlgreen{\textbf{+22.62\% (0.61)}} & 261.8 (10) \\
\bottomrule
\end{tabular}}
\label{tab:real_data_verified_20_coding_mtbench}
\end{table*}

\begin{table*}[t]
\centering
\caption{Real-data results with 20\% verified data for baseline methods (Math). Values follow the same reporting and highlighting conventions as Table~\ref{tab:real_data_verified_3_coding_mtbench}.}
\resizebox{\textwidth}{!}{%
\begin{tabular}{cc|ccccc}
\toprule
\midrule
\multicolumn{7}{c}{\textbf{Real-Data Results with 20\% Verified Baseline Data (Math)}}\\
\midrule
\midrule
Question Type & Method & Data & Orig. Acc. & Adj. Acc. & Gain & Human Verif. Req. \\
\midrule
\multicolumn{7}{c}{\textbf{Model: GPT-5.4}}\\
\midrule
Math & Logistic Regression & 2817 & 67.20\% & 62.92\% (3.77) & -4.28\% (3.77) & 564 (0) \\
Math & Multilayer Perceptron & 2817 & 67.20\% & 65.45\% (2.75) & -1.75\% (2.75) & 564 (0) \\
Math & Random Forest & 2817 & 67.20\% & 67.98\% (1.38) & +0.78\% (1.38) & 564 (0) \\
Math & Label Propagation & 2817 & 67.20\% & 65.28\% (2.66) & -1.92\% (2.66) & 564 (0) \\
Math & nnPU & 2817 & 67.20\% & 67.32\% (0.19) & +0.12\% (0.19) & 564 (0) \\
Math & \PUAuditPlus & 2817 & 67.20\% & \hlgreen{\textbf{70.05\% (1.17)}} & \hlgreen{\textbf{+2.85\% (1.17)}} & 95.8 (5) \\
\midrule
\multicolumn{7}{c}{\textbf{Model: GPT-5.4-mini}}\\
\midrule
Math & Logistic Regression & 2911 & 62.35\% & 62.18\% (2.23) & -0.17\% (2.23) & 582 (0) \\
Math & Multilayer Perceptron & 2911 & 62.35\% & 63.31\% (1.50) & +0.97\% (1.50) & 582 (0) \\
Math & Random Forest & 2911 & 62.35\% & 65.47\% (2.49) & +3.12\% (2.49) & 582 (0) \\
Math & Label Propagation & 2911 & 62.35\% & 62.46\% (1.80) & +0.11\% (1.80) & 582 (0) \\
Math & nnPU & 2911 & 62.35\% & 63.31\% (0.77) & +0.96\% (0.77) & 582 (0) \\
Math & \PUAuditPlus & 2911 & 62.35\% & \hlgreen{\textbf{65.62\% (1.79)}} & \hlgreen{\textbf{+3.27\% (1.79)}} & 99 (5) \\
\midrule
\multicolumn{7}{c}{\textbf{Model: Gemini}}\\
\midrule
Math & Logistic Regression & 2770 & 66.35\% & 63.61\% (2.44) & -2.74\% (2.44) & 554 (0) \\
Math & Multilayer Perceptron & 2770 & 66.35\% & 64.50\% (2.21) & -1.85\% (2.21) & 554 (0) \\
Math & Random Forest & 2770 & 66.35\% & 67.45\% (1.44) & +1.09\% (1.44) & 554 (0) \\
Math & Label Propagation & 2770 & 66.35\% & 65.05\% (3.70) & -1.31\% (3.70) & 554 (0) \\
Math & nnPU & 2770 & 66.35\% & 66.31\% (0.13) & -0.04\% (0.13) & 554 (0) \\
Math & \PUAuditPlus & 2770 & 66.35\% & \hlgreen{\textbf{68.79\% (1.52)}} & \hlgreen{\textbf{+2.43\% (1.52)}} & 96.2 (4) \\
\midrule
\multicolumn{7}{c}{\textbf{Model: Qwen}}\\
\midrule
Math & Logistic Regression & 9145 & 55.10\% & 61.13\% (1.22) & +6.03\% (1.22) & 1829 (0) \\
Math & Multilayer Perceptron & 9145 & 55.10\% & 59.60\% (3.35) & +4.50\% (3.35) & 1829 (0) \\
Math & Random Forest & 9145 & 55.10\% & 62.85\% (1.56) & +7.75\% (1.56) & 1829 (0) \\
Math & Label Propagation & 9145 & 55.10\% & 55.35\% (1.18) & +0.25\% (1.18) & 1829 (0) \\
Math & nnPU & 9145 & 55.10\% & 60.27\% (1.58) & +5.17\% (1.58) & 1829 (0) \\
Math & \PUAuditPlus & 9145 & 55.10\% & \hlgreen{\textbf{79.27\% (5.47)}} & \hlgreen{\textbf{+24.17\% (5.47)}} & 291.2 (10) \\
\midrule
\multicolumn{7}{c}{\textbf{Model: Mistral}}\\
\midrule
Math & Logistic Regression & 8361 & 50.46\% & 61.63\% (0.76) & +11.17\% (0.76) & 1672 (0) \\
Math & Multilayer Perceptron & 8361 & 50.46\% & 60.61\% (1.97) & +10.15\% (1.97) & 1672 (0) \\
Math & Random Forest & 8361 & 50.46\% & 62.63\% (1.38) & +12.17\% (1.38) & 1672 (0) \\
Math & Label Propagation & 8361 & 50.46\% & 55.64\% (0.78) & +5.18\% (0.78) & 1672 (0) \\
Math & nnPU & 8361 & 50.46\% & 60.68\% (0.62) & +10.22\% (0.62) & 1672 (0) \\
Math & \PUAuditPlus & 8361 & 50.46\% & \hlgreen{\textbf{71.75\% (6.54)}} & \hlgreen{\textbf{+21.29\% (6.54)}} & 266 (23) \\
\bottomrule
\end{tabular}}
\label{tab:real_data_verified_20_math}
\end{table*}

\begin{table*}[t]
\centering
\caption{Real-data results with 20\% verified data for baseline methods (Factual). Values follow the same reporting and highlighting conventions as Table~\ref{tab:real_data_verified_3_coding_mtbench}.}
\resizebox{\textwidth}{!}{%
\begin{tabular}{cc|ccccc}
\toprule
\midrule
\multicolumn{7}{c}{\textbf{Real-Data Results with 20\% Verified Baseline Data (Factual)}}\\
\midrule
\midrule
Question Type & Method & Data & Orig. Acc. & Adj. Acc. & Gain & Human Verif. Req. \\
\midrule
\multicolumn{7}{c}{\textbf{Model: GPT-5.4}}\\
\midrule
Factual & Logistic Regression & 2842 & 64.39\% & 62.19\% (2.02) & -2.20\% (2.02) & 568 (0) \\
Factual & Multilayer Perceptron & 2842 & 64.39\% & 64.12\% (2.07) & -0.27\% (2.07) & 568 (0) \\
Factual & Random Forest & 2842 & 64.39\% & 65.82\% (1.72) & +1.43\% (1.72) & 568 (0) \\
Factual & Label Propagation & 2842 & 64.39\% & 61.84\% (4.31) & -2.55\% (4.31) & 568 (0) \\
Factual & nnPU & 2842 & 64.39\% & 65.03\% (0.75) & +0.64\% (0.75) & 568 (0) \\
Factual & \PUAuditPlus & 2842 & 64.39\% & \hlgreen{\textbf{67.66\% (2.22)}} & \hlgreen{\textbf{+3.27\% (2.22)}} & 95.6 (9) \\
\midrule
\multicolumn{7}{c}{\textbf{Model: GPT-5.4-mini}}\\
\midrule
Factual & Logistic Regression & 2920 & 62.47\% & 61.38\% (3.51) & -1.09\% (3.51) & 584 (0) \\
Factual & Multilayer Perceptron & 2920 & 62.47\% & 61.44\% (3.98) & -1.03\% (3.98) & 584 (0) \\
Factual & Random Forest & 2920 & 62.47\% & 64.77\% (1.97) & +2.30\% (1.97) & 584 (0) \\
Factual & Label Propagation & 2920 & 62.47\% & 59.41\% (3.17) & -3.06\% (3.17) & 584 (0) \\
Factual & nnPU & 2920 & 62.47\% & 63.16\% (0.57) & +0.69\% (0.57) & 584 (0) \\
Factual & \PUAuditPlus & 2920 & 62.47\% & \hlgreen{\textbf{66.07\% (2.09)}} & \hlgreen{\textbf{+3.60\% (2.09)}} & 99 (17) \\
\midrule
\multicolumn{7}{c}{\textbf{Model: Gemini}}\\
\midrule
Factual & Logistic Regression & 2768 & 66.37\% & 61.16\% (2.71) & -5.21\% (2.71) & 553 (0) \\
Factual & Multilayer Perceptron & 2768 & 66.37\% & 64.81\% (6.41) & -1.55\% (6.41) & 553 (0) \\
Factual & Random Forest & 2768 & 66.37\% & 66.58\% (0.99) & +0.22\% (0.99) & 553 (0) \\
Factual & Label Propagation & 2768 & 66.37\% & 61.95\% (4.38) & -4.42\% (4.38) & 553 (0) \\
Factual & nnPU & 2768 & 66.37\% & 66.33\% (0.08) & -0.04\% (0.08) & 553 (0) \\
Factual & \PUAuditPlus & 2768 & 66.37\% & \hlgreen{\textbf{69.21\% (1.73)}} & \hlgreen{\textbf{+2.85\% (1.73)}} & 94 (12) \\
\midrule
\multicolumn{7}{c}{\textbf{Model: Qwen}}\\
\midrule
Factual & Logistic Regression & 9333 & 55.94\% & 61.22\% (0.87) & +5.27\% (0.87) & 1866 (0) \\
Factual & Multilayer Perceptron & 9333 & 55.94\% & 60.10\% (2.68) & +4.16\% (2.68) & 1866 (0) \\
Factual & Random Forest & 9333 & 55.94\% & 63.01\% (1.18) & +7.07\% (1.18) & 1866 (0) \\
Factual & Label Propagation & 9333 & 55.94\% & 55.41\% (3.48) & -0.53\% (3.48) & 1866 (0) \\
Factual & nnPU & 9333 & 55.94\% & 60.63\% (1.42) & +4.69\% (1.42) & 1866 (0) \\
Factual & \PUAuditPlus & 9333 & 55.94\% & \hlgreen{\textbf{80.31\% (4.76)}} & \hlgreen{\textbf{+24.37\% (4.76)}} & 292.8 (11) \\
\midrule
\multicolumn{7}{c}{\textbf{Model: Mistral}}\\
\midrule
Factual & Logistic Regression & 8260 & 51.05\% & 61.67\% (0.86) & +10.62\% (0.86) & 1652 (0) \\
Factual & Multilayer Perceptron & 8260 & 51.05\% & 58.73\% (3.65) & +7.68\% (3.65) & 1652 (0) \\
Factual & Random Forest & 8260 & 51.05\% & 62.78\% (1.07) & +11.73\% (1.07) & 1652 (0) \\
Factual & Label Propagation & 8260 & 51.05\% & 54.42\% (1.88) & +3.37\% (1.88) & 1652 (0) \\
Factual & nnPU & 8260 & 51.05\% & 60.66\% (0.60) & +9.61\% (0.60) & 1652 (0) \\
Factual & \PUAuditPlus & 8260 & 51.05\% & \hlgreen{\textbf{72.37\% (6.49)}} & \hlgreen{\textbf{+21.31\% (6.49)}} & 259 (7) \\
\bottomrule
\end{tabular}}
\label{tab:real_data_verified_20_factual}
\end{table*}

\begin{table*}[t]
\centering
\caption{Real-data results with 80\% verified data for baseline methods (Coding and MT-Bench). Values follow the same reporting and highlighting conventions as Table~\ref{tab:real_data_verified_3_coding_mtbench}.}
\resizebox{\textwidth}{!}{%
\begin{tabular}{cc|ccccc}
\toprule
\midrule
\multicolumn{7}{c}{\textbf{Real-Data Results with 80\% Verified Baseline Data (Coding and MT-Bench)}}\\
\midrule
\midrule
Question Type & Method & Data & Orig. Acc. & Adj. Acc. & Gain & Human Verif. Req. \\
\midrule
\multicolumn{7}{c}{\textbf{Model: GPT-5.4}}\\
\midrule
Coding & Logistic Regression & 2949 & 63.92\% & 62.17\% (1.86) & -1.75\% (1.86) & 2359 (0) \\
Coding & Multilayer Perceptron & 2949 & 63.92\% & 61.93\% (3.73) & -1.99\% (3.73) & 2359 (0) \\
Coding & Random Forest & 2949 & 63.92\% & 64.10\% (2.37) & +0.18\% (2.37) & 2359 (0) \\
Coding & Label Propagation & 2949 & 63.92\% & 60.07\% (3.22) & -3.85\% (3.22) & 2359 (0) \\
Coding & nnPU & 2949 & 63.92\% & 63.97\% (0.26) & +0.05\% (0.26) & 2359 (0) \\
Coding & \PUAuditPlus & 2949 & 63.92\% & \hlgreen{\textbf{67.20\% (2.00)}} & \hlgreen{\textbf{+3.28\% (2.00)}} & 100.6 (5) \\
\midrule
\multicolumn{7}{c}{\textbf{Model: GPT-5.4-mini}}\\
\midrule
Coding & Logistic Regression & 2971 & 63.24\% & 62.26\% (4.88) & -0.99\% (4.88) & 2377 (0) \\
Coding & Multilayer Perceptron & 2971 & 63.24\% & 61.48\% (3.70) & -1.76\% (3.70) & 2377 (0) \\
Coding & Random Forest & 2971 & 63.24\% & 63.20\% (3.03) & -0.05\% (3.03) & 2377 (0) \\
Coding & Label Propagation & 2971 & 63.24\% & 58.18\% (3.54) & -5.06\% (3.54) & 2377 (0) \\
Coding & nnPU & 2971 & 63.24\% & 63.77\% (0.32) & +0.53\% (0.32) & 2377 (0) \\
Coding & \PUAuditPlus & 2971 & 63.24\% & \hlgreen{\textbf{66.27\% (1.55)}} & \hlgreen{\textbf{+3.03\% (1.55)}} & 102.2 (9) \\
\midrule
MT-Bench & Logistic Regression & 2565 & 77.82\% & 78.67\% (3.90) & +0.86\% (3.90) & 2052 (0) \\
MT-Bench & Multilayer Perceptron & 2565 & 77.82\% & 78.91\% (1.36) & +1.09\% (1.36) & 2052 (0) \\
MT-Bench & Random Forest & 2565 & 77.82\% & 77.66\% (6.04) & -0.16\% (6.04) & 2052 (0) \\
MT-Bench & Label Propagation & 2565 & 77.82\% & 78.75\% (2.92) & +0.94\% (2.92) & 2052 (0) \\
MT-Bench & nnPU & 2565 & 77.82\% & 77.78\% (0.00) & -0.04\% (0.00) & 2052 (0) \\
MT-Bench & \PUAuditPlus & 2575 & 77.82\% & \hlgreen{\textbf{81.01\% (0.94)}} & \hlgreen{\textbf{+3.19\% (0.94)}} & 87.4 (7) \\
\midrule
\multicolumn{7}{c}{\textbf{Model: Gemini}}\\
\midrule
Coding & Logistic Regression & 2820 & 63.55\% & 61.67\% (2.48) & -1.88\% (2.48) & 2256 (0) \\
Coding & Multilayer Perceptron & 2820 & 63.55\% & 62.52\% (4.26) & -1.03\% (4.26) & 2256 (0) \\
Coding & Random Forest & 2820 & 63.55\% & 63.76\% (3.19) & +0.21\% (3.19) & 2256 (0) \\
Coding & Label Propagation & 2820 & 63.55\% & 58.72\% (4.79) & -4.82\% (4.79) & 2256 (0) \\
Coding & nnPU & 2820 & 63.55\% & 63.33\% (0.23) & -0.22\% (0.23) & 2256 (0) \\
Coding & \PUAuditPlus & 2820 & 63.55\% & \hlgreen{\textbf{67.17\% (2.84)}} & \hlgreen{\textbf{+3.62\% (2.84)}} & 96.6 (8) \\
\midrule
\multicolumn{7}{c}{\textbf{Model: Qwen}}\\
\midrule
Coding & Logistic Regression & 8884 & 53.73\% & 60.34\% (1.80) & +6.61\% (1.80) & 7107 (0) \\
Coding & Multilayer Perceptron & 8884 & 53.73\% & 58.74\% (4.05) & +5.01\% (4.05) & 7107 (0) \\
Coding & Random Forest & 8884 & 53.73\% & 60.30\% (1.63) & +6.58\% (1.63) & 7107 (0) \\
Coding & Label Propagation & 8884 & 53.73\% & 54.58\% (1.91) & +0.85\% (1.91) & 7107 (0) \\
Coding & nnPU & 8884 & 53.73\% & 58.91\% (0.50) & +5.18\% (0.50) & 7107 (0) \\
Coding & \PUAuditPlus & 8884 & 53.73\% & \hlgreen{\textbf{77.74\% (9.78)}} & \hlgreen{\textbf{+24.02\% (9.78)}} & 278.8 (14) \\
\midrule
\multicolumn{7}{c}{\textbf{Model: Mistral}}\\
\midrule
Coding & Logistic Regression & 8240 & 50.29\% & 59.14\% (1.94) & +8.85\% (1.94) & 6592 (0) \\
Coding & Multilayer Perceptron & 8240 & 50.29\% & 58.40\% (2.06) & +8.11\% (2.06) & 6592 (0) \\
Coding & Random Forest & 8240 & 50.29\% & 60.15\% (1.40) & +9.85\% (1.40) & 6592 (0) \\
Coding & Label Propagation & 8240 & 50.29\% & 56.36\% (3.16) & +6.07\% (3.16) & 6592 (0) \\
Coding & nnPU & 8240 & 50.29\% & 59.11\% (0.99) & +8.82\% (0.99) & 6592 (0) \\
Coding & \PUAuditPlus & 8240 & 50.29\% & \hlgreen{\textbf{72.91\% (0.61)}} & \hlgreen{\textbf{+22.62\% (0.61)}} & 261.8 (10) \\
\bottomrule
\end{tabular}}
\label{tab:real_data_verified_80_coding_mtbench}
\end{table*}

\begin{table*}[t]
\centering
\caption{Real-data results with 80\% verified data for baseline methods (Math). Values follow the same reporting and highlighting conventions as Table~\ref{tab:real_data_verified_3_coding_mtbench}.}
\resizebox{\textwidth}{!}{%
\begin{tabular}{cc|ccccc}
\toprule
\midrule
\multicolumn{7}{c}{\textbf{Real-Data Results with 80\% Verified Baseline Data (Math)}}\\
\midrule
\midrule
Question Type & Method & Data & Orig. Acc. & Adj. Acc. & Gain & Human Verif. Req. \\
\midrule
\multicolumn{7}{c}{\textbf{Model: GPT-5.4}}\\
\midrule
Math & Logistic Regression & 2817 & 67.20\% & 67.16\% (1.06) & -0.04\% (1.06) & 2253 (0) \\
Math & Multilayer Perceptron & 2817 & 67.20\% & 66.06\% (3.72) & -1.14\% (3.72) & 2253 (0) \\
Math & Random Forest & 2817 & 67.20\% & 67.84\% (1.95) & +0.64\% (1.95) & 2253 (0) \\
Math & Label Propagation & 2817 & 67.20\% & 65.64\% (1.42) & -1.56\% (1.42) & 2253 (0) \\
Math & nnPU & 2817 & 67.20\% & 67.45\% (0.32) & +0.25\% (0.32) & 2253 (0) \\
Math & \PUAuditPlus & 2817 & 67.20\% & \hlgreen{\textbf{70.05\% (1.17)}} & \hlgreen{\textbf{+2.85\% (1.17)}} & 95.8 (5) \\
\midrule
\multicolumn{7}{c}{\textbf{Model: GPT-5.4-mini}}\\
\midrule
Math & Logistic Regression & 2911 & 62.35\% & 66.01\% (2.23) & +3.66\% (2.23) & 2329 (0) \\
Math & Multilayer Perceptron & 2911 & 62.35\% & 65.15\% (3.61) & +2.80\% (3.61) & 2329 (0) \\
Math & Random Forest & 2911 & 62.35\% & \hlgreen{66.84\% (3.78)} & \hlgreen{+4.49\% (3.78)} & 2329 (0) \\
Math & Label Propagation & 2911 & 62.35\% & 61.79\% (3.95) & -0.56\% (3.95) & 2329 (0) \\
Math & nnPU & 2911 & 62.35\% & 63.85\% (1.46) & +1.50\% (1.46) & 2329 (0) \\
Math & \PUAuditPlus & 2911 & 62.35\% & \textbf{65.62\% (1.79)} & \textbf{+3.27\% (1.79)} & 99 (5) \\
\midrule
\multicolumn{7}{c}{\textbf{Model: Gemini}}\\
\midrule
Math & Logistic Regression & 2770 & 66.35\% & 67.58\% (3.07) & +1.23\% (3.07) & 2216 (0) \\
Math & Multilayer Perceptron & 2770 & 66.35\% & 67.73\% (2.89) & +1.37\% (2.89) & 2216 (0) \\
Math & Random Forest & 2770 & 66.35\% & 68.19\% (1.81) & +1.84\% (1.81) & 2216 (0) \\
Math & Label Propagation & 2770 & 66.35\% & 66.03\% (1.62) & -0.32\% (1.62) & 2216 (0) \\
Math & nnPU & 2770 & 66.35\% & 67.15\% (0.61) & +0.80\% (0.61) & 2216 (0) \\
Math & \PUAuditPlus & 2770 & 66.35\% & \hlgreen{\textbf{68.79\% (1.52)}} & \hlgreen{\textbf{+2.43\% (1.52)}} & 96.2 (4) \\
\midrule
\multicolumn{7}{c}{\textbf{Model: Qwen}}\\
\midrule
Math & Logistic Regression & 9145 & 55.10\% & 61.90\% (2.08) & +6.80\% (2.08) & 7316 (0) \\
Math & Multilayer Perceptron & 9145 & 55.10\% & 62.07\% (2.35) & +6.97\% (2.35) & 7316 (0) \\
Math & Random Forest & 9145 & 55.10\% & 63.59\% (1.97) & +8.49\% (1.97) & 7316 (0) \\
Math & Label Propagation & 9145 & 55.10\% & 57.89\% (2.95) & +2.79\% (2.95) & 7316 (0) \\
Math & nnPU & 9145 & 55.10\% & 61.12\% (1.67) & +6.02\% (1.67) & 7316 (0) \\
Math & \PUAuditPlus & 9145 & 55.10\% & \hlgreen{\textbf{79.27\% (5.47)}} & \hlgreen{\textbf{+24.17\% (5.47)}} & 291.2 (10) \\
\midrule
\multicolumn{7}{c}{\textbf{Model: Mistral}}\\
\midrule
Math & Logistic Regression & 8361 & 50.46\% & 62.91\% (2.39) & +12.45\% (2.39) & 6689 (0) \\
Math & Multilayer Perceptron & 8361 & 50.46\% & 61.77\% (4.31) & +11.31\% (4.31) & 6689 (0) \\
Math & Random Forest & 8361 & 50.46\% & 64.10\% (2.21) & +13.64\% (2.21) & 6689 (0) \\
Math & Label Propagation & 8361 & 50.46\% & 57.97\% (2.69) & +7.51\% (2.69) & 6689 (0) \\
Math & nnPU & 8361 & 50.46\% & 62.26\% (1.60) & +11.80\% (1.60) & 6689 (0) \\
Math & \PUAuditPlus & 8361 & 50.46\% & \hlgreen{\textbf{71.75\% (6.54)}} & \hlgreen{\textbf{+21.29\% (6.54)}} & 266 (23) \\
\bottomrule
\end{tabular}}
\label{tab:real_data_verified_80_math}
\end{table*}

\begin{table*}[t]
\centering
\caption{Real-data results with 80\% verified data for baseline methods (Factual). Values follow the same reporting and highlighting conventions as Table~\ref{tab:real_data_verified_3_coding_mtbench}.}
\resizebox{\textwidth}{!}{%
\begin{tabular}{cc|ccccc}
\toprule
\midrule
\multicolumn{7}{c}{\textbf{Real-Data Results with 80\% Verified Baseline Data (Factual)}}\\
\midrule
\midrule
Question Type & Method & Data & Orig. Acc. & Adj. Acc. & Gain & Human Verif. Req. \\
\midrule
\multicolumn{7}{c}{\textbf{Model: GPT-5.4}}\\
\midrule
Factual & Logistic Regression & 2842 & 64.39\% & 65.53\% (2.11) & +1.14\% (2.11) & 2274 (0) \\
Factual & Multilayer Perceptron & 2842 & 64.39\% & 65.07\% (5.63) & +0.68\% (5.63) & 2274 (0) \\
Factual & Random Forest & 2842 & 64.39\% & 65.67\% (2.99) & +1.28\% (2.99) & 2274 (0) \\
Factual & Label Propagation & 2842 & 64.39\% & 62.04\% (3.35) & -2.35\% (3.35) & 2274 (0) \\
Factual & nnPU & 2842 & 64.39\% & 65.11\% (0.79) & +0.72\% (0.79) & 2274 (0) \\
Factual & \PUAuditPlus & 2842 & 64.39\% & \hlgreen{\textbf{67.66\% (2.22)}} & \hlgreen{\textbf{+3.27\% (2.22)}} & 95.6 (9) \\
\midrule
\multicolumn{7}{c}{\textbf{Model: GPT-5.4-mini}}\\
\midrule
Factual & Logistic Regression & 2920 & 62.47\% & 65.17\% (5.82) & +2.71\% (5.82) & 2336 (0) \\
Factual & Multilayer Perceptron & 2920 & 62.47\% & 64.55\% (3.77) & +2.09\% (3.77) & 2336 (0) \\
Factual & Random Forest & 2920 & 62.47\% & \hlgreen{66.16\% (2.40)} & \hlgreen{+3.70\% (2.40)} & 2336 (0) \\
Factual & Label Propagation & 2920 & 62.47\% & 61.37\% (2.91) & -1.10\% (2.91) & 2336 (0) \\
Factual & nnPU & 2920 & 62.47\% & 64.42\% (1.24) & +1.95\% (1.24) & 2336 (0) \\
Factual & \PUAuditPlus & 2920 & 62.47\% & \textbf{66.07\% (2.09)} & \textbf{+3.60\% (2.09)} & 99 (17) \\
\midrule
\multicolumn{7}{c}{\textbf{Model: Gemini}}\\
\midrule
Factual & Logistic Regression & 2768 & 66.37\% & 65.28\% (5.42) & -1.09\% (5.42) & 2215 (0) \\
Factual & Multilayer Perceptron & 2768 & 66.37\% & 65.03\% (4.16) & -1.34\% (4.16) & 2215 (0) \\
Factual & Random Forest & 2768 & 66.37\% & 66.26\% (2.53) & -0.11\% (2.53) & 2215 (0) \\
Factual & Label Propagation & 2768 & 66.37\% & 63.94\% (1.27) & -2.42\% (1.27) & 2215 (0) \\
Factual & nnPU & 2768 & 66.37\% & 66.37\% (0.00) & -0.00\% (0.00) & 2215 (0) \\
Factual & \PUAuditPlus & 2768 & 66.37\% & \hlgreen{\textbf{69.21\% (1.73)}} & \hlgreen{\textbf{+2.85\% (1.73)}} & 94 (12) \\
\midrule
\multicolumn{7}{c}{\textbf{Model: Qwen}}\\
\midrule
Factual & Logistic Regression & 9333 & 55.94\% & 64.18\% (1.77) & +8.24\% (1.77) & 7467 (0) \\
Factual & Multilayer Perceptron & 9333 & 55.94\% & 63.12\% (3.11) & +7.18\% (3.11) & 7467 (0) \\
Factual & Random Forest & 9333 & 55.94\% & 64.24\% (2.47) & +8.30\% (2.47) & 7467 (0) \\
Factual & Label Propagation & 9333 & 55.94\% & 59.37\% (2.04) & +3.43\% (2.04) & 7467 (0) \\
Factual & nnPU & 9333 & 55.94\% & 62.60\% (0.94) & +6.66\% (0.94) & 7467 (0) \\
Factual & \PUAuditPlus & 9333 & 55.94\% & \hlgreen{\textbf{80.31\% (4.76)}} & \hlgreen{\textbf{+24.37\% (4.76)}} & 292.8 (11) \\
\midrule
\multicolumn{7}{c}{\textbf{Model: Mistral}}\\
\midrule
Factual & Logistic Regression & 8260 & 51.05\% & 62.60\% (3.93) & +11.55\% (3.93) & 6608 (0) \\
Factual & Multilayer Perceptron & 8260 & 51.05\% & 61.74\% (4.00) & +10.69\% (4.00) & 6608 (0) \\
Factual & Random Forest & 8260 & 51.05\% & 62.65\% (2.54) & +11.60\% (2.54) & 6608 (0) \\
Factual & Label Propagation & 8260 & 51.05\% & 57.99\% (2.24) & +6.94\% (2.24) & 6608 (0) \\
Factual & nnPU & 8260 & 51.05\% & 60.65\% (0.84) & +9.60\% (0.84) & 6608 (0) \\
Factual & \PUAuditPlus & 8260 & 51.05\% & \hlgreen{\textbf{72.37\% (6.49)}} & \hlgreen{\textbf{+21.31\% (6.49)}} & 259 (7) \\
\bottomrule
\end{tabular}}
\label{tab:real_data_verified_80_factual}
\end{table*}

\clearpage
\subsection{Runtime and memory usage}
\label{app:runtime_memory}

\begin{table*}[t]
\centering
\caption{Runtime and memory usage for the real-data experiments. Time and memory are reported as mean values, with ranges in parentheses.}
\resizebox{\textwidth}{!}{%
\begin{tabular}{cc|ccc}
\toprule
\midrule
\multicolumn{5}{c}{\textbf{Runtime and Memory Usage for Real-Data Experiments}}\\
\midrule
\midrule
\textbf{Question Type}
& \textbf{Model}
& \textbf{\# Data}
& \textbf{Time (min)}
& \textbf{Memory (GB)} \\
\midrule
Coding & GPT-5.4       & 2949 & 2.38 (1.75--4.12) & 1.63 (1.62--1.65) \\
Coding & GPT-5.4-mini  & 2971 & 1.89 (1.77--2.03) & 1.63 (1.62--1.64) \\
Coding & Gemini        & 2820 & 2.28 (1.73--4.33) & 1.65 (1.62--1.74) \\
Coding & Qwen          & 8884 & 72.99 (61.55--81.50) & 3.50 (3.24--3.58) \\
Coding & Mistral       & 8240 & 57.16 (48.38--72.73) & 3.07 (2.89--3.29) \\
\midrule
Math & GPT-5.4       & 2817 & 1.75 (1.60--2.12) & 1.63 (1.62--1.64) \\
Math & GPT-5.4-mini  & 2911 & 1.75 (1.62--1.93) & 1.63 (1.62--1.64) \\
Math & Gemini        & 2770 & 1.69 (1.53--1.77) & 1.63 (1.62--1.64) \\
Math & Qwen          & 9145 & 83.56 (69.88--94.73) & 4.35 (4.12--4.47) \\
Math & Mistral       & 8361 & 55.92 (46.23--67.93) & 3.00 (2.87--3.38) \\
\midrule
Factual & GPT-5.4       & 2842 & 1.84 (1.68--2.10) & 1.65 (1.64--1.66) \\
Factual & GPT-5.4-mini  & 2920 & 1.81 (1.55--2.05) & 1.62 (1.61--1.63) \\
Factual & Gemini        & 2768 & 1.72 (1.60--1.80) & 1.64 (1.63--1.65) \\
Factual & Qwen          & 9333 & 96.24 (87.27--101.28) & 4.63 (4.52--4.98) \\
Factual & Mistral       & 8260 & 53.31 (43.03--65.42) & 2.95 (2.83--3.38) \\
\midrule
MT-Bench & GPT-5.4-mini & 2575 & 1.58 (1.47--1.63) & 1.65 (1.64--1.66) \\
\bottomrule
\end{tabular}}
\label{tab:real_data_runtime_memory}
\end{table*}


\end{document}